\newcommand{\reals}{\mathbb{R}}
\newcommand{\E}{\mathbb{E}}
\newcommand{\Var}{\mathrm{Var}}
\newcommand{\norm}[1]{\|#1\|}
\newcommand{\inner}[1]{\langle#1\rangle}
\newcommand{\ceil}[1]{\left\lceil#1\right\rceil}
\newcommand{\secref}[1]{Section~\ref{#1}}
\newcommand{\figref}[1]{Figure~\ref{#1}}
\renewcommand{\eqref}[1]{Equation~(\ref{#1})}
\newcommand{\lemref}[1]{Lemma~\ref{#1}}
\newcommand{\thmref}[1]{Theorem~\ref{#1}}
\newcommand{\appref}[1]{Appendix~\ref{#1}}
\newcommand{\algref}[1]{Algorithm~\ref{#1}}
\newcommand{\Gcal}{\mathcal{G}}
\newcommand{\Tcal}{\mathcal{T}}
\newcommand{\Vcal}{\mathcal{V}}
\newcommand{\Zcal}{\mathcal{Z}}
\newcommand{\minimize}{\mathop{\mathrm{minimize}{}}}
\newcommand{\argmin}{\mathop{\mathrm{arg\,min}{}}}
\newcommand{\dom}{\mathrm{dom\,}}
\title{Optimal Distributed Online Prediction Using Mini-Batches}
\author{\name Ofer Dekel \email oferd@microsoft.com \\
\name Ran Gilad-Bachrach \email rang@microsoft.com \\
\addr Microsoft Research \\
      1 Microsoft Way \\
      Redmond, WA 98052, USA
\AND
\name Ohad Shamir \email ohadsh@microsoft.com \\
\addr Microsoft Research \\
      1 Memorial Drive \\
      Cambridge, MA 02142, USA
\AND
\name Lin Xiao \email lin.xiao@microsoft.com\\
\addr Microsoft Research \\
      1 Microsoft Way \\
      Redmond, WA 98052, USA
}
\begin{document}
\maketitle

\begin{abstract}%
Online prediction methods are typically presented as serial algorithms
running on a single processor. However, in the age of web-scale
prediction problems, it is increasingly common to encounter situations
where a single processor cannot keep up with the high rate at which
inputs arrive. In this work, we present the \emph{distributed
  mini-batch} algorithm, a method of converting many serial
gradient-based online prediction algorithms into distributed
algorithms.  We prove a regret bound for this method that is
asymptotically optimal for smooth convex loss functions and stochastic
inputs. Moreover, our analysis explicitly takes into account
communication latencies between nodes in the distributed environment.
We show how our method can be used to solve the closely-related
distributed stochastic optimization problem, achieving an
asymptotically linear speed-up over multiple processors. Finally, we
demonstrate the merits of our approach on a web-scale online
prediction problem.
\end{abstract}

\begin{keywords}
distributed computing, online learning, stochastic optimization, regret bounds, convex optimization
\end{keywords}

\section{Introduction}

Many natural prediction problems can be cast as stochastic online
prediction problems. These are often discussed in the serial setting,
where the computation takes place on a single processor. However, when
the inputs arrive at a high rate and have to be processed in real
time, there may be no choice but to distribute the computation across
multiple cores or multiple cluster nodes. For example, modern search
engines process thousands of queries a second, and indeed they are
implemented as distributed algorithms that run in massive
data-centers. In this paper, we focus on such \emph{large-scale} and
\emph{high-rate} online prediction problems, where parallel and
distributed computing is critical to providing a real-time service.

First, we begin by defining the stochastic online prediction problem.
Suppose that we observe a stream of inputs $z_{1},z_{2},\ldots$, where
each~$z_{i}$ is sampled independently from a fixed unknown
distribution over a sample space $\Zcal$. Before observing each
$z_{i}$, we predict a point $w_{i}$ from a set~$W$.  After
making the prediction~$w_{i}$, we observe~$z_{i}$ and suffer the loss
$f(w_{i},z_{i})$, where $f$ is a predefined loss function.  Then we
use~$z_{i}$ to improve our prediction mechanism for the future (e.g.,
using a stochastic gradient method). The goal is to accumulate the
smallest possible loss as we process the sequence of inputs.  More
specifically, we measure the quality of our predictions using the
notion of \emph{regret}, defined as
\[
R(m) ~=~ \sum_{i=1}^m \left( f(w_i,z_i) - f(w^\star,z_i) \right) ~,
\]
where $w^\star = \argmin_{w \in W} \E_z[f(w,z)]$.  Regret
measures the difference between the cumulative loss of our predictions
and the cumulative loss of the fixed predictor $w^\star$, which is
optimal with respect to the underlying distribution.  Since regret
relies on the stochastic inputs $z_i$, it is a random variable.  For
simplicity, we focus on bounding the expected regret $\E[R(m)]$, and
later use these results to obtain high-probability bounds on the
actual regret. In this paper, we restrict our discussion to convex
prediction problems, where the loss function $f(w,z)$ is convex in~$w$ for
every $z\in\Zcal$, and $W$ is a closed convex subset of $\reals^n$.

Before continuing, we note that the stochastic online
\emph{prediction} problem is closely related, but not identical, to
the stochastic \emph{optimization} problem
\citep[see, e.g.,][]{Wets89,BirgeLouveaux97,NemirovskiJuLaSh09}.
The main difference between the two is in their goals:
in stochastic optimization, the goal is to generate a sequence
$w_1,w_2,\ldots$ that quickly converges to the minimizer of the function
$F(\cdot) = \E_z[f(\cdot,z)]$.
The motivating application is usually a static (batch) problem, and not an
online process that occurs over time. Large-scale static optimization problems
can always be solved using a serial approach, at the cost of a longer
running time. In online prediction, the goal is to generate a sequence
of predictions that accumulates a small loss along the way, as
measured by regret. The relevant motivating application here is
providing a real-time service to users, so our algorithm must keep up
with the inputs as they arrive, and we cannot choose to slow down. In
this sense, distributed computing is critical for large-scale online
prediction problems.  Despite these important differences, our
techniques and results can be readily adapted to the stochastic online
optimization setting.

We model our distributed computing system as a set of~$k$
\emph{nodes}, each of which is an independent processor, and a
\emph{network} that enables the nodes to communicate with each other.
Each node receives an incoming stream of examples from an outside
source, such as a load balancer/splitter. As in the real world, we
assume that the network has a limited bandwidth, so the nodes cannot
simply share all of their information, and that messages sent over the
network incur a non-negligible latency. However, we assume that
network operations are \emph{non-blocking}, meaning that each node can
continue processing incoming traffic while network operations complete
in the background.

How well can we perform in such a distributed environment?  At one
extreme, an ideal (but unrealistic) solution to our problem is to run
a serial algorithm on a single ``super'' processor that is~$k$ times
faster than a standard node. This solution is optimal, simply because
any distributed algorithm can be simulated on a fast-enough single
processor.  It is well-known that the optimal regret bound that can be
achieved by a gradient-based serial algorithm on an arbitrary convex
loss is $O(\sqrt{m})$
\citep[e.g.,][]{NemirovskiYu83,CesaBianchiLu06,AbernethyAgRaBa09}.  At
the other extreme, a trivial solution to our problem is to have each
node operate in isolation of the other $k\!-\!1$ nodes, running an
independent copy of a serial algorithm, without any communication over
the network.  We call this the \emph{no-communication} solution.  The
main disadvantage of this solution is that the performance guarantee,
as measured by regret, scales poorly with the network size~$k$.  More
specifically, assuming that each node processes $m/k$ inputs, the
expected regret per node is $O(\sqrt{m/k})$. Therefore, the total
regret across all $k$ nodes is $O(\sqrt{km})$ - namely, a factor of
$\sqrt{k}$ worse than the ideal solution. The first sanity-check that
any distributed online prediction algorithm must pass is that it
outperforms the na\"ive no-communication solution.

In this paper, we present the \emph{distributed mini-batch} (DMB) algorithm,
a method of converting any serial gradient-based online prediction algorithm
into a parallel or distributed algorithm.
This method has two important properties:
\begin{itemize}
\item It can use any gradient-based update rule for serial online
  prediction as a black box, and convert it into a parallel or
  distributed online prediction algorithm.
\item If the loss function~$f(w,z)$ is smooth in~$w$ (see the precise
  definition in \eqref{eqn:smoothness}), then our method attains an
  asymptotically optimal regret bound of $O(\sqrt{m})$.  Moreover, the
  coefficient of the dominant term~$\sqrt{m}$ is the same as in the
  serial bound, and \emph{independent} of~$k$ and of the network
  topology.
\end{itemize}
The idea of using mini-batches in stochastic and online learning is
not new, and has been previously explored in both the serial and
parallel settings
\citep[see, e.g.,][]{SSS07,GimpelDasSmith10}.
However, to the
best of our knowledge, our work is the first to use this idea to
obtain such strong results in a parallel and distributed learning
setting (see \secref{sec:relatedwork} for a comparison to related
work).

Our results build on the fact that the optimal regret bound for serial
stochastic gradient-based prediction algorithms can be refined if the
loss function is smooth. In particular, it can be shown that the
hidden coefficient in the $O(\sqrt{m})$ notation is proportional to
the standard deviation of the stochastic gradients evaluated at each
predictor~$w_i$ \citep{JuditskyNT11,Lan09,Xiao10}.
We make the key observation
that this coefficient can be effectively reduced by averaging a
mini-batch of stochastic gradients computed at the same predictor, and
this can be done in parallel with simple network communication.
However, the non-negligible communication latencies prevent a
straightforward parallel implementation from obtaining the optimal
serial regret bound.\footnote{For example, if the network
  communication operates over a minimum-depth spanning tree and the
  diameter of the network scales as $\log(k)$, then we can show that a
  straightforward implementation of the idea of parallel variance
  reduction leads to an $O\bigl(\sqrt{m \log(k)}\bigr)$ regret
  bound. See~\secref{sec:distlearnsync} for details.}  In order to
close the gap, we show that by letting the mini-batch size grow slowly
with~$m$, we can attain the optimal $O(\sqrt{m})$ regret bound, where
the dominant term of order~$\sqrt{m}$ is \emph{independent} of the
number of nodes~$k$ and of the latencies introduced by the network.

The paper is organized as follows. In \secref{sec:preliminaries}, we
present a template for stochastic gradient-based serial prediction
algorithms, and state refined variance-based regret bounds for smooth
loss functions.  In \secref{sec:serial}, we analyze the effect of
using mini-batches in the serial setting, and show that it does not
significantly affect the regret bounds.  In
\secref{sec:distlearnsync}, we present the DMB algorithm, and show
that it achieves an asymptotically optimal serial regret bound for
smooth loss functions.  In \secref{sec:optimization}, we show that the
DMB algorithm attains the optimal rate of convergence for stochastic
optimization, with an asymptotically linear speed-up.  In
\secref{sec:experiments}, we complement our theoretical results with
an experimental study on a realistic web-scale online prediction
problem.  While substantiating the effectiveness of our approach, our
empirical results also demonstrate some interesting properties of
mini-batching that are not reflected in our theory.  We conclude with
a comparison of our methods to previous work in
\secref{sec:relatedwork}, and a discussion of potential extensions and
future research in \secref{sec:conclusions}.
The main topics presented in this paper are summarized in
\citet{DGSX11}. \citet{DGSX11} also present robust variants of our approach, which are resilient to failures and node heterogeneity in an asynchronous distributed environment.

\section{Variance Bounds for Serial Algorithms}
\label{sec:preliminaries}

\begin{algorithm}[t]
\For{$j = 1,2,\ldots$}
{
  predict $w_j$\;
  receive input $z_j$ sampled i.i.d. from unknown distribution\;
  suffer loss $f(w_j,z_j)$\;
  define $g_j = \nabla_w f(w_j,z_j)$\;
  compute $(w_{j+1},a_{j+1})=\phi\left(a_{j},g_{j},\alpha_j\right)$\;
}
\caption{Template for a serial first-order stochastic online prediction
  algorithm.}
\label{alg:serial}
\end{algorithm}

Before discussing distributed algorithms, we must fully understand the
serial algorithms on which they are based.  We focus on gradient-based
optimization algorithms that follow the template outlined in
\algref{alg:serial}.  In this template, each prediction is made by an
unspecified \emph{update rule}:
\begin{equation}\label{eqn:updaterule}
(w_{j+1}, a_{j+1}) = \phi( a_j, g_j, \alpha_j ).
\end{equation}
The update rule~$\phi$ takes three arguments: an auxiliary state
vector~$a_j$ that summarizes all of the necessary information about
the past, a gradient~$g_j$ of the loss function $f(\cdot,z_j)$
evaluated at~$w_j$, and an iteration-dependent parameter~$\alpha_j$
such as a stepsize.  The update rule outputs the next predictor
$w_{j+1}\in W$ and a new auxiliary state vector~$a_{j+1}$.
Plugging in different update rules results in different online
prediction algorithms. For simplicity, we assume for now that the
update rules are deterministic functions of their inputs.

As concrete examples, we present two well-known update rules that fit
the above template. The first is the \emph{projected gradient descent}
update rule,
\begin{equation}\label{eqn:gradec}
w_{j+1} = \pi_{W}\left(w_{j}-\frac{1}{\alpha_{j}} g_{j} \right),
\end{equation}
where $\pi_{W}$ denotes the Euclidean projection onto the set
$W$.  Here $1/\alpha_{j}$ is a decaying learning rate,
with~$\alpha_j$ typically set to be $\Theta(\sqrt{j})$.  This fits the
template in \algref{alg:serial} by defining $a_j$ to simply be $w_j$,
and defining $\phi$ to correspond to the update rule specified
in \eqref{eqn:gradec}.  We note that the projected gradient method is
a special case of the more general class of \emph{mirror descent}
algorithms \citep[e.g.,][]{NemirovskiJuLaSh09,Lan09}, which all fit in
the template of~\eqref{alg:serial}.

Another family of update rules that fit in our setting is the \emph{dual
  averaging} method \citep{Nesterov09,Xiao10}.  A dual averaging update rule
takes the form
\begin{equation}\label{eqn:DA}
w_{j+1} = \argmin_{w \in W} \left\{ \left\langle
\sum_{i=1}^j g_i, w \right\rangle + \alpha_j\, h(w) \right\} ~,
\end{equation}
where $\langle\cdot,\cdot\rangle$ denotes the vector inner product,
$h:W \to \reals$ is a strongly convex auxiliary function,
and $\alpha_j$ is a monotonically increasing sequence of positive
numbers, usually set to be $\Theta(\sqrt{j})$.  The dual averaging
update rule fits the template in \algref{alg:serial} by defining
$a_j$ to be $\sum_{i=1}^jg_i$.  In the special case where
$h(w)=(1/2)\|w\|_2^2$, the minimization problem in~\eqref{eqn:DA} has
the closed-form solution
\begin{equation}\label{eqn:EuclideanDA}
w_{j+1} = \pi_{W}\left( -\frac{1}{\alpha_j}\sum_{i=1}^j g_j \right).
\end{equation}

For stochastic online prediction problems with convex loss functions,
both of these update rules have expected regret bound of
$O(\sqrt{m})$.  In general, the coefficient of the dominant $\sqrt{m}$
term is proportional to an upper bound on the expected norm of the
stochastic gradient \citep[e.g.,][]{Zinkevich03}.
Next we present refined bounds for smooth convex loss functions,
which enable us to develop optimal distributed algorithms.

\subsection{Optimal Regret Bounds for Smooth Loss Functions}
\label{sec:serial-bound}

As stated in the introduction, we assume that the loss function
$f(w,z)$ is convex in~$w$ for each~$z\in\Zcal$ and
that~$W$ is a closed convex set.
We use $\|\cdot\|$ to denote the Euclidean norm in~$\reals^n$.
For convenience, we use the notation $F(w)=\E_z [f(w,z)]$ and assume
$w^\star=\argmin_{w\in W} F(w)$ always exists.  Our main
results require a couple of additional assumptions:

\begin{itemize}
\item \emph{Smoothness} - we assume that $f$ is $L$-smooth in
  its first argument, which means that for any $z \in \Zcal$, the
  function $f(\cdot,z)$ has $L$-Lipschitz continuous
  gradients. Formally,
\begin{equation}
\forall\, z \in \Zcal, \quad \forall\,w,w' \in W, \qquad
\norm{\nabla_w f(w,z) - \nabla_w f(w',z)} \leq L \norm{w - w'}~.
\label{eqn:smoothness}
\end{equation}
\item \emph{Bounded Gradient Variance} - we assume that $\nabla_w
  f(w,z)$ has a $\sigma^2$-bounded variance for any fixed $w$,
  when~$z$ is sampled from the underlying distribution. In other words,
  we assume that there exists a constant $\sigma \geq 0$ such that
\[
\forall\, w \in W, \qquad \E_z \left[ \bigl\| \nabla_w f(w,z)
- \nabla F(w) ] \bigr\|^2 \right] \leq \sigma^2 ~.
\]
\end{itemize}

Using these assumptions, regret bounds that explicitly depend on the
gradient variance can be established
\citep{JuditskyNT11,Lan09,Xiao10}.
In particular, for the projected stochastic gradient method
defined in~\eqref{eqn:gradec}, we have the following result:

\begin{theorem}\label{thm:gd-bound}
Let $f(w,z)$ be an $L$-smooth convex loss function in~$w$ for each $z\in\Zcal$
and assume that the stochastic gradient $\nabla_w f(w,z)$ has
$\sigma^2$-bounded variance for all $w\in W$.
In addition, assume that~$W$ is convex and bounded,
and let $D = \sqrt{\max_{u,v\in W} \|u-v\|^2/2}$.  Then
using $\alpha_j=L+(\sigma/D)\sqrt{j}$ in \eqref{eqn:gradec} gives
\[
\E [R(m)] ~\leq~ \bigl( F(w_1)-F(w^\star)\bigr) + D^2 L + 2 D \sigma \sqrt{m}.
\]
\end{theorem}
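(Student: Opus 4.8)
The plan is to follow the standard analysis of projected stochastic gradient descent, but to use smoothness at the key juncture where one would normally just bound $\|g_j\|$ by a constant. First I would fix notation: write $w^\star$ for the minimizer of $F$ over $W$, let $\delta_j = g_j - \nabla F(w_j)$ be the gradient noise, so that $\E[\delta_j \mid w_j] = 0$ and $\E[\|\delta_j\|^2 \mid w_j] \le \sigma^2$ by the bounded-variance assumption. Starting from the update $w_{j+1} = \pi_W(w_j - \frac{1}{\alpha_j} g_j)$ and using the nonexpansiveness of the Euclidean projection together with $w^\star \in W$, expand $\|w_{j+1} - w^\star\|^2 \le \|w_j - w^\star\|^2 - \frac{2}{\alpha_j}\langle g_j, w_j - w^\star\rangle + \frac{1}{\alpha_j^2}\|g_j\|^2$. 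Rearranging gives the per-step bound
\[
\langle g_j, w_j - w^\star\rangle \le \frac{\alpha_j}{2}\bigl(\|w_j - w^\star\|^2 - \|w_{j+1} - w^\star\|^2\bigr) + \frac{1}{2\alpha_j}\|g_j\|^2 .
\]

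The next step is to pass from the gradient inner product to the regret. Taking conditional expectation, $\E[\langle g_j, w_j - w^\star\rangle \mid w_j] = \langle \nabla F(w_j), w_j - w^\star\rangle \ge F(w_j) - F(w^\star)$ by convexity of $F$; and since $f(\cdot, z_i)$ has the same expectation structure, $\E[R(m)] = \sum_j \E[F(w_j) - F(w^\star)]$. So after taking full expectations and summing $j = 1,\dots,m$, the telescoping term contributes at most $\frac{\alpha_1}{2}\|w_1 - w^\star\|^2 + \frac12\sum_{j=2}^m (\alpha_j - \alpha_{j-1})\|w_j - w^\star\|^2 \le D^2 \alpha_m$ using that $\alpha_j$ is nondecreasing and $\|w_j - w^\star\|^2 \le 2D^2$, and what remains is to control $\sum_j \frac{1}{2\alpha_j}\E\|g_j\|^2$.

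The crux — and the place where smoothness and the choice $\alpha_j = L + (\sigma/D)\sqrt{j}$ matter — is bounding $\E\|g_j\|^2$. Here I would write $\|g_j\|^2 = \|\nabla F(w_j) + \delta_j\|^2 = \|\nabla F(w_j)\|^2 + 2\langle \nabla F(w_j), \delta_j\rangle + \|\delta_j\|^2$; the cross term vanishes in conditional expectation, and $\E\|\delta_j\|^2 \le \sigma^2$. For $\|\nabla F(w_j)\|^2$, $L$-smoothness of each $f(\cdot,z)$ implies $L$-smoothness of $F$, and the standard inequality $\|\nabla F(w)\|^2 \le 2L\,(F(w) - F(w^\star))$ for an $L$-smooth convex function minimized at $w^\star$ gives $\E\|\nabla F(w_j)\|^2 \le 2L\,\E[F(w_j) - F(w^\star)]$. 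Hence $\frac{1}{2\alpha_j}\E\|g_j\|^2 \le \frac{L}{\alpha_j}\E[F(w_j) - F(w^\star)] + \frac{\sigma^2}{2\alpha_j}$. Since $\alpha_j \ge L$, the first term is at most $\E[F(w_j) - F(w^\star)]$, and crucially it can be \emph{absorbed} into the left-hand side: moving $\sum_j \frac{L}{\alpha_j}\E[F(w_j)-F(w^\star)]$ over and noting $1 - L/\alpha_j = (\sigma/D)\sqrt{j}/\alpha_j \ge \tfrac12$ does not quite work directly, so instead I would keep one full copy of the regret on the left and observe $\sum_j (1 - L/\alpha_j)\E[F(w_j)-F(w^\star)] \ge$ something — more cleanly, use that $F(w_1)-F(w^\star)$ handles $j=1$ after the smoothness step contributes $\le F(w_1)-F(w^\star)$ there, and for $j \ge 2$ bound $L/\alpha_j \le L/(L + (\sigma/D)\sqrt{j})$ so the residual regret terms have nonnegative coefficient and can be dropped. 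The leftover noise sum is $\sum_{j=1}^m \frac{\sigma^2}{2\alpha_j} \le \frac{\sigma^2}{2}\sum_{j=1}^m \frac{D}{\sigma\sqrt{j}} = \frac{\sigma D}{2}\sum_{j=1}^m j^{-1/2} \le \sigma D \sqrt{m}$, and the telescoping term is $D^2 \alpha_m = D^2 L + D\sigma\sqrt{m}$. Adding the $F(w_1)-F(w^\star)$ boundary contribution and collecting the two $\sqrt{m}$ pieces yields $\E[R(m)] \le (F(w_1) - F(w^\star)) + D^2 L + 2D\sigma\sqrt{m}$.

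I expect the main obstacle to be the bookkeeping in that last paragraph: getting the constant in front of $\sqrt{m}$ to come out to exactly $2D\sigma$ requires the stepsize $\alpha_j = L + (\sigma/D)\sqrt{j}$ to be used sharply in two places at once — once so that $L/\alpha_j \le 1$ lets the smoothness term be absorbed, and once so that $1/\alpha_j \le D/(\sigma\sqrt{j})$ makes the noise sum telescope against the bound $\sum_{j\le m} j^{-1/2} \le 2\sqrt{m}$. Everything else (projection nonexpansiveness, convexity, the $\|\nabla F\|^2 \le 2L(F - F^\star)$ lemma for smooth convex functions) is standard and can be cited or stated without proof.
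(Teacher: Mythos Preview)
Your route differs from the paper's, and as written it has two genuine gaps.

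\textbf{First gap: the smoothness inequality you invoke fails in the constrained case.} The step ``$\|\nabla F(w_j)\|^2 \le 2L\,(F(w_j)-F(w^\star))$'' is the standard descent-lemma consequence that holds when $w^\star$ is a \emph{global} minimizer of~$F$ (so that $\nabla F(w^\star)=0$). Here~$W$ is assumed bounded and $w^\star=\argmin_{w\in W}F(w)$ is a constrained minimizer; in general $\nabla F(w^\star)\neq 0$ and the inequality is simply false. (Take $F(w)=w^2+w$ on $W=[0,1]$: then $w^\star=0$, $F(w^\star)=0$, but $\|\nabla F(0)\|^2=1>0$.) Without this inequality your bound on $\E\|g_j\|^2$ collapses, and the whole absorption strategy with it. One can salvage $\|\nabla F(w_j)-\nabla F(w^\star)\|^2\le 2L\,(F(w_j)-F(w^\star))$ via the first-order optimality condition, but then $\|\nabla F(w_j)\|^2$ picks up an extra $\|\nabla F(w^\star)\|^2$ term and a factor of~$2$, which destroys the exact constants you are aiming for.

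\textbf{Second gap: the absorption does not close.} Even granting the inequality, moving the self-referential term to the left yields
\[
\sum_{j=1}^m\Bigl(1-\tfrac{L}{\alpha_j}\Bigr)\,\E\bigl[F(w_j)-F(w^\star)\bigr]
~\le~ D^2\alpha_m+\sum_{j=1}^m\frac{\sigma^2}{2\alpha_j}.
\]
The coefficients $1-L/\alpha_j=\beta_j/(L+\beta_j)$ with $\beta_j=(\sigma/D)\sqrt{j}$ are strictly less than~$1$ and can be arbitrarily small for early~$j$, so this inequality does \emph{not} lower-bound $\sum_j\E[F(w_j)-F(w^\star)]=\E[R(m)]$. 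Your sentence ``the residual regret terms have nonnegative coefficient and can be dropped'' goes the wrong direction: dropping nonnegative terms from the left-hand side weakens, rather than strengthens, the bound you need.

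\textbf{How the paper avoids both issues.} The paper does not try to bound $\|g_j\|^2$ at all. Instead it applies smoothness one step \emph{forward}, $F(w_{j+1})\le \ell_j(w_{j+1})+\tfrac{L}{2}\|w_{j+1}-w_j\|^2$, then completes the square to isolate the noise as $\|q_j\|_*^2/(2\beta_j)$ with $q_j=g_j-\nabla F(w_j)$, and finally uses the three-point/prox inequality for the projection step (the Bregman version of $\langle g_j,w_{j+1}-w^\star\rangle\le (L+\beta_j)\bigl(d(w^\star,w_j)-d(w^\star,w_{j+1})-d(w_{j+1},w_j)\bigr)$). This produces $F(w_{j+1})-F(w^\star)$ on the left with coefficient exactly~$1$ (no absorption needed), works verbatim in the constrained setting, and leaves only $\E\|q_j\|^2\le\sigma^2$ on the right, from which the constants $D^2L+2D\sigma\sqrt{m}$ fall out directly. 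I would redo the argument along those lines.
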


In the above theorem, the assumption that~$W$ is a bounded set does
not play a critical role. Even if the learning problem has no
constraints on~$w$, we could always confine the search to a bounded
set (say, a Euclidean ball of some radius) and \thmref{thm:gd-bound}
guarantees an $O(\sqrt{m})$ regret compared to the optimum within that
set.

Similarly, for the dual averaging method defined in \eqref{eqn:DA}, we have:

\begin{theorem}\label{thm:da-bound}
Let $f(w,z)$ be an $L$-smooth convex loss function in~$w$ for each
$z\in\Zcal$, assume that the stochastic gradient $\nabla_w f(w,z)$ has
$\sigma^2$-bounded variance for all $w\in W$, and let
$D=$ \linebreak[4] $\sqrt{h(w^\star) - \min_{w\in W}h(w)}$. Then, by setting
$w_1=\argmin_{w\in W} h(w)$ and
$\alpha_j=L+(\sigma/D)\sqrt{j}$ in the dual averaging method we have
\[
\E [R(m)] ~\leq~ \bigl( F(w_1)-F(w^\star)\bigr) + D^2 L + 2 D \sigma \sqrt{m}.
\]\end{theorem}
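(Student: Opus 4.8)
The plan is to mirror the structure of the analysis behind \thmref{thm:gd-bound}, but carried out in the dual-averaging framework rather than the mirror/projected-gradient framework. I would begin with the standard regret decomposition for stochastic online methods: write $f(w_j,z_j)-f(w^\star,z_j) = \inner{g_j, w_j-w^\star} - \bigl(f(w^\star,z_j)-f(w_j,z_j)-\inner{\nabla_w f(w_j,z_j),w^\star-w_j}\bigr)$ and, after taking expectations and using independence of $z_j$ from $w_j$ (so that $\E[g_j\mid w_j]=\nabla F(w_j)$), reduce to controlling $\sum_{j=1}^m \inner{g_j,w_j-w^\star}$. The convexity-and-smoothness refinement enters exactly here: rather than bounding $\norm{g_j}$ by a crude constant, I would split $g_j = \nabla F(w_j) + \xi_j$ with $\xi_j$ the zero-mean noise of variance at most $\sigma^2$, and use $L$-smoothness of $F$ to relate $\inner{\nabla F(w_j),w_j-w^\star}$ to the suboptimality gap $F(w_j)-F(w^\star)$, gaining a term proportional to $-\tfrac{1}{2L}\norm{\nabla F(w_j)}^2$ or similar that can later be absorbed.

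Next I would invoke the core \emph{regret bound for dual averaging} (the deterministic/online lemma of Nesterov and Xiao): for the update \eqref{eqn:DA} with nondecreasing $\alpha_j$ and $h$ strongly convex (modulus $1$, say) with respect to $\norm{\cdot}$, one has, for any fixed $u\in W$,
\[
\sum_{j=1}^m \inner{g_j, w_j - u} ~\leq~ \alpha_m\,\bigl(h(u)-\min_{w\in W}h(w)\bigr) ~+~ \tfrac12 \sum_{j=1}^m \frac{\norm{g_j}^2}{\alpha_{j-1}},
\]
with the convention on $\alpha_0$ fixed by the choice $\alpha_j = L+(\sigma/D)\sqrt j$. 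Taking $u=w^\star$ makes the first term $\alpha_m D^2 = D^2 L + D\sigma\sqrt m$. For the second term I would again substitute $g_j=\nabla F(w_j)+\xi_j$, expand $\norm{g_j}^2 \le 2\norm{\nabla F(w_j)}^2 + 2\norm{\xi_j}^2$ — or more carefully keep the cross term, which vanishes in expectation — take expectations to replace $\E\norm{\xi_j}^2$ by $\sigma^2$, and bound $\sum_j 1/\alpha_{j-1} \le \sum_j 1/((\sigma/D)\sqrt{j-1})$, which is $O((D/\sigma)\sqrt m)$; multiplying by $\sigma^2$ yields the second $D\sigma\sqrt m$ contribution, and the two together give $2D\sigma\sqrt m$. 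The remaining $\E\norm{\nabla F(w_j)}^2/\alpha_{j-1}$ terms are precisely what the smoothness-generated negative terms from the first paragraph are designed to cancel, provided $\alpha_{j-1}\ge L$ for all $j$ — which is exactly why the stepsize is chosen as $L + (\sigma/D)\sqrt j$ rather than merely $(\sigma/D)\sqrt j$. A telescoping/collecting argument then leaves only the initial-gap term $F(w_1)-F(w^\star)$, where $w_1=\argmin_w h(w)$ is used to make the $j=1$ boundary term clean.

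The main obstacle I anticipate is the bookkeeping that makes the smoothness cancellation airtight: one must verify that for \emph{every} $j$ the coefficient multiplying $\E\norm{\nabla F(w_j)}^2$ coming from the dual-averaging quadratic term is dominated by the coefficient of the corresponding negative term coming from the smoothness inequality $F(w_j) \le F(w^\star) + \inner{\nabla F(w_j), w_j - w^\star} - \tfrac{1}{2L}\norm{\nabla F(w_j)}^2$ (or the variant actually used). This hinges delicately on the exact constant $L$ in the stepsize and on whether $h$ has strong convexity modulus exactly $1$; a mismatch by a constant factor would break the clean ``same coefficient as the serial bound'' conclusion. I would handle this by stating the precise dual-averaging lemma with explicit constants, choosing the normalization of $h$ to match, and then checking the per-$j$ inequality $\tfrac{1}{2\alpha_{j-1}} \le \tfrac{1}{2L}$, which follows immediately from $\alpha_{j-1}\ge L$. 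The variance-summation step $\sum_{j=1}^m j^{-1/2} \le 2\sqrt m$ and the handling of the $j=1$ term (where $\alpha_0$ or $\sqrt{j-1}$ could be problematic) are routine once the convention $\alpha_0 = \alpha_1 = L+\sigma/D$, or an explicit separate treatment of the first step, is fixed.
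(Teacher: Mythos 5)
Your route is genuinely different from the paper's. The paper proves \thmref{thm:da-bound} as the Euclidean special case of \thmref{t:da-regret}, and that proof never passes through a linearized regret bound for $\sum_j \inner{g_j, w_j - w^\star}$. Instead it applies the descent lemma to $F$ along the iterate increment, $F(w_{i+1}) \le \ell_i(w_{i+1}) + \frac{L}{2}\norm{w_{i+1}-w_i}^2$ with $\ell_i$ the linearization of $F$ at $w_i$, peels off the noise $q_i = g_i - \nabla F(w_i)$ via Young's inequality $-\inner{q_i, w_{i+1}-w_i} \le \frac{1}{2\beta_i}\norm{q_i}_*^2 + \frac{\beta_i}{2}\norm{w_{i+1}-w_i}^2$, and absorbs the resulting $\frac{L+\beta_i}{2}\norm{w_{i+1}-w_i}^2$ using the strong convexity of the regularized dual-averaging objective (\lemref{l:composite-min}), then telescopes and evaluates at $w^\star$. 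Only the zero-mean noise $q_i$ is ever squared, so $\sigma^2$ enters directly and no control of $\norm{\nabla F(w_j)}$ or of $\nabla F(w^\star)$ is needed. Your approach instead invokes the black-box dual-averaging bound on $\sum_j\inner{g_j,w_j-u}$ and cancels the $\E\norm{\nabla F(w_j)}^2/(2\alpha_{j-1})$ terms against negative terms generated by co-coercivity; in the unconstrained case this is a legitimate alternative and your coefficient check $\alpha_{j-1}\ge L$ is exactly the right condition, with the constants coming out as stated up to an additive $O(D\sigma)$.

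The genuine gap is the constrained case, which the theorem covers: $W$ is an arbitrary closed convex set, so $w^\star$ may lie on its boundary with $\nabla F(w^\star)\neq 0$ (the paper only assumes $\nabla F(w^\star)=0$ for the further simplification in \eqref{eqn:serialBound}, not for \thmref{thm:da-bound} itself). Co-coercivity of a smooth convex $F$ gives $F(w_j)-F(w^\star) \le \inner{\nabla F(w_j), w_j - w^\star} - \frac{1}{2L}\norm{\nabla F(w_j) - \nabla F(w^\star)}^2$, not $-\frac{1}{2L}\norm{\nabla F(w_j)}^2$. The term you must cancel, $\frac{1}{2\alpha_{j-1}}\E\norm{g_j}^2 = \frac{1}{2\alpha_{j-1}}\bigl(\E\norm{\nabla F(w_j)}^2 + \E\norm{\xi_j}^2\bigr)$, is centered at the origin, so the cancellation leaves a residue $\frac{1}{2\alpha_{j-1}}\bigl(2\inner{\nabla F(w_j),\nabla F(w^\star)} - \norm{\nabla F(w^\star)}^2\bigr)$ at each step; summed with $\alpha_j=\Theta(\sqrt{j})$ this is an extra $\Theta(\sqrt{m})$ contribution whose coefficient depends on $\norm{\nabla F(w^\star)}$, which breaks the claim that the leading term is exactly $2D\sigma\sqrt{m}$. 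To close this you would need either to restrict to $\nabla F(w^\star)=0$, or to abandon the co-coercivity cancellation in favor of the paper's increment-based argument, which is insensitive to the location of $w^\star$.
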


For both of the above theorems, if $\nabla F(w^\star)=0$ (which is certainly
the case if~$W=\reals^n$), then the expected regret bounds can be
simplified to
\begin{equation}\label{eqn:serialBound}
\E [R(m)] ~\leq~ 2 D^2 L  + 2 D \sigma \sqrt{m} ~.
\end{equation}
Proofs for these two theorems, as well as the above simplification,
are given in~\appref{app:serial}.  Although we focus on expected
regret bounds here, our results can equally be stated as
high-probability bounds on the actual regret
(see~\appref{app:highprob} for details).

In both \thmref{thm:gd-bound} and \thmref{thm:da-bound}, the
parameters $\alpha_j$ are functions of~$\sigma$.  It may be difficult
to obtain precise estimates of the gradient variance in many concrete
applications. However, note that any upper bound on the variance
suffices for the theoretical results to hold, and identifying such a
bound is often easier than precisely estimating the actual variance. A
loose bound on the variance will increase the constants in our regret
bounds, but will not change its qualitative $O(\sqrt{m})$ rate.

Euclidean gradient descent and dual averaging are not the only update
rules that can be plugged into \algref{alg:serial}.  The analysis
in~\appref{app:serial} (and~\appref{app:highprob}) actually applies to
a much larger class of update rules, which includes the family of
mirror descent updates \citep{NemirovskiJuLaSh09, Lan09} and the
family of (non-Euclidean) dual averaging updates \citep{Nesterov09,
  Xiao10}.  For each of these update rules, we get an expected regret
bound that closely resembles the bound in \eqref{eqn:serialBound}.

Similar results can also be established for loss functions of the form
$f(w,z)+\Psi(w)$, where $\Psi(w)$ is a simple convex regularization term that
is not necessarily smooth.
For example, setting $\Psi(w)=\lambda\|w\|_1$ with $\lambda>0$ promotes
sparsity in the predictor~$w$.
To extend the dual averaging method, we can use the following
update rule in \citet{Xiao10}:
\[
w_{j+1} = \argmin_{w\in W} \left\{ \left\langle \frac{1}{j}
\sum_{i=1}^j g_i, \,w\right\rangle + \Psi(w) + \frac{\alpha_j}{j} h(w)\right\}.
\]
Similar extensions to the mirror descent method can be found in, for example,
\citet{DuchiSinger09}.
Using these composite forms of the algorithms, the same regret bounds
as in \thmref{thm:gd-bound} and \thmref{thm:da-bound} can be achieved
even if~$\Psi(w)$ is nonsmooth.
The analysis is almost identical to \appref{app:serial} by using the
general framework of \citet{Tseng08}.

Asymptotically, the bounds we presented in this section are only controlled
by the variance $\sigma^2$ and the number of iterations $m$.
Therefore, we can think of any of the bounds mentioned above as an abstract
function $\psi(\sigma^2, m)$, which we assume to be monotonically
increasing in its arguments.

\subsection{Analyzing the No-Communication Parallel Solution}
\label{sec:idealtrivial}

Using the abstract notation $\psi(\sigma^2,m)$ for the expected
regret bound simplifies our presentation significantly.  As an
example, we can easily give an analysis of the no-communication
parallel solution described in the introduction.

In the na\"ive no-communication solution, each of the~$k$ nodes in the
parallel system applies the same serial update rule to its own substream
of the high-rate inputs, and no communication takes place between them.
If the total number of examples processed by the~$k$ nodes is~$m$, then
each node processes at most $\ceil{m/k}$ inputs. The examples received by
each node are i.i.d.\ from the original distribution, with the same
variance bound~$\sigma^2$ for the stochastic gradients.
Therefore, each node suffers an expected regret of at most
$\psi(\sigma^2, \ceil{m/k})$ on its portion of the input
stream, and the total regret bound is obtain by simply summing over
the~$k$ nodes, that is,
\[
\E[R(m)] ~\leq~
k\,\psi \left( \sigma^2, \ceil{\frac{m}{k}} \right).
  \]
If $\psi(\sigma^2, m)=2D^2L+2D\sigma\sqrt{m}$,
as in~\eqref{eqn:serialBound}, then the expected total regret is
\[
\E[R(m)] ~\leq~
2k D^2 L +2 D \sigma k\sqrt{\ceil{\frac{m}{k}}}.
\]
Comparing this bound to $2D^2L+2D\sigma\sqrt{m}$ in the ideal serial
solution, we see that it is approximately $\sqrt{k}$ times worse in its
leading term.
This is the price one pays for the lack of communication in the distributed
system.
In \secref{sec:distlearnsync}, we show how this $\sqrt{k}$ factor can
be avoided by our DMB approach.

\section{Serial Online Prediction using Mini-Batches}
\label{sec:serial}

The expected regret bounds presented in the previous section depend
on the variance of the stochastic gradients.
The explicit dependency on the variance naturally suggests the idea of
using averaged gradients over mini-batches to reduce the variance.
Before we present the distributed mini-batch algorithm in the next
section, we first analyze a \emph{serial} mini-batch algorithm.

\begin{algorithm}[t]
\For{$j = 1,2,\ldots$}
{
  initialize $\bar g_j := 0$\;
  \For{$s = 1,\ldots,b$}
  {
    define $i := (j-1)b+s$\;
    predict $w_j$\;
    receive input $z_i$ sampled i.i.d. from unknown distribution\;
    suffer loss $f(w_j, z_i)$\;
    $g_i := \nabla_w f(w_j, z_{i})$\;
    $\bar g_j := \bar g_j +  (1/b) g_i$\;
  }
  set $(w_{j+1},a_{j+1}) = \phi \big(a_j, \bar{g_j},\alpha_j\big)$\;
}
\caption{Template for a serial mini-batch algorithm.}
\label{alg:minibatch}
\end{algorithm}

In the setting described in \algref{alg:serial}, the update rule is
applied after each input is received. We deviate from this setting and
apply the update only periodically. Letting $b$ be a user-defined
\emph{batch size} (a positive integer), and considering every~$b$
consecutive inputs as a \emph{batch}. We define the \emph{serial mini-batch
  algorithm} as follows: Our prediction remains constant for the duration of each batch, and is updated only when a batch ends.  While processing the
$b$ inputs in batch $j$, the algorithm calculates and accumulates
gradients and defines the average gradient
\[
\bar g_j = \frac{1}{b} \sum_{s=1}^b \nabla_w f(w_j, z_{(j-1)b+s}) ~.
\]
Hence, each batch of $b$ inputs generates a single average gradient.
Once a batch ends, the serial mini-batch algorithm feeds $\bar g_j$ to
the update rule $\phi$ as the $j^\text{th}$ gradient and
obtains the new prediction for the next batch and the new state. See
\algref{alg:minibatch} for a formal definition of the serial
mini-batch algorithm.
The appeal of the serial mini-batch setting is that the update rule is used
less frequently, which may have computational benefits.

\begin{theorem} \label{thm:minibatch}
Let $f(w,z)$ be an $L$-smooth convex loss function in~$w$ for each
$z\in\Zcal$ and assume that
the stochastic gradient $\nabla_w f(w,z_i)$ has $\sigma^2$-bounded
variance for all $w$.  If the update rule $\phi$ has the serial
regret bound $\psi(\sigma^2, m)$, then the expected regret
of~\algref{alg:minibatch} over~$m$ inputs is at most
\[
b\,\psi\left(\frac{\sigma^2}{b},\left\lceil\frac{m}{b}\right\rceil\right).
\]
If $\psi(\sigma^2, m)=2D^2L+2D\sigma\sqrt{m}$,
then the expected regret is bounded by
\[
2bD^2L+2D\sigma\sqrt{m+b}.
\]
\end{theorem}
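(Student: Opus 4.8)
The plan is to recognize \algref{alg:minibatch} as the base serial algorithm run on a ``batched'' prediction problem whose loss has $b$ times smaller gradient variance, and then read off the claimed bound from the assumed serial bound $\psi$. Group the inputs into blocks $\zeta_j=(z_{(j-1)b+1},\dots,z_{jb})$ and define the batched loss $\tilde f(w,\zeta_j)=\frac1b\sum_{s=1}^b f(w,z_{(j-1)b+s})$. I would first record three elementary properties: (i) $\tilde f(\cdot,\zeta)$ is convex and, as an average of $L$-smooth functions, is itself $L$-smooth; (ii) $\E_\zeta[\tilde f(w,\zeta)]=F(w)$ for every $w$, so the batched problem has the \emph{same} comparator $w^\star=\argmin_{w\in W}F(w)$; and (iii) $\nabla_w\tilde f(w,\zeta)=\frac1b\sum_{s=1}^b\nabla_w f(w,z_{(j-1)b+s})$ is exactly the averaged gradient $\bar g_j$ of \algref{alg:minibatch}, and since the $z_{(j-1)b+s}$ are i.i.d.\ the standard ``variance of an average of $b$ independent terms'' bound gives $\E_\zeta[\norm{\nabla_w\tilde f(w,\zeta)-\nabla F(w)}^2]\le\sigma^2/b$.

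Next, note that the predictions $w_1,w_2,\dots$ produced by \algref{alg:minibatch} on the stream $z_1,z_2,\dots$ under $f$ coincide with those produced by the serial template \algref{alg:serial}, with the same update rule $\phi$, run on the stream $\zeta_1,\zeta_2,\dots$ under $\tilde f$: the invocation $(w_{j+1},a_{j+1})=\phi(a_j,\bar g_j,\alpha_j)$ is precisely the serial update fed $\nabla_w\tilde f(w_j,\zeta_j)$. Processing $m$ inputs corresponds to $\ceil{m/b}$ batches (pad the stream with fresh i.i.d.\ samples if $b\nmid m$; this changes neither $w_1,\dots,w_{\ceil{m/b}}$ nor the regret over the first $m$ inputs). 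Applying the assumed serial regret bound to this batched problem, with the parameters $\alpha_j$ tuned to the reduced variance $\sigma^2/b$, and using (i)--(iii) — in particular that in expectation the loss-based regret equals $\E[\sum_j(F(w_j)-F(w^\star))]$ since each $\zeta_j$ is independent of $w_j$ — yields
\[
\E\!\left[\sum_{j=1}^{\ceil{m/b}}\bigl(F(w_j)-F(w^\star)\bigr)\right]~\le~\psi\!\left(\frac{\sigma^2}{b},\ceil{\frac{m}{b}}\right).
\]

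To finish, relate this to the actual regret. Because the prediction is held at $w_j$ for the entire $j$-th batch, the same independence argument gives $\E[R(m)]=\E[\sum_{i=1}^m(F(w_{\ceil{i/b}})-F(w^\star))]$. Every summand is nonnegative ($w^\star$ minimizes $F$ over $W$, and $w_{\ceil{i/b}}\in W$), so extending the sum from $m$ up to the $b\ceil{m/b}\ge m$ inputs of the last batch only increases it; grouping by batch then gives $\E[R(m)]\le b\,\E[\sum_{j=1}^{\ceil{m/b}}(F(w_j)-F(w^\star))]\le b\,\psi(\sigma^2/b,\ceil{m/b})$, the first claim. For the second, substitute $\psi(\sigma^2,m)=2D^2L+2D\sigma\sqrt m$ to obtain $b\,\psi(\sigma^2/b,\ceil{m/b})=2bD^2L+2D\sigma\sqrt{b\ceil{m/b}}$, and use $b\ceil{m/b}\le m+b-1\le m+b$.

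I expect the only delicate point to be the bookkeeping in the last two paragraphs: cleanly passing between the loss-based regret $R(m)$ and the $F$-based quantity through the independence of each block from the current predictor, checking that the comparator is unchanged under the averaging, and dispatching the final incomplete batch via nonnegativity of $F(w_j)-F(w^\star)$. The remaining ingredients — $L$-smoothness of $\tilde f$, the factor-$b$ variance reduction, and the ceiling arithmetic — are routine.
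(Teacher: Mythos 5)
Your proposal is correct and follows essentially the same route as the paper's proof: define the batched loss $\bar f(w,(z_1,\dots,z_b))=\frac1b\sum_s f(w,z_s)$, verify it is $L$-smooth with gradient variance $\sigma^2/b$, observe that \algref{alg:minibatch} is the serial update rule applied to this batched problem, invoke $\psi(\sigma^2/b,\lceil m/b\rceil)$, and multiply by $b$. The only cosmetic differences are that you cite the variance-of-an-average fact rather than expanding the double sum of inner products as the paper does, and you handle the incomplete final batch by nonnegativity of $\E[F(w_j)-F(w^\star)]$ where the paper pads the input stream, which is the same argument.
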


\begin{proof}
Assume without loss of generality that $b$ divides $m$, and that the
serial mini-batch algorithm processes exactly $m/b$ complete
batches.\footnote{ We can make this assumption since if $b$ does not
  divide $m$ then we can pad the input sequence with additional inputs
  until $m/b = \ceil{m/b}$, and the expected regret can only
  increase.}  Let~$\Zcal^b$ denote the set of all sequences of $b$
elements from $\Zcal$, and assume that a sequence is sampled from
$\Zcal^b$ by sampling each element i.i.d.\ from $\Zcal$. Let $\bar f:
W \times \Zcal^b \mapsto \reals$ be defined as
\[
\bar f\left(w, \left(z_1,\ldots,z_b\right)\right)
~=~ \frac{1}{b} \sum_{s=1}^b f(w, z_s) ~.
\]
In other words, $\bar f$ averages the loss function~$f$ across $b$
inputs from~$\Zcal$, while keeping the prediction constant.  It is
straightforward to show that
$\E_{\bar{z}\in\Zcal^b} \bar{f}(w,\bar{z})
= \E_{z\in\Zcal} f(w,z) = F(w)$.

Using the linearity of the gradient operator, we have
$$
\nabla_w \bar f\left(w, \left(z_1,\ldots,z_b\right)\right)
= \frac{1}{b} \sum_{s=1}^b \nabla_w f\left(w, z_s\right)~.
$$
Let $\bar z_j$ denote the sequence $(z_{(j-1)b+1},\ldots,z_{jb})$,
namely, the sequence of~$b$ inputs in batch~$j$.  The vector~$\bar
g_j$ in~\algref{alg:minibatch} is precisely the gradient of $\bar
f(\cdot,\bar z_j)$ evaluated at~$w_j$.  Therefore the serial
mini-batch algorithm is equivalent to using the update
rule~$\phi$ with the loss function~$\bar f$.

Next we check the properties of $\bar{f}(w,\bar{z})$ against the two
assumptions in \secref{sec:serial-bound}.
First, if $f$ is $L$-smooth then $\bar f$ is $L$-smooth as well
due to the triangle inequality.
Then we analyze the variance of the stochastic gradient.
Using the properties of the Euclidean norm, we can write
\begin{eqnarray*}
\left\| \nabla_w \bar{f}(w,\bar{z}) - \nabla F(w) \right\|^2
&=& \bigg\| \frac{1}{b}\sum_{s=1}^b \left( \nabla_w f(w,z_s) - \nabla F(w)
    \right) \bigg\|^2 \\
&=& \frac{1}{b^2} \sum_{s=1}^b \sum_{s'=1}^b \Big\langle
 \nabla_w f(w,z_s) - \nabla F(w), \nabla_w f(w,z_{s'})-\nabla F(w) \Big\rangle.
\end{eqnarray*}
Notice that $z_s$ and $z_{s'}$ are independent whenever $s\neq s'$, and in
such cases,
\begin{eqnarray*}
&& \E \Big\langle \nabla_w f(w,z_s)-\nabla F(w),
    \nabla_w f(w,z_{s'})-\nabla F(w) \Big\rangle \\
&=&\Big\langle \E \big[\nabla_w f(w,z_s)-\nabla F(w)\big],
    ~\E\big[\nabla_w f(w,z_{s'})-\nabla F(w)\big] \Big\rangle
~~=~~ 0.
\end{eqnarray*}
Therefore, we have for every $w\in W$,
\begin{equation}\label{eqn:var-scaling}
\E \left\| \nabla_w \bar{f}(w,\bar{z}) - \nabla F(w) \right\|^2
~=~ \frac{1}{b^2}\sum_{s=1}^b \E \big\| \left( \nabla_w f(w,z_s) - \nabla F(w)
    \right) \big\|^2 \\
~\leq~ \frac{\sigma^2}{b}.
\end{equation}
So we conclude that
$\nabla_w \bar f(w,\bar z_j)$ has a $(\sigma^2/b)$-bounded
variance for each~$j$ and each $w\in W$.
If the update rule~$\phi$ has a regret
bound $\psi(\sigma^2,m)$ for the loss function~$f$ over~$m$
inputs, then its regret for~$\bar f$ over
$m/b$ batches is bounded as
\[
\E \bigg[ \sum_{j=1}^{m/b} \bigl( \bar f(w_j,\bar z_j)
- \bar f(w^\star,\bar z_j) \bigr) \bigg]
~\leq~ \psi\left( \frac{\sigma^2}{b}, \frac{m}{b} \right).
\]
By replacing~$\bar f$ above with its definition, and multiplying both
sides of the above inequality by~$b$, we have
\[
\E \bigg[ \sum_{j=1}^{m/b} \sum_{i=(j-1)b+1}^{jb}
\bigl( f(w_j,z_i) - f(w^\star,z_i) \bigr) \bigg]
~\leq~ b\, \psi\left(
\frac{\sigma^2}{b}, \frac{m}{b} \right).
\]

If $\psi(\sigma^2, m)=2D^2L+2D\sigma\sqrt{m}$, then
simply plugging in the general bound
$b\,\psi(\nicefrac{\sigma^2}{b},\lceil\nicefrac{m}{b}\rceil)$
and using $\lceil \nicefrac{m}{b} \rceil \leq \nicefrac{m}{b}+1$ gives
the desired result. However, we note that the optimal algorithmic
parameters, as specified in~\thmref{thm:gd-bound}
and~\thmref{thm:da-bound}, must be changed to
$\alpha_j=L+(\nicefrac{\sigma}{\sqrt{b}D})\sqrt{j}$ to reflect the
reduced variance $\sigma^2/b$ in the mini-batch setting.
\end{proof}

The bound in \thmref{thm:minibatch} is asymptotically equivalent to
the $2D^2L+2D\sigma\sqrt{m}$ regret bound for the basic serial
algorithms presented in~\secref{sec:preliminaries}. In other words,
performing the mini-batch update in the serial setting does not
significantly hurt the performance of the update rule.  On the other
hand, it is also not surprising that using mini-batches in the serial
setting does not improve the regret bound. After all, it is still a
serial algorithm, and the bounds we presented
in~\secref{sec:serial-bound} are optimal.
Nevertheless, our experiments demonstrate that in real-world scenarios,
mini-batching can in fact have a very substantial positive
effect on the transient performance of the online prediction algorithm, even in
the serial setting (see \secref{sec:experiments} for details).
Such positive effects are not captured by our asymptotic, worst-case analysis.

\section{Distributed Mini-Batch for Stochastic Online Prediction}
\label{sec:distlearnsync}

In this section, we show that in a distributed setting, the mini-batch
idea can be exploited to obtain nearly optimal regret bounds.
To make our setting as realistic as possible, we assume that any
communication over the network incurs a latency.
More specifically, we view the network as an undirected graph~$\Gcal$ over
the set of nodes, where each edge represents a bi-directional network link.
If nodes~$u$ and~$v$ are not connected by a link, then any communication
between them must be relayed through other nodes.
The latency incurred between~$u$ and~$v$ is therefore proportional to the
graph distance between them, and the longest possible latency is thus
proportional to the diameter of~$\Gcal$.

In addition to latency, we assume that the network has limited
bandwidth. However, we would like to avoid the tedious discussion of
data representation, compression schemes, error correcting, packet
sizes, etc. Therefore, we do not explicitly quantify the bandwidth of
the network. Instead, we require that the communication load at each
node remains constant, and does not grow with the number of nodes~$k$
or with the rate at which the incoming functions arrive.

Although we are free to use any communication model that respects the
constraints of our network, we assume only the availability of  a distributed
vector-sum operation.  This is a
standard\footnote{For example, all-reduce with the sum operation is a
  standard operation in MPI.} synchronized network operation. Each
vector-sum operation begins with each node holding a vector $v_j$, and
ends with each node holding the sum $\sum_{j=1}^k v_j$. This operation
transmits messages along a rooted minimum-depth spanning-tree of
$\Gcal$, which we denote by $\Tcal$: first the leaves of $\Tcal$ send
their vectors to their parents; each parent sums the vectors received
from his children and adds his own vector; the parent then sends the
result to his own parent, and so forth; ultimately the sum of all
vectors reaches the tree root; finally, the root broadcasts the
overall sum down the tree to all of the nodes.

An elegant property of the vector-sum operation is that it uses each up-link
and each down-link in $\Tcal$ exactly once. This allows us to start
vector-sum operations back-to-back. These vector-sum operations will run
concurrently without creating network congestion on any edge of~$\Tcal$.
Furthermore, we assume that the network operations are \emph{non-blocking},
meaning that each node can continue processing incoming inputs while the
vector-sum operation takes place in the background.
This is a key property that allows us to efficiently deal with network latency.
To formalize how latency affects the performance of our algorithm,
let~$\mu$ denote the number of inputs that are processed by the entire system
during the period of time it takes to complete a vector-sum operation
across the entire network.
Usually~$\mu$ scales linearly with the diameter of the network,
or (for appropriate network architectures) logarithmically in the number
of nodes~$k$.

\subsection{The DMB Algorithm}
\label{sec:dmb-algorithm}

\begin{algorithm}[p]
 \For{$j = 1,2,\ldots$}
 {
   initialize $\hat g_j := 0$\;
   \For{$s = 1,\ldots,b/k$}
   {
     predict $w_j$\;
     receive input $z$ sampled i.i.d. from unknown distribution\;
     suffer loss $f(w_j,z)$\;
     compute $g := \nabla_w f(w_j,z)$\;
     $\hat g_j := \hat g_j + g$\;
   }
  call the distributed vector-sum to compute the sum of $\hat g_j$ across all nodes\;
   receive $\mu/k$ additional inputs and continue predicting using $w_j$\;
   finish vector-sum and compute average gradient $\bar g_j$ by dividing the sum by~$b$\;
   set $(w_{j+1},a_{j+1}) = \phi \big(a_{j}, \bar{g}_{j},\alpha_j\big)$\;
 }
\caption{Distributed mini-batch (DMB) algorithm (running on each node).}
\label{alg:distributed}
\end{algorithm}

\begin{figure}[p]
\centering
\psfrag{1}[bc]{$1$}
\psfrag{1}[bc]{$1$}
\psfrag{2}[bc]{$2$}
\psfrag{ldots}[bc]{\ldots}
\psfrag{k}[bc]{$k$}
\psfrag{wj}[cr]{$w_j$}
\psfrag{wj1}[cr]{$w_{j+1}$}
\psfrag{b}[cl]{$b$}
\psfrag{mu}[cl]{$\mu$}
\includegraphics[width=0.4\textwidth]{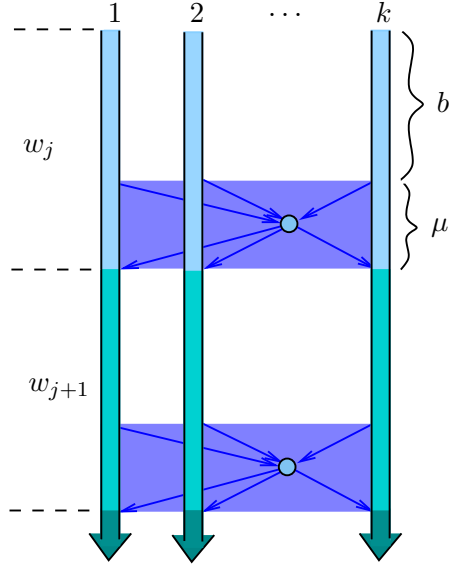}
\caption{Work flow of the DMB algorithm. Within each
  batch~$j=1,2,\ldots$, each node accumulates the stochastic gradients
  of the first $b/k$ inputs.  Then a vector-sum operation across the
  network is used to compute the average across all nodes. While the
  vector-sum operation completes in the background, a total of~$\mu$
  inputs are processed by the processors using the same
  predictor~$w_j$, but their gradients are not collected. Once all of
  the nodes have the overall average $\bar{g}_j$, each node updates
  the predictor using the same deterministic serial algorithm.}
\label{fig:dmb}
\end{figure}

We are now ready to present a general technique for applying a
deterministic update rule~$\phi$ in a distributed environment.
This technique resembles the serial mini-batch technique described
earlier, and is therefore called the \emph{distributed mini-batch}
algorithm, or DMB for short.

Algorithm~\ref{alg:distributed} describes a template of the DMB algorithm
that runs in parallel on each node in the network, and Figure~\ref{fig:dmb}
illustrates the overall algorithm work-flow.
Again, let~$b$ be a batch size, which we will specify later on, and for
simplicity assume that~$k$ divides~$b$ and~$\mu$. The DMB algorithm
processes the input stream in batches $j=1,2,\ldots$, where each batch
contains $b + \mu$ consecutive inputs. During each batch $j$, all of
the nodes use a common predictor $w_j$. While observing the
first~$b$ inputs in a batch, the nodes calculate and accumulate the
stochastic gradients of the loss function~$f$ at~$w_j$. Once the nodes
have accumulated~$b$ gradients altogether, they start a distributed
vector-sum operation to calculate the sum of these $b$
gradients. While the vector-sum operation completes in the background,
$\mu$ additional inputs arrive (roughly $\mu/k$ per node) and the
system keeps processing them using the same predictor~$w_j$.  The
gradients of these additional $\mu$ inputs are discarded (to this end,
they do not need to be computed). Although this may seem wasteful, we
show that this waste can be made negligible by choosing~$b$
appropriately.

Once the vector-sum operation completes, each node holds the sum of
the~$b$ gradients collected during batch~$j$.  Each node divides this
sum by~$b$ and obtains the average gradient, which we denote by~$\bar
g_j$. Each node feeds this average gradient to the update rule
$\phi$, which returns a new synchronized prediction~$w_{j+1}$.
In summary, during batch~$j$ each node processes $(b+\mu)/k$
inputs using the same predictor~$w_j$, but only the first $b/k$
gradients are used to compute the next predictor. Nevertheless, all
$b+\mu$ inputs are counted in our regret calculation.

If the network operations are conducted over a spanning tree, then an obvious
variants of the DMB algorithm is to let the root apply the update rule to get
the next predictor, and then broadcast it to all other nodes.
This saves repeated executions of the update rule at each node
(but requires interruption or modification of the standard
vector-sum operations in the network communication model).
Moreover, this guarantees all the nodes having the same predictor even
with update rules that depends on some random bits.

\begin{theorem} \label{thm:synchronous}
Let $f(w,z)$ be an $L$-smooth convex loss function in~$w$ for each
$z\in\Zcal$ and assume that
the stochastic gradient $\nabla_w f(w,z_i)$ has $\sigma^2$-bounded variance
for all~$w\in W$.
If the update rule~$\phi$ has the serial regret bound
$\psi(\sigma^2, m)$, then the expected regret of
\algref{alg:distributed} over~$m$ samples is at most
\[
(b+\mu)\,\psi\left(\frac{\sigma^2}{b},
\left\lceil\frac{m}{b+\mu}\right\rceil\right) ~.
\]
Specifically, if $\psi(\sigma^2, m)=2D^2L+2D\sigma\sqrt{m}$,
then setting the batch size $b = m^{\nicefrac{1}{3}}$ gives the expected
regret bound
\begin{equation}\label{eq:optimalbound}
2D\sigma\sqrt{m} + 2Dm^{\nicefrac{1}{3}}\left(L D+\sigma\sqrt{\mu}\right)
+ 2D\sigma m^{\nicefrac{1}{6}} +2D\sigma \mu m^{-\nicefrac{1}{6}} + 2\mu D^2 L.
\end{equation}
In fact, if~$b=m^\rho$ for any $\rho\in(0,1/2)$, the expected regret bound is
$2D\sigma\sqrt{m} + o(\sqrt{m})$.
\end{theorem}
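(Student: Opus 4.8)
The plan is to mirror the proof of \thmref{thm:minibatch}, reading the DMB dynamics as the serial template of \algref{alg:serial} run on a batch-averaged loss, but now with \emph{two} distinct ``batch sizes'': $b$ controls the variance reduction, while $b+\mu$ controls how many inputs get charged to each predictor. Concretely, padding the stream exactly as in \thmref{thm:minibatch}, assume without loss of generality that $b+\mu$ divides $m$, so that \algref{alg:distributed} runs for exactly $T = m/(b+\mu) = \ceil{m/(b+\mu)}$ full batches. Define $\bar f\bigl(w,(z_1,\ldots,z_b)\bigr) = \frac1b\sum_{s=1}^b f(w,z_s)$ on $\Zcal^b$ as in \thmref{thm:minibatch}, and let $\bar z_j$ be the block of the first $b$ inputs of batch $j$ --- the ones whose gradients are actually collected. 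Then $\bar g_j = \nabla_w \bar f(w_j,\bar z_j)$, so the predictor sequence produced by \algref{alg:distributed} is exactly the one produced by \algref{alg:serial} with loss $\bar f$ on the i.i.d.\ block-stream $\bar z_1,\bar z_2,\ldots$. The computations already carried out in \thmref{thm:minibatch} show that $\bar f$ is $L$-smooth and that $\nabla_w\bar f(w,\bar z_j)$ has $(\sigma^2/b)$-bounded variance, so the serial bound (with the update-rule parameters rescaled to this variance, e.g.\ $\alpha_j = L + (\nicefrac{\sigma}{\sqrt b D})\sqrt j$) yields
\[
\E\Bigl[\textstyle\sum_{j=1}^{T}\bigl(\bar f(w_j,\bar z_j)-\bar f(w^\star,\bar z_j)\bigr)\Bigr] ~\leq~ \psi\!\left(\frac{\sigma^2}{b},\,T\right),
\]
where $w^\star$ is also the minimizer of $u \mapsto \E_{\bar z}[\bar f(u,\bar z)] = F(u)$.

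Next I would pass from this to the true regret. The key observation is that $w_j$ is a deterministic function of $\bar g_1,\ldots,\bar g_{j-1}$, hence of the inputs of batches $1,\ldots,j-1$ only; in particular $w_j$ is independent of all $b+\mu$ inputs that batch $j$ charges to it, including the $\mu$ discarded ones. Thus $\E[f(w_j,z_i)] = \E[F(w_j)] = \E[\bar f(w_j,\bar z_j)]$ for every input $i$ in batch $j$, and likewise $\E[f(w^\star,z_i)] = F(w^\star) = \E[\bar f(w^\star,\bar z_j)]$. Summing the true regret over the $b+\mu$ inputs of each of the $T$ batches and taking expectations gives
\[
\E[R(m)] ~=~ (b+\mu)\,\E\Bigl[\textstyle\sum_{j=1}^{T}\bigl(\bar f(w_j,\bar z_j)-\bar f(w^\star,\bar z_j)\bigr)\Bigr] ~\leq~ (b+\mu)\,\psi\!\left(\frac{\sigma^2}{b},\,\ceil{\frac{m}{b+\mu}}\right),
\]
which is the first claim. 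This is the one step where I expect any real subtlety: being careful that the $\mu$ extra inputs per batch are genuinely independent of the predictor played on them, so that they contribute the clean factor $(b+\mu)$ rather than spoiling the reduction to the serial bound; everything after this is bookkeeping.

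Finally, for the explicit rate I would substitute $\psi(\sigma^2,m) = 2D^2L + 2D\sigma\sqrt m$ into $(b+\mu)\,\psi(\sigma^2/b,\ceil{m/(b+\mu)})$, use $\ceil{m/(b+\mu)} \leq m/(b+\mu)+1$, and then apply $\sqrt{x+y}\leq\sqrt x+\sqrt y$ twice --- once to split $\sqrt{\ceil{m/(b+\mu)}}$, once to bound $\sqrt{1+\mu/b}\leq 1+\sqrt{\mu/b}$ --- which expands the bound to
\[
2D\sigma\sqrt m ~+~ 2D\sigma\sqrt{\tfrac{m\mu}{b}} ~+~ 2D\sigma\sqrt b ~+~ 2D\sigma\,\tfrac{\mu}{\sqrt b} ~+~ 2(b+\mu)D^2L.
\]
Setting $b = m^{\nicefrac{1}{3}}$ turns the last four groups into $2D\sigma\sqrt\mu\,m^{\nicefrac{1}{3}}$, $2D\sigma\,m^{\nicefrac{1}{6}}$, $2D\sigma\mu\,m^{-\nicefrac{1}{6}}$, and $2D^2L\,m^{\nicefrac{1}{3}}+2\mu D^2L$, which regroups precisely into \eqref{eq:optimalbound}. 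More generally, with $b=m^\rho$ and $\rho\in(0,1/2)$ these terms carry the $m$-exponents $(1-\rho)/2$, $\rho/2$, $-\rho/2$, and $\rho$, all strictly below $1/2$, so each is $o(\sqrt m)$ and the bound reads $2D\sigma\sqrt m + o(\sqrt m)$. The remaining work is just the elementary algebra behind these inequalities.
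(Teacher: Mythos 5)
Your proposal is correct and follows essentially the same route as the paper's proof: reduce the DMB dynamics to the serial template on the batch-averaged loss $\bar f$ over the first $b$ inputs of each batch, use the measurability of $w_j$ with respect to earlier batches to charge all $b+\mu$ inputs of batch $j$ at the same expected per-input regret (the paper phrases this as the conditional identity $\E[\bar f(w_j,\bar z_j)\,|\,w_j]=\E[\frac{1}{b+\mu}\sum_{i}f(w_j,z_i)\,|\,w_j]$, which is the same observation), and then expand $(b+\mu)\psi(\sigma^2/b,\lceil m/(b+\mu)\rceil)$ with $\sqrt{x+y}\le\sqrt{x}+\sqrt{y}$. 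Your final algebra regroups the terms slightly differently from the paper's single application of $\sqrt{x+y+z}\le\sqrt x+\sqrt y+\sqrt z$, but yields the identical bound.
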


To appreciate the power of this result, we compare the specific
bound in \eqref{eq:optimalbound} with the ideal serial solution and the
na\"ive no-communication solution discussed in the introduction.  It
is clear that our bound is asymptotically equivalent to the ideal
serial bound $\psi(\sigma^2,m)$---even the constants in the
dominant terms are identical.  Our bound scales nicely with the
network latency and the cluster size $k$, because $\mu$ (which usually scales
logarithmically with~$k$) does not appear
in the dominant $\sqrt{m}$ term.  On the other hand, the na\"ive
no-communication solution has regret bounded by
$k\psi\left(\sigma^2, \nicefrac{m}{k}\right) =
2kD^2L+2D\sigma\sqrt{km}$ (see \secref{sec:idealtrivial}).  If
$1\ll k \ll m$, this bound is worse than the bound in
\thmref{thm:synchronous} by a factor of $\sqrt{k}$.

Finally, we note that choosing~$b$ as $m^\rho$ for an
appropriate~$\rho$ requires knowledge of~$m$ in advance. However, this
requirement can be relaxed by applying a standard doubling trick
\citep{CesaBianchiLu06}.  This gives a single algorithm that does not
take~$m$ as input, with asymptotically similar regret.  If we use a
fixed~$b$ regardless of~$m$, the dominant term of the regret bound
becomes $2D\sigma\sqrt{\log(k) m/b}$; see the following proof for
details.

\vspace{1ex}
\begin{proof}
Similar to the proof of \thmref{thm:minibatch}, we assume without loss of generality that
$k$ divides $b + \mu$, we define
the function $\bar f: W \times \Zcal^b \mapsto \reals$ as
\[
\bar f\left(w, \left(z_1,\ldots,z_b\right)\right)
~=~ \frac{1}{b} \sum_{s=1}^b f(w, z_s) ~~,
\]
and we use $\bar z_j$ to denote the \emph{first~$b$ inputs} in batch~$j$.
By construction, the function~$\bar f$ is $L$-smooth and its gradients
have $\nicefrac{\sigma^2}{b}$-bounded variance.  The average
gradient~$\bar g_j$ computed by the DMB algorithm is the gradient of
$\bar f(\cdot,\bar z_j)$ evaluated at the point $w_j$.  Therefore,
\begin{equation}\label{eqn:avgbound}
\E \bigg[ \sum_{j=1}^{m/(b+\mu)} \bigl( \bar f(w_j,\bar z_j)
- \bar f(w^\star,\bar z_j) \bigr) \bigg]
~\leq~ \psi\left( \frac{\sigma^2}{b}, \frac{m}{b+\mu} \right).
\end{equation}
This inequality only involve the additional~$\mu$ examples in counting
the number of batches as $\nicefrac{m}{b+\mu}$.
In order to count them in the total regret, we notice that
\[
\forall\,j, \quad \E \left[ \bar f(w_j,\bar z_j) \,|\, w_j \right]
= \E \bigg[ \frac{1}{b+\mu} \sum_{i=(j-1)(b+\mu)+1}^{j(b+\mu)}
f(w_j, z_i) ~\bigg|~ w_j \bigg] ,
\]
and a similar equality holds for $\bar f(w^\star, z_i)$.
Substituting these equalities in the left-hand-side of \eqref{eqn:avgbound} and multiplying both sides by~$b+\mu$ yields
\[
\E \bigg[ \sum_{j=1}^{ m/(b+\mu) }
\sum_{i=(j-1)(b+\mu)+1}^{j(b+\mu)}
\bigl( f(w_j,z_i) - f(w^\star,z_i) \bigr) \bigg]
~\leq~ (b+\mu) \, \psi \! \left(
\frac{\sigma^2}{b},\frac{m}{b+\mu}\right).
\]
Again, if $(b+\mu)$ divides~$m$, then the left-hand side above is
exactly the expected regret of the DMB algorithm over~$m$ examples.
Otherwise, the expected regret can only be smaller.

For the concrete case of
$\psi(\sigma^2, m)=2D^2L+2D\sigma\sqrt{m}$,
plugging in the new values for $\sigma^2$ and $m$ results in a bound of the form
\begin{eqnarray*}
(b+\mu) \psi\left(\frac{\sigma^2}{b}, \left\lceil\frac{m}{b+\mu}\right\rceil
\right)
&\leq& (b+\mu) \psi\left(\frac{\sigma^2}{b}, \frac{m}{b+\mu} + 1 \right)  \\
&\leq& 2(b+\mu)D^2L +2D\sigma\sqrt{m+\frac{\mu}{b}m+\frac{(b+\mu)^2}{b}}.
\end{eqnarray*}
Using the inequality $\sqrt{x+y+z}\leq \sqrt{x}+\sqrt{y}+\sqrt{z}$, which holds for any
nonnegative numbers $x$, $y$ and $z$, we bound the expression above by
\[
2(b+\mu) D^2 L + 2D\sigma\sqrt{m}
+ 2D\sigma\sqrt{\frac{\mu m}{b}}+2D\sigma\frac{b+\mu}{\sqrt{b}}.
\]
It is clear that with $b=C m^{\rho}$ for any $\rho\in (0,1/2)$ and any
constant~$C>0$, this bound can be written as $2D\sigma\sqrt{m}+o(\sqrt{m})$.
Letting $b=m^{1/3}$ gives the smallest exponents in the $o(\sqrt{m})$ terms.
\end{proof}

In the proofs of \thmref{thm:minibatch} and
\thmref{thm:synchronous}, decreasing the variance by a factor
of~$b$, as given in \eqref{eqn:var-scaling}, relies on
properties of the Euclidean norm.  For serial gradient-type algorithms
that are specified with different norms (see the general framework in
\appref{app:serial}), the variance does not typically decrease as
much. For example, in the dual averaging method specified in \eqref{eqn:DA},
if we use $h(w)=1/(2(p-1))\|w\|_p^2$ for some $p\in(1,2]$,
then the ``variance'' bounds for the stochastic gradients must be
expressed in the dual norm, that is,
$\E \left\| \nabla_w f(w,z) - \nabla F(w) \right\|_q^2 \leq \sigma^2$,
where $q=p/(p-1)\in[2,\infty)$.
In this case, the variance bound for the averaged function becomes
\[
\E \left\| \nabla_w \bar{f}(w,\bar{z}) - \nabla F(w) \right\|_q^2
~\leq~ C(n,q) \frac{\sigma^2}{b},
\]
where $C(n,q)=\min\{q-1,O(\log(n))\}$ is a space-dependent constant.\footnote{
For further details of algorithms using $p$-norm,
see \citet[Section~7.2]{Xiao10} and \citet{ShalevShwartzTewari11}.
For the derivation of $C(n,q)$ see for instance Lemma B.2 in \cite{minibatch}.}
Nevertheless, we can still obtain a linear reduction in~$b$ even for
such non-Euclidean norms.
The net effect is that the regret bound for the DMB algorithm becomes
$2D\sqrt{C(n,q)}\sigma\sqrt{m} + o(\sqrt{m})$.

\subsection{Improving Performance on Short Input Streams}
\label{sec: short inputs}

Theorem \ref{thm:synchronous} presents an optimal way of choosing the batch
size $b$, which results in an asymptotically optimal regret
bound. However, our asymptotic approach hides a potential shortcoming
that occurs when $m$ is small. Say that we know, ahead of time, that
the sequence length is $m = 15,000$.  Moreover, say that the latency
is $\mu = 100$, and that $\sigma=1$ and $L=1$.  In this case,
\thmref{thm:synchronous} determines that the optimal batch size is $b
\sim 25$. In other words, for every $25$ inputs that participate in
the update, $100$ inputs are discarded. This waste becomes negligible
as~$b$ grows with~$m$ and does not affect our asymptotic analysis.
However, if~$m$ is known to be small, we can take steps to improve the situation.

Assume for simplicity that $b$ divides $\mu$. Now, instead of running
a single distributed mini-batch algorithm, we run $c= 1 + \mu/b$
independent interlaced instances of the distributed mini-batch
algorithm on each node. At any given moment, $c-1$ instances are
asleep and one instance is active. Once the active instance collects
$b/k$ gradients on each node, it starts a vector-sum network
operation, awakens the next instance, and puts itself to sleep. Note
that each instance awakens after $(c-1)b = \mu$ inputs, which is just
in time for its vector-sum operation to complete.

In the setting described above, $c$ different vector-sum operations
propagate concurrently through the network. The distributed vector sum operation is
typically designed such that each network link is used at most once in
each direction, so concurrent sum operations that begin at
different times should not compete for network resources.  The batch
size should indeed be set such that the generated traffic does not
exceed the network bandwidth limit, but the latency of each sum
operation should not be affected by the fact that multiple sum
operations take place at once.

Simply interlacing $c$ independent copies of our algorithm does not
resolve the aforementioned problem, since each prediction is still
defined by $1/c$ of the observed inputs. Therefore, instead of using
the predictions prescribed by the individual online predictors, we use
their average.  Namely, we take the most recent prediction generated
by each instance, average these predictions, and use this average in
place of the original prediction.

The advantages of this modification are not apparent from our
theoretical analysis. Each instance of the algorithm handles $m/c$ inputs and suffers a regret
of at most
$$
b \,\psi \left( \frac{\sigma^2}{b}, 1 + \frac{m}{bc} \right) ~~,
$$
and, using Jensen's inequality, the overall regret using the average prediction is upper bounded by
$$
bc \,\psi \left( \frac{\sigma^2}{b}, 1 + \frac{m}{bc} \right) ~~.
$$
The bound above is precisely the same as the bound in
\thmref{thm:synchronous}.  Despite this fact, we conjecture that this method will indeed improve empirical
results when the batch size $b$ is small compared to the latency term $\mu$.

\section{Stochastic Optimization}
\label{sec:optimization}

As we discussed in the introduction, the \emph{stochastic
  optimization} problem is closely related, but not identical, to the
stochastic online prediction problem.  In both cases, there is a loss
function $f(w,z)$ to be minimized.  The difference is in the way
success is measured.  In online prediction, success is measured by
regret, which is the difference between the cumulative loss suffered
by the prediction algorithm and the cumulative loss of the best fixed
predictor. The goal of stochastic optimization is to find an
approximate solution to the problem
\[
\minimize_{w\in W} \quad F(w) \triangleq \E_z [ f(w,z) ] ~,
\]
and success is measured by the difference between the expected loss of
the final output of the optimization algorithm and the expected loss
of the true minimizer~$w^\star$.  As before, we assume that the loss
function $f(w,z)$ is convex in~$w$ for any $z\in\Zcal$, and that~$W$
is a closed convex set.

We consider the same \emph{stochastic approximation} type of
algorithms presented in~\algref{alg:serial}, and define the final
output of the algorithm, after processing~$m$ i.i.d.\ samples, to be
\[
\bar w_m = \frac{1}{m} \sum_{j=1}^m w_j ~.
\]
In this case, the appropriate measure of success is the optimality gap
\[
G(m) ~=~ F(\bar w_m)-F(w^\star) ~.
\]
Notice that the optimality gap $G(m)$ is also a random variable,
because~$\bar w_m$ depends on the random samples $z_1,\ldots,z_m$.
It can be shown \citep[see, e.g.,][Theorem 3]{Xiao10} that for convex loss
functions and i.i.d.\ inputs, we always have
\[
\E[G(m)] ~\leq~ \frac{1}{m} \E[R(m)] ~.
\]
Therefore, a bound on the expected optimality gap can be readily
obtained from a bound on the expected regret of the same algorithm.
In particular, if~$f$ is an $L$-smooth convex loss function and
$\nabla_w f(w,z)$ has $\sigma^2$-bounded variance, and our
algorithm has a regret bound of $\psi(\sigma^2,m)$, then it also has an
expected optimality gap of at most
\[
\bar\psi(\sigma^2,m) = \frac{1}{m} \psi(\sigma^2, m) ~.
\]
For the specific regret bound $\psi(\sigma^2,m)=2D^2L + 2D\sigma\sqrt{m}$, which
holds for the serial algorithms presented in \secref{sec:preliminaries},
we have
\[
\E[G(m)] ~\leq~ \bar\psi(\sigma^2,m)
~=~ \frac{2D^2 L}{m} +\frac{2D\sigma}{\sqrt{m}}~.
\]

\subsection{Stochastic Optimization using Distributed Mini-Batches}

\begin{algorithm}[t]
$r \leftarrow \left\lfloor\frac{m}{b}\right\rfloor$\;

 \For{$j = 1,2,\ldots,r$}
 {
   reset $\hat g_j=0$\;
   \For{$s = 1,\ldots,b/k$}
   {
     receive input $z_s$ sampled i.i.d.\ from unknown distribution\;
     calculate $g_s = \nabla_w f(w_j,z_s)$\;
     calculate $\hat g_j \leftarrow \hat g_j + g_i$\;
   }
   start distributed vector sum to compute the sum of $\hat g_j$ across all nodes\;
   finish distributed vector sum and compute average gradient $\bar g_j$\;
   set $(w_{j+1},a_{j+1}) = \phi \big(a_{j}, \bar{g}_{j},j\big)$\;
 }
\KwOut{$\frac{1}{r}\sum_{j=1}^{r} w_{j}$}
 \caption{Template of DMB algorithm for stochastic optimization.}
 \label{alg:opt-DMB}
\end{algorithm}

Our template of a DMB algorithm for stochastic optimization (see
\algref{alg:opt-DMB}) is very similar to the one presented for the
online prediction setting.  The main difference is that we do not have
to process inputs while waiting for the vector-sum network operation
to complete.  Again let~$b$ be the batch size, and the number of
batches $r=\lfloor\nicefrac{m}{b}\rfloor$. For simplicity of
discussion, we assume that~$b$ divides~$m$.

\begin{theorem}\label{thm:opt-DMB}
Let $f(w,z)$ be an $L$-smooth convex loss function in~$w$ for each
$z\in\Zcal$ and assume that
the stochastic gradient $\nabla_{w}f(w,z)$ has $\sigma^{2}$-bounded
variance for all $w\in W$.  If the update rule~$\phi$
used in a serial setting has an expected optimality gap bounded by
$\bar\psi(\sigma^2,m)$, then the expected optimality gap of
\algref{alg:opt-DMB} after processing~$m$ samples is at most
\[
\bar\psi \left(\frac{\sigma^{2}}{b},\frac{m}{b} \right) ~.
\]
If
$\bar\psi(\sigma^{2},m)=\frac{2D^2 L}{m}+\frac{2D\sigma}{\sqrt{m}}$,
then the expected optimality gap is bounded by
\[
\frac{2 b D^2 L}{m}+\frac{2D\sigma}{\sqrt{m}} ~.
\]
\end{theorem}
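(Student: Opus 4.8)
The plan is to reduce the statement to the serial optimality-gap bound $\bar\psi$ applied to an averaged loss function, exactly as in the proofs of \thmref{thm:minibatch} and \thmref{thm:synchronous}. First I would introduce the batched loss $\bar f\colon W\times\Zcal^b\to\reals$, $\bar f(w,(z_1,\ldots,z_b)) = (1/b)\sum_{s=1}^b f(w,z_s)$, where a sample $\bar z\in\Zcal^b$ is obtained by drawing $b$ points i.i.d.\ from $\Zcal$. As already recorded in those earlier proofs, $\bar f$ is $L$-smooth in~$w$ (triangle inequality), $\E_{\bar z}[\bar f(w,\bar z)] = F(w)$ for every~$w$, and---using the cancellation of the cross terms under the Euclidean norm as in \eqref{eqn:var-scaling}---the stochastic gradient $\nabla_w\bar f(w,\bar z)$ has $(\sigma^2/b)$-bounded variance. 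In particular the averaged objective $\bar F := \E_{\bar z}[\bar f(\cdot,\bar z)]$ coincides with~$F$, so both problems share the minimizer~$w^\star$ and the same optimality gap.

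The second step is to observe that \algref{alg:opt-DMB} is literally \algref{alg:serial} (in its stochastic-optimization reading) run on the loss~$\bar f$ for $r=m/b$ iterations with i.i.d.\ samples $\bar z_j\in\Zcal^b$: in batch~$j$ the vector $\bar g_j$ fed to~$\phi$ is exactly $\nabla_w\bar f(w_j,\bar z_j)$ evaluated at the common predictor~$w_j$, where $\bar z_j$ collects the $b$ inputs of batch~$j$, and the algorithm's output $\frac1r\sum_{j=1}^r w_j$ is precisely the running average $\bar w_r$ of \secref{sec:optimization} for this run. Since the serial update rule~$\phi$ has expected optimality gap $\bar\psi(\sigma^2,m)$ on $m$ samples of a loss with $\sigma^2$-bounded gradient variance (with parameters set accordingly; here they must reflect the reduced variance $\sigma^2/b$, just as in the remark at the end of the proof of \thmref{thm:minibatch}), applying it to~$\bar f$ over $r=m/b$ samples gives
\[
\E\bigl[F(\bar w_r)-F(w^\star)\bigr] \;=\; \E\bigl[\bar F(\bar w_r)-\bar F(w^\star)\bigr] \;\leq\; \bar\psi\!\left(\frac{\sigma^2}{b},\frac{m}{b}\right),
\]
which is the first claim (using that $b$ divides~$m$). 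For the explicit bound I would just substitute $\bar\psi(\sigma^2,m)=2D^2L/m + 2D\sigma/\sqrt{m}$ with arguments $\sigma^2/b$ and $m/b$: the first term becomes $2D^2L/(m/b)=2bD^2L/m$ and the second becomes $2D(\sigma/\sqrt{b})/\sqrt{m/b}=2D\sigma/\sqrt{m}$, yielding $2bD^2L/m + 2D\sigma/\sqrt{m}$.

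There is no real obstacle here: the argument is a direct transcription of the mini-batch reduction, and the latency bookkeeping of \thmref{thm:synchronous} is simply absent, because in the optimization algorithm the nodes idle during the vector-sum rather than processing (discarded) inputs---hence no $\mu$ term. The only points needing a word of care are the familiar ones: the factor-$b$ variance reduction genuinely uses the Euclidean structure, so for an update rule specified with a non-Euclidean norm one picks up the space-dependent constant $C(n,q)$ exactly as in the remark following \thmref{thm:synchronous}; and the existence of a valid $\bar\psi$ in the first place rests on the bound $\E[G(m)]\le\E[R(m)]/m$ quoted from \citet{Xiao10}, which is what furnishes the concrete $\bar\psi(\sigma^2,m)=2D^2L/m+2D\sigma/\sqrt{m}$ used in the second part of the statement.
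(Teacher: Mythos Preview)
Your proposal is correct and matches the paper's own approach: the paper does not give a separate proof but states that it ``follows along the lines of \thmref{thm:minibatch}, and is omitted,'' which is precisely the reduction to the averaged loss $\bar f$ that you carry out. Your additional remarks---that the $\mu$ term disappears because the nodes idle during the vector-sum, and the caveat about the Euclidean variance reduction---are accurate and consistent with the surrounding text.
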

The proof of the theorem follows along the lines of \thmref{thm:minibatch},
and is omitted.

We comment that the accelerated stochastic gradient methods of
\citet{Lan09}, \citet{HuKwokPan09} and \citet{Xiao10}
can also fit in our template for the DMB algorithm,
but with more sophisticated updating rules.
These accelerated methods have an expected optimality bound of
$\bar\psi(\sigma^{2},m)=\nicefrac{4D^2 L}{m^2}+\nicefrac{4D\sigma}{\sqrt{m}}$, which translates into the following bound for the DMB algorithm:
\[
\bar\psi\left(\frac{\sigma^{2}}{b},\frac{m}{b}\right)
=\frac{4b^2D^2 L}{m^2}+\frac{4D\sigma}{\sqrt{m}}~.
\]
Most recently, \citet{GhadimiLan10} developed accelerated stochastic
gradient methods for strongly convex functions that have the
convergence rate
$\bar \psi(\sigma^2, m) =
O(1) \left( \nicefrac{L}{m^2} + \nicefrac{\sigma^2}{\nu m} \right)$,
where $\nu$ is the strong convexity parameter of the loss function.
The corresponding DMB algorithm has a convergence rate
\[
\bar \psi \left( \frac{\sigma^2}{b}, \frac{m}{b} \right) =
O(1) \left( \frac{b^2 L}{m^2} + \frac{\sigma^2}{\nu m} \right).
\]
Apparently, this also fits in the DMB algorithm nicely.

The significance of our result is that the dominating factor in the
convergence rate is not affected by the batch size.  Therefore,
depending on the value of~$m$, we can use large batch sizes without
affecting the convergence rate in a significant way.  Since we can run
the workload associated with a single batch in parallel, this theorem
shows that the mini-batch technique is capable of turning many serial
optimization algorithms into parallel ones.  To this end, it is
important to analyze the speed-up of the parallel algorithms in terms
of the running time (wall-clock time).

\subsection{Parallel Speed-Up}
Recall that $k$ is the number of parallel computing nodes and $m$ is
the total number of i.i.d.\ samples to be processed.  Let~$b(m)$ be
the batch size that depends on~$m$.  We define a \emph{time-unit} to
be the time it takes a single node to process one sample (including
computing the gradient and updating the predictor). For convenience,
let~$\delta$ be the latency of the vector-sum operation in the network
(measured in number of time-units).\footnote{ The
  relationship between~$\delta$ and~$\mu$ defined in the online
  setting (see Section \ref{sec:distlearnsync}) is roughly
  $\mu=k\delta$.} Then the parallel speed-up of the DMB algorithm is
$$
S(m) = \frac{m}{\frac{m}{b(m)} \left(\frac{b(m)}{k} + \delta \right)}
=\frac{k}{1+ \frac{\delta}{b(m)} k} ~,
$$
where $m/b(m)$ is the number of batches, and $b(m)/k+\delta$ is the
wall-clock time by~$k$ processors to finish one batch in the DMB algorithm.
If $b(m)$ increases at a fast enough rate, then we have
$S(m)\to k$ as $m\to\infty$.
Therefore, we obtain an asymptotically linear speed-up, which is the ideal
result that one would hope for in parallelizing the optimization process
\citep[see][]{Gustafson88}.

In the context of stochastic optimization, it is more appropriate to
measure the speed-up with respect to the same optimality gap, not the
same amount of samples processed.  Let~$\epsilon$ be a given target
for the expected optimality gap.  Let $m_\mathrm{srl}(\epsilon)$ be
the number of samples that the serial algorithm needs to reach this
target and let $m_\mathrm{DMB}(\epsilon)$ be the number of samples
needed by the DMB algorithm.  Slightly overloading our notation, we
define the parallel speed-up with respect to the expected optimality
gap~$\epsilon$ as
\begin{equation}\label{eqn:speed-up}
S(\epsilon) = \frac{m_\mathrm{srl}(\epsilon)}
{\frac{m_\mathrm{DMB}(\epsilon)}{b}
\left(\frac{b}{k} + \delta \right)}~.
\end{equation}
In the above definition, we intentionally leave the dependence of~$b$ on~$m$
unspecified.
Indeed, once we fix the function~$b(m)$, we can substitute it into the equation
$\bar\psi(\nicefrac{\sigma^2}{b},\nicefrac{m}{b})=\epsilon$ to solve
for the exact form of $m_\mathrm{DMB}(\epsilon)$.
As a result, $b$ is also a function of~$\epsilon$.

Since both $m_\mathrm{srl}(\epsilon)$ and $m_\mathrm{DMB}(\epsilon)$ are upper
bounds for the actual running times to reach $\epsilon$-optimality, their ratio
$S(\epsilon)$ may not be a precise measure of the speed-up.
However, it is difficult in practice to measure the actual running times of
the algorithms in terms of reaching $\epsilon$-optimality.
So we only hope $S(\epsilon)$ gives a conceptual guide in comparing
the actual performance of the algorithms.
The following result shows that if the batch size~$b$ is chosen to be of
order $m^\rho$ for any $\rho\in(0,1/2)$, then we still have asymptotic
linear speed-up.

\begin{theorem}\label{thm:speedup}
Let $f(w,z)$ be an $L$-smooth convex loss function in~$w$ for each
$z\in\Zcal$ and assume that
the stochastic gradient $\nabla_{w}f(w,z)$ has $\sigma^{2}$-bounded
variance for all $w\in W$.
Suppose the update rule~$\phi$ used in the serial setting has an
expected optimality gap bounded by
$\bar\psi(\sigma^{2},m)=\frac{2D^2 L}{m}+\frac{2D\sigma}{\sqrt{m}}$.
If the batch size in the DMB algorithm is chosen as
$b(m)=\Theta(m^\rho)$, where $\rho\in(0,1/2)$, then we have
\[
\lim_{\epsilon\to 0} S(\epsilon) = k.
\]
\end{theorem}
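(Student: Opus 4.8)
The plan is to extract the leading-order asymptotics of both $m_\mathrm{srl}(\epsilon)$ and $m_\mathrm{DMB}(\epsilon)$ as $\epsilon\to 0$, show their ratio tends to~$1$, and separately show that the overhead factor coming from the latency $\delta$ tends to~$1$. First I would rewrite the speed-up in~\eqref{eqn:speed-up}, using $\frac{m_\mathrm{DMB}(\epsilon)}{b}\bigl(\frac{b}{k}+\delta\bigr)=\frac{m_\mathrm{DMB}(\epsilon)}{k}\bigl(1+\frac{\delta k}{b}\bigr)$ with $b=b(m_\mathrm{DMB}(\epsilon))$, as
\[
S(\epsilon) ~=~ \frac{k\, m_\mathrm{srl}(\epsilon)}{m_\mathrm{DMB}(\epsilon)}
\cdot \frac{1}{1 + \delta k / b\bigl(m_\mathrm{DMB}(\epsilon)\bigr)} ~,
\]
so that it suffices to prove $m_\mathrm{srl}(\epsilon)/m_\mathrm{DMB}(\epsilon)\to 1$ and $b\bigl(m_\mathrm{DMB}(\epsilon)\bigr)\to\infty$.

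Next I would pin down $m_\mathrm{srl}(\epsilon)$. It is defined by $\bar\psi(\sigma^2,m)=\frac{2D^2L}{m}+\frac{2D\sigma}{\sqrt m}=\epsilon$; since the right-hand side is decreasing in~$m$ to~$0$, we have $m_\mathrm{srl}(\epsilon)\to\infty$, and because the $\sqrt m$ term alone is at most~$\epsilon$ we get the lower bound $m_\mathrm{srl}(\epsilon)\ge 4D^2\sigma^2/\epsilon^2$. This lower bound forces the $\frac{2D^2L}{m}$ term to be $O(\epsilon^2)=o(\epsilon)$, hence $\frac{2D\sigma}{\sqrt{m_\mathrm{srl}(\epsilon)}}=\epsilon\,(1+o(1))$ and therefore $m_\mathrm{srl}(\epsilon)=\frac{4D^2\sigma^2}{\epsilon^2}\,(1+o(1))$.

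The analysis of $m_\mathrm{DMB}(\epsilon)$ is the step I expect to require the most care, and it is exactly where the hypothesis $\rho<1/2$ is used. By \thmref{thm:opt-DMB}, $m_\mathrm{DMB}(\epsilon)$ solves $\frac{2b(m)D^2L}{m}+\frac{2D\sigma}{\sqrt m}=\epsilon$. The same reasoning as above gives $m_\mathrm{DMB}(\epsilon)\to\infty$ and $m_\mathrm{DMB}(\epsilon)\ge 4D^2\sigma^2/\epsilon^2$. With $b(m)=\Theta(m^\rho)$ the mini-batch ``bias'' term $\frac{2b(m)D^2L}{m}$ is $\Theta(m^{\rho-1})$, and substituting the lower bound on $m_\mathrm{DMB}(\epsilon)$ makes it $O\!\left(\epsilon^{2(1-\rho)}\right)$; since $\rho<1/2$ we have $2(1-\rho)>1$, so this term is again $o(\epsilon)$. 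Consequently $\frac{2D\sigma}{\sqrt{m_\mathrm{DMB}(\epsilon)}}=\epsilon\,(1+o(1))$ and $m_\mathrm{DMB}(\epsilon)=\frac{4D^2\sigma^2}{\epsilon^2}\,(1+o(1))$, matching the serial sample count to leading order.

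Finally I would assemble the pieces. From the last two paragraphs $m_\mathrm{srl}(\epsilon)/m_\mathrm{DMB}(\epsilon)\to 1$; moreover $m_\mathrm{DMB}(\epsilon)\to\infty$ forces $b\bigl(m_\mathrm{DMB}(\epsilon)\bigr)=\Theta\bigl(m_\mathrm{DMB}(\epsilon)^\rho\bigr)\to\infty$, so the overhead factor $1/\bigl(1+\delta k/b(m_\mathrm{DMB}(\epsilon))\bigr)\to 1$. Plugging both limits into the displayed expression for $S(\epsilon)$ gives $\lim_{\epsilon\to 0}S(\epsilon)=k$. The only genuinely non-routine point is recognizing that $\rho<1/2$ is precisely the threshold that keeps the bias term $2b(m)D^2L/m$ asymptotically negligible relative to the target~$\epsilon$: for $\rho\ge 1/2$ this term would be of order $\epsilon$ or larger, the leading constants of $m_\mathrm{srl}(\epsilon)$ and $m_\mathrm{DMB}(\epsilon)$ would cease to agree, and the limit of $S(\epsilon)$ would fall strictly below~$k$.
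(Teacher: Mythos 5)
Your proof is correct, and it rests on the same two pillars as the paper's --- that $m_\mathrm{srl}(\epsilon)$ and $m_\mathrm{DMB}(\epsilon)$ agree to leading order $4D^2\sigma^2/\epsilon^2$, and that the latency overhead factor $1+\delta k/b$ tends to $1$ --- but the mechanics are genuinely different. The paper solves the serial equation exactly in closed form, fixes the concrete choice $b(m)=(\theta\sigma/(DL))m^\rho$, and then sandwiches $S(\epsilon)$: the bound $\limsup_{\epsilon\to0} S(\epsilon)\le k$ follows from $m_\mathrm{DMB}(\epsilon)\ge m_\mathrm{srl}(\epsilon)$, while the matching lower bound is obtained by introducing, for each $\eta>0$, the comparison quantity $m_\eta(\epsilon)=4D^2\sigma^2(1+\eta)^2/\epsilon^2$, showing $m_\mathrm{DMB}(\epsilon)\le m_\eta(\epsilon)$ for all sufficiently small $\epsilon$, and letting $\eta\to0$ at the end. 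You instead extract $m_\mathrm{srl}(\epsilon)=\frac{4D^2\sigma^2}{\epsilon^2}(1+o(1))$ and $m_\mathrm{DMB}(\epsilon)=\frac{4D^2\sigma^2}{\epsilon^2}(1+o(1))$ in a single pass, via the bootstrap observation that any solution satisfies $m\ge 4D^2\sigma^2/\epsilon^2$, which in turn forces the bias terms $2D^2L/m$ and $2b(m)D^2L/m=\Theta(m^{\rho-1})$ to be $o(\epsilon)$ precisely when $\rho<1/2$. This buys you a more streamlined argument (no auxiliary parameter $\eta$ and no two-sided sandwich), it applies verbatim to any $b(m)=\Theta(m^\rho)$ as the theorem actually states rather than to one specific constant, and it isolates cleanly why $\rho<1/2$ is the correct threshold; the only thing you give up is the explicit closed form for $m_\mathrm{srl}(\epsilon)$ that the paper records along the way.
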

\begin{proof}
By solving the equation
\[
\frac{2 D^2 L}{m}+\frac{2D\sigma}{\sqrt{m}}  = \epsilon ~,
\]
we see that the following number of samples is sufficient for the
serial algorithm to reach $\epsilon$-optimality:
\[
m_\mathrm{srl}(\epsilon) ~=~ \frac{D^2\sigma^2}{\epsilon^2}
\left(1+\sqrt{1+\frac{2L\epsilon}{\sigma^2}} \right)^2 ~.
\]
For the DMB algorithm, we use the batch size
$b(m)=(\nicefrac{\theta\sigma}{DL}) m^{\rho}$,
with some $\theta>0$, to obtain the equation
\begin{equation}\label{eqn:DMB-m}
\frac{2 b(m) D^2 L}{m}+\frac{2D\sigma}{\sqrt{m}}
~=~ \frac{2 D\sigma}{m^{1/2}} \left( 1 + \frac{\theta}{m^{1/2-\rho}}\right)
~=~ \epsilon.
\end{equation}
We use $m_\mathrm{DMB}(\epsilon)$ to denote the solution of the above equation.
Apparently $m_\mathrm{DMB}(\epsilon)$ is a monotone function of~$\epsilon$
and $\lim_{\epsilon\to 0} m_\mathrm{DMB}(\epsilon) = \infty$.
For convenience (with some abuse of notation),
let $b(\epsilon)$ to denote $b(m_\mathrm{DMB}(\epsilon))$,
which is also monotone in~$\epsilon$ and satisfies
$\lim_{\epsilon\to 0} b(\epsilon) = \infty$.
Moreover, for any batch size $b>1$, we have
$m_\mathrm{DMB}(\epsilon) \geq m_\mathrm{srl}(\epsilon)$.
Therefore, from Equation~(\ref{eqn:speed-up}) we get
\[
\limsup_{\epsilon\to 0} S(\epsilon)
\leq \lim_{\epsilon\to 0} \frac{k}{1+\frac{\delta}{b(\epsilon)} k}
= k.
\]

Next we show $\liminf_{\epsilon\to 0} S(\epsilon) \geq k$.
For any $\eta>0$, let
\[
m_\eta(\epsilon) = \frac{4D^2\sigma^2(1+\eta)^2}{\epsilon^2}.
\]
which is monotone decreasing in~$\epsilon$, and
can be seen as the solution to the equation
\[
\frac{2 D\sigma}{m^{1/2}} \left( 1 + \eta\right) ~=~ \epsilon.
\]
Comparing this equation with Equation~(\ref{eqn:DMB-m}), we see that,
for any $\eta>0$, there exists an $\epsilon'$ such that
for all $0<\epsilon\leq\epsilon'$, we have
$m_\mathrm{DMB}(\epsilon) \leq m_\eta(\epsilon)$.
Therefore,
\[
\liminf_{\epsilon\to 0} S(\epsilon)
~\geq~ \lim_{\epsilon\to 0} \frac{m_\mathrm{srl}(\epsilon)}{m_\eta(\epsilon)}
\frac{k}{1+\frac{\delta}{b(\epsilon)} k}
~=~\lim_{\epsilon\to 0}
\frac{\left(1+\sqrt{1+\frac{2L\epsilon}{\sigma^2}} \right)^2}{4(1+\eta)^2}
\frac{k}{1+\frac{\delta}{b(\epsilon)} k} ~=~ \frac{1}{(1+\eta)^2} k.
\]
Since the above inequality holds for any $\eta>0$, we can take $\eta\to 0$
and conclude that \linebreak[4] $\liminf_{\epsilon\to 0} S(\epsilon) \geq k$.
This finishes the proof.
\end{proof}

For accelerated stochastic gradient methods whose convergence rates
have a similar dependence on the gradient variance
\citep{Lan09, HuKwokPan09, Xiao10, GhadimiLan10},
the batch size~$b$ has a even smaller
effect on the convergence rate (see discussions after \thmref{thm:opt-DMB}),
which implies a better parallel speed-up.

\section{Experiments} \label{sec:experiments}

We conducted experiments with a large-scale online binary
classification problem. First, we obtained a log of one billion
queries issued to the Internet search engine Bing. Each entry in
the log specifies a time stamp, a query text, and the id of the user
who issued the query (using a temporary browser cookie). A query is
said to be \emph{highly monetizable} if, in the past, users who issued
this query tended to then click on online advertisements. Given a
predefined list of one million highly monetizable queries, we observe
the queries in the log one-by-one and attempt to predict whether the
next query will be highly monetizable or not. A clever search engine
could use this prediction to optimize the way it presents search
results to the user. A prediction algorithm for this task must keep up
with the stream of queries received by the search engine, which calls
for a distributed solution.

The predictions are made based on the recent query-history of the
current user. For example, the predictor may learn that users who
recently issued the queries ``island weather'' and ``sunscreen
reviews'' (both not highly monetizable in our data) are likely to issue
a subsequent query which is highly monetizable (say, a query like
``Hawaii vacation''). In the next section, we formally define how each
input, $z_t$, is constructed.

First, let $n$ denote the number of distinct queries that appear in
the log and assume that we have enumerated these queries, $q_1,\ldots,q_n$.
Now define $x_t \in \{0,1\}^n$ as follows
$$
x_{t,j} =
\begin{cases}
1&\text{if query $q_j$ was issued by the current user during the last
  two hours,}\\ 0&\text{otherwise.}
\end{cases}
$$
Let $y_t$ be a binary variable, defined as
$$
y_t =
\begin{cases}
+1&\text{if the current query is highly monetizable,}\\
-1&\text{otherwise.}
\end{cases}
$$
In other words, $y_t$ is the binary label that we are trying to
predict.  Before observing $x_t$ or $y_t$, our algorithm chooses a
vector $w_t \in \reals^n$. Then $x_t$ is observed and the
resulting binary prediction is the sign of their inner product
$\inner{w_t,x_t}$.  Next,
the correct label $y_t$ is revealed and our binary prediction is
incorrect if $y_t \inner{w_t,x_t} \leq 0$.  We can re-state this
prediction problem in an equivalent way by defining $z_t = y_t
x_t$, and saying that an incorrect prediction occurs when
$\inner{w_t, z_t} \leq 0$.

We adopt the logistic loss function as a smooth convex proxy to the
error indicator function. Formally, define~$f$ as
$$
f(w, z) ~=~ \log_2\big( 1 + \exp(-\inner{w, z}) \big) ~~.
$$
Additionally, we introduced the convex regularization constraint
$\norm{w_t}\leq C$, where $C$ is a predefined regularization parameter.

We ran the synchronous version of our distributed algorithm using the
Euclidean dual averaging update rule~(\ref{eqn:EuclideanDA})
in a cluster simulation.
The simulation allowed us to easily investigate the
effects of modifying the number of nodes in the cluster and the
latencies in the network.

We wanted to specify a realistic latency in our simulation, which
faithfully mimics the behavior of a real network in a search engine
datacenter.  To this end, we assumed that the nodes are connected via
a standard 1Gbs Ethernet network. Moreover, we assumed that the nodes
are arranged in a precomputed logical binary-tree communication
structure, and that all communication is done along the edges in this
tree.  We conservatively estimated the round-trip latency between
proximal nodes in the tree to be 0.5ms. Therefore, the total time to
complete each vector-sum network operation is $\log_2(k)$ ms, where $k$ is the
number of nodes in the cluster. We assumed that our search engine
receives $4$ queries per ms (which adds up to ten billion queries a
month). Overall, the number of queries discarded between mini-batches
is $\mu = 4 \log_2(k)$.

In all of our experiments, we use the algorithmic parameter
$\alpha_j=L+\gamma\sqrt{j}$ (see \thmref{thm:da-bound}).
We set the smoothness parameter~$L$ to a constant, and the parameter~$\gamma$
to a constant divided by $\sqrt{b}$.
This is because $L$ depends only on the loss function
$f$, which does not change in DMB, while $\gamma$ is proportional to
$\sigma$, the standard deviation of the gradient-averages.  We chose
the constants by manually exploring the parameter space on a separate
held-out set of 500 million queries.

We report all of our results in terms of the average loss suffered by the online algorithm.
This is simply defined as
$(1/t) \sum_{i=1}^t f(w_i, z_i)$. We cannot plot regret,
as we do not know the offline risk minimizer $w^\star$.

\subsection{Serial Mini-Batching}

\begin{figure}[t]
\centering
\psfrag{x}[cc]{number of inputs}
\psfrag{y}[bc]{average loss}
\includegraphics[width=0.6\textwidth]{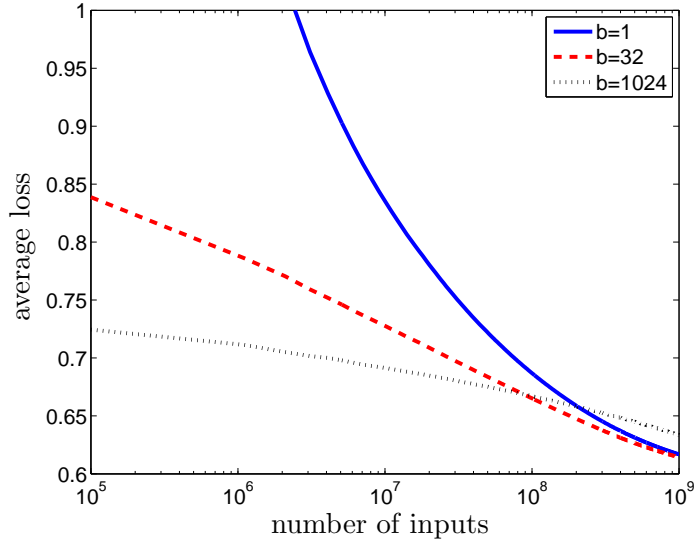}
\caption{ The effects of of the batch size when serial mini-batching on average loss. The mini-batches algorithm
was applied with different batch sizes. The x-axis presents the number of instances observed,
and the y-axis presents the average loss. Note that the case $b=1$ is the standard serial dual-averaging algorithm.}
\label{fig:minibatch}
\end{figure}

As a warm-up, we investigated the effects of modifying the mini-batch
size $b$ in a standard serial Euclidean dual averaging algorithm. This
is equivalent to running the distributed simulation with a cluster
size of $k=1$, with varying mini-batch size.  We ran the experiment
with $b = 1,2,4,\ldots,1024$. \figref{fig:minibatch}
shows the results for three representative mini-batch sizes. The
experiments tell an interesting story, which is more refined than our theoretical upper bounds. While the asymptotic worst-case theory implies that batch-size should have no significant effect, we actually observe that mini-batching accelerates the learning process on the first $10^8$ inputs. On the other hand, after $10^8$ inputs, a large mini-batch size begins to
hurt us and the smaller mini-batch sizes gain the lead. This behavior
is not an artifact of our choice of the parameters $\gamma$ and $L$,
as we observed a similar behavior for many different parameter
setting, during the initial stage when we tuned the parameters on a
held-out set.

Similar transient behaviors also exist for multi-step stochastic gradient
methods \citep[see, e.g.,][Section~4.3.2]{Polyak87}, where the multi-step
interpolation of the gradients also gives the smoothing effects as using
averaged gradients.
Typically such methods converge faster in the early iterations when the
iterates are far from the optimal solution and the relative value of the
stochastic noise is small, but become less effective asymptotically.

\subsection{Evaluating DBM}

\begin{figure}[t]
\begin{center}
\psfrag{x}[cc]{number of inputs}
\psfrag{y}[bc]{average loss}
\includegraphics[width=7cm]{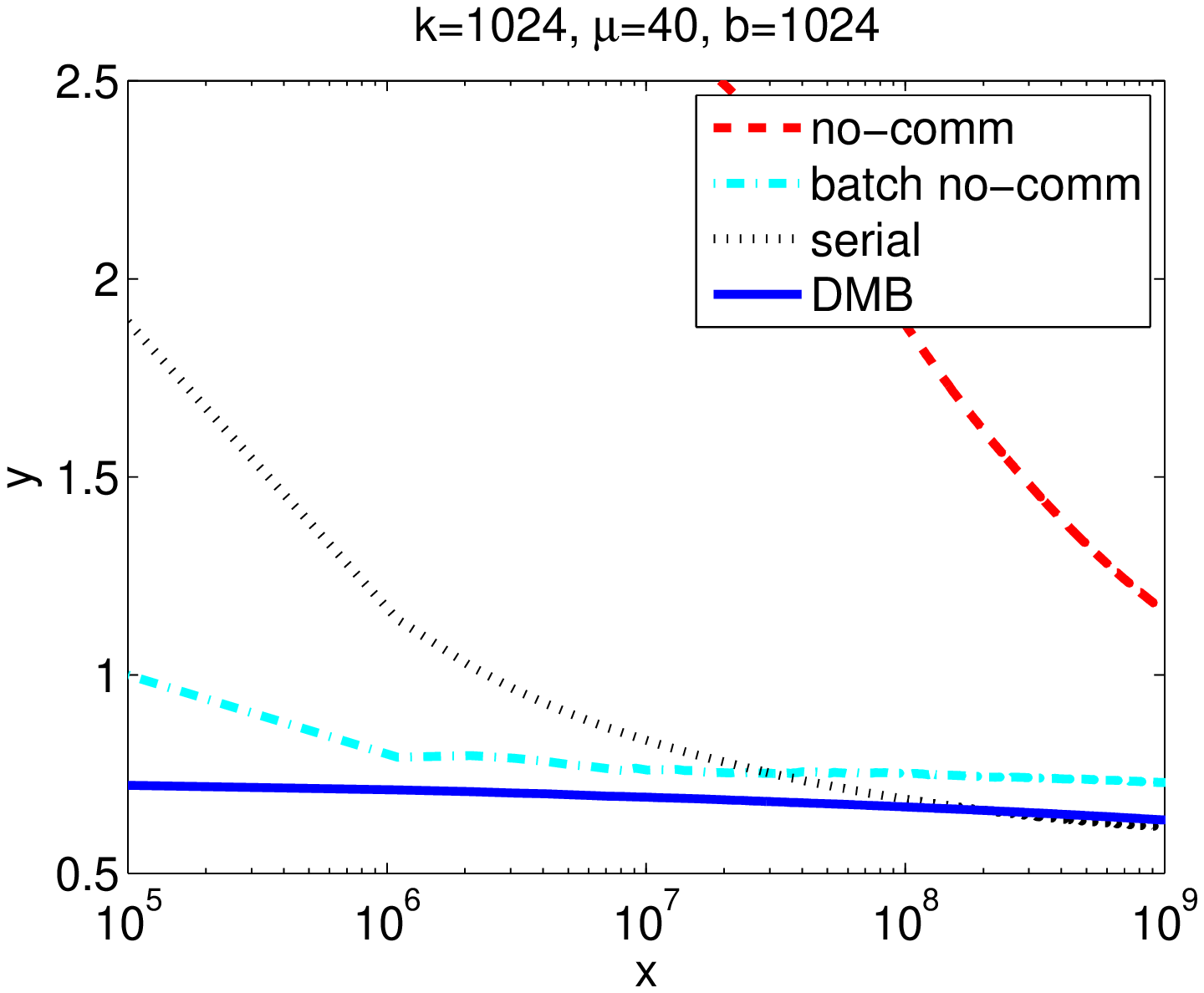}~~
\includegraphics[width=7cm]{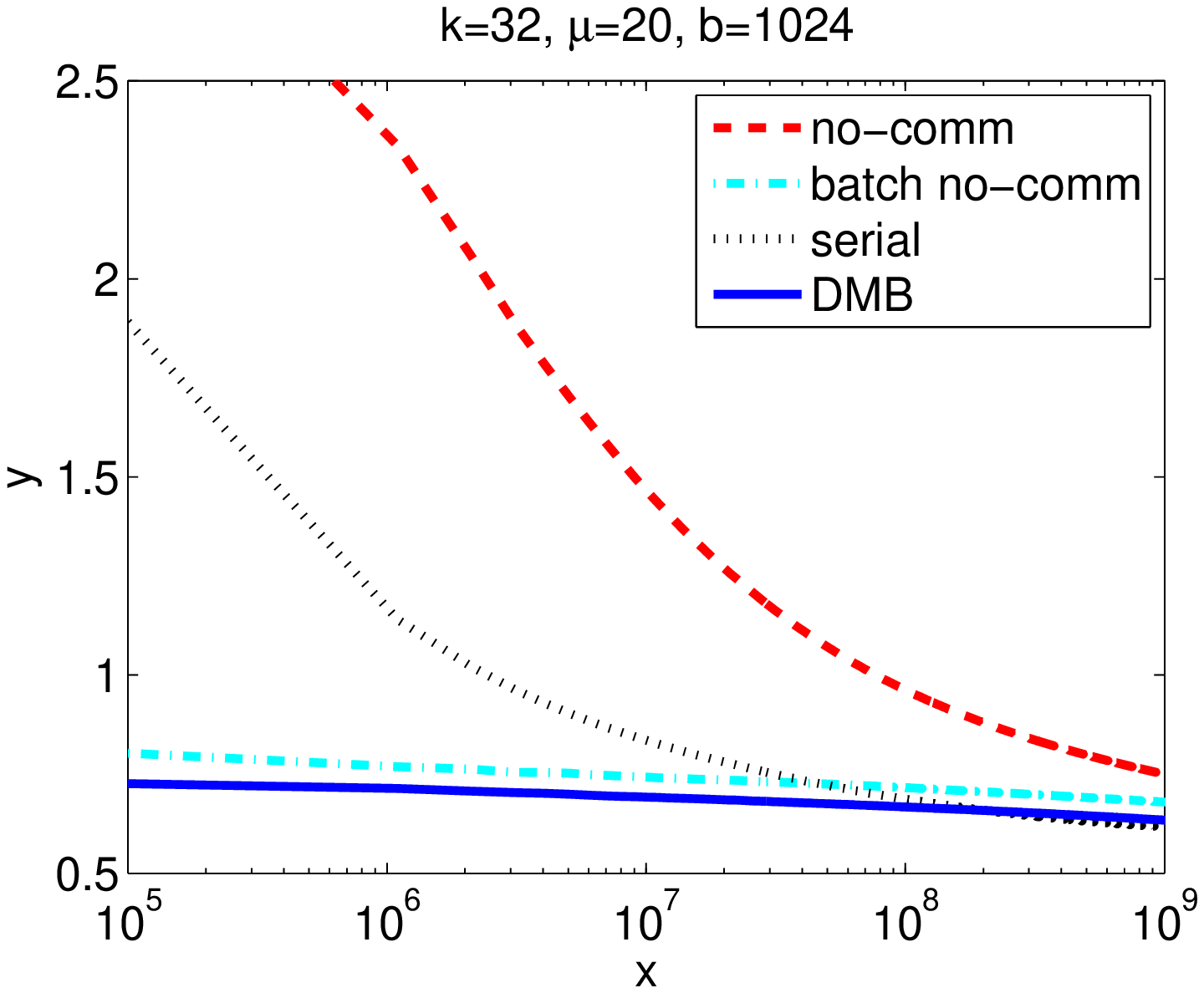}
\end{center}
\caption{ Comparing DBM with the serial algorithm and the
  no-communication distributed algorithm. Results for a large cluster of $k=1024$ machines are presented on the left. Results
for a small cluster of $k=32$ machines are presented on the right.}
\label{fig:dbm}
\end{figure}

Next, we compared the average loss of the DBM algorithm with the
average loss of the serial algorithm and the no-communication
algorithm (where each cluster node works independently).  We tried two
versions of the no-communication solution. The first version simply
runs $k$ independent copies of the serial prediction algorithm.  The
second version runs $k$ independent copies of the serial mini-batch
algorithm, with a mini-batch size of $128$.  We included the second
version of the no-communication algorithm after observing that
mini-batching has significant advantages even in the serial setting.
We experimented with various cluster sizes and various mini-batch
sizes. As mentioned above, we set the latency of the DBM algorithm to
$\mu = 4 \log_2(k)$. Taking a cue from our theoretical analysis, we
set the batch size to $b = m^{1/3} \simeq 1024$. We repeated the
experiment for various cluster sizes and the results were very
consistent. \figref{fig:dbm} presents the average loss of the three
algorithms for clusters of sizes $k=1024$ and $k=32$. Clearly, the
simple no-communication algorithm performs very
poorly compared to the others. The no-communication algorithm that uses
mini-batch updates on each node does surprisingly well,
but is still outperformed quite significantly by the DMB solution.

\subsection{The Effects of Latency}

\begin{figure}[t]
\begin{center}
\psfrag{x}[cc]{number of inputs}
\psfrag{y}[bc]{average loss}
\includegraphics[width=10cm]{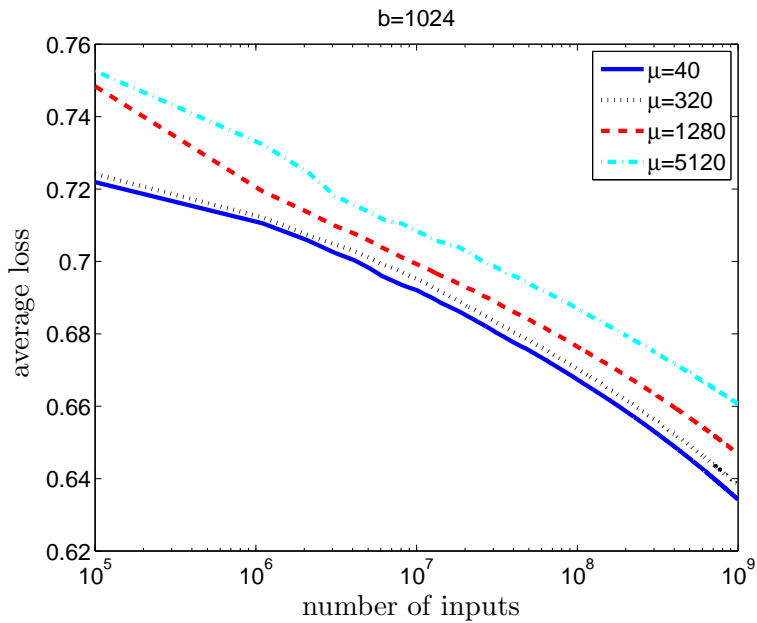}
\end{center}
\caption{The effects of increased network latency. The loss of the DMB algorithm is reported with different latencies as measured by $\mu$.
In all cases, the batch size is fixed at $b=1024$. }
\label{fig:latency}
\end{figure}

Network latency results in the DMB discarding gradients, and slows
down the algorithm's progress. The theoretical analysis shows that
this waste is negligible in the asymptotic worst-case sense. However,
latency will obviously have some negative effect on any finite prefix
of the input stream. We examined what would happen if the single-link
latency were much larger than our 0.5ms estimate (e.g., if the network
is very congested or if the cluster nodes are scattered across
multiple datacenters). Concretely, we set the cluster size to $k=1024$
nodes, the batch size to $b=1024$, and the single-link latency to
$0.5, 1, 2, \ldots, 512$ ms.  That is, $0.5$ms mimics a realistic 1Gbs
Ethernet link, while $512$ms mimics a network whose latency between
any two machines is 1024 times greater, namely, each vector-sum operation
takes a full second to complete. Note that $\mu$ is still computed as
before, namely, for latency $0.5 \cdot 2^p$, $\mu = 2^p4\log_2(k) =
2^p \cdot 40$.  \figref{fig:latency} shows how the average loss curve
reacts to four representative latencies. As expected, convergence rate
degrades monotonically with latency. When latency is set to be $8$
times greater than our realistic estimate for 1Gbs Ethernet, the
effect is minor.  When the latency is increased by a factor of $1024$,
the effect becomes more noticeable, but still quite small.

\subsection{Optimal Mini-Batch Size}

For our final experiment, we set out to find the optimal batch size
for our problem on a given cluster size.  Our theoretical analysis is
too crude to provide a sufficient answer to this question. The theory
basically says that setting $b = \Theta(m^\rho)$ is asymptotically optimal
for any $\rho \in (0,1/2)$, and that $b = \Theta(m^{1/3})$ is a pretty good
concrete choice. We have already seen that larger batch sizes
accelerate the initial learning phase, even in a serial setting.  We
set the cluster size to $k=32$ and set batch size to
$8,16,\ldots,4096$. Note that $b = 32$ is the case where each node
processes a single example before engaging in a vector-sum network operation.
\figref{fig:optb} depicts the average loss after $10^7, 10^8,$ and
$10^9$ inputs. As noted in the serial case, larger batch sizes
($b=512$) are beneficial at first ($m=10^7$), while smaller batch
sizes $(b=128$) are better in the end ($m=10^9$).

\begin{figure}[t]
\begin{center}
\psfrag{x}[cc]{$\log_2(b)$}
\includegraphics[width=5cm]{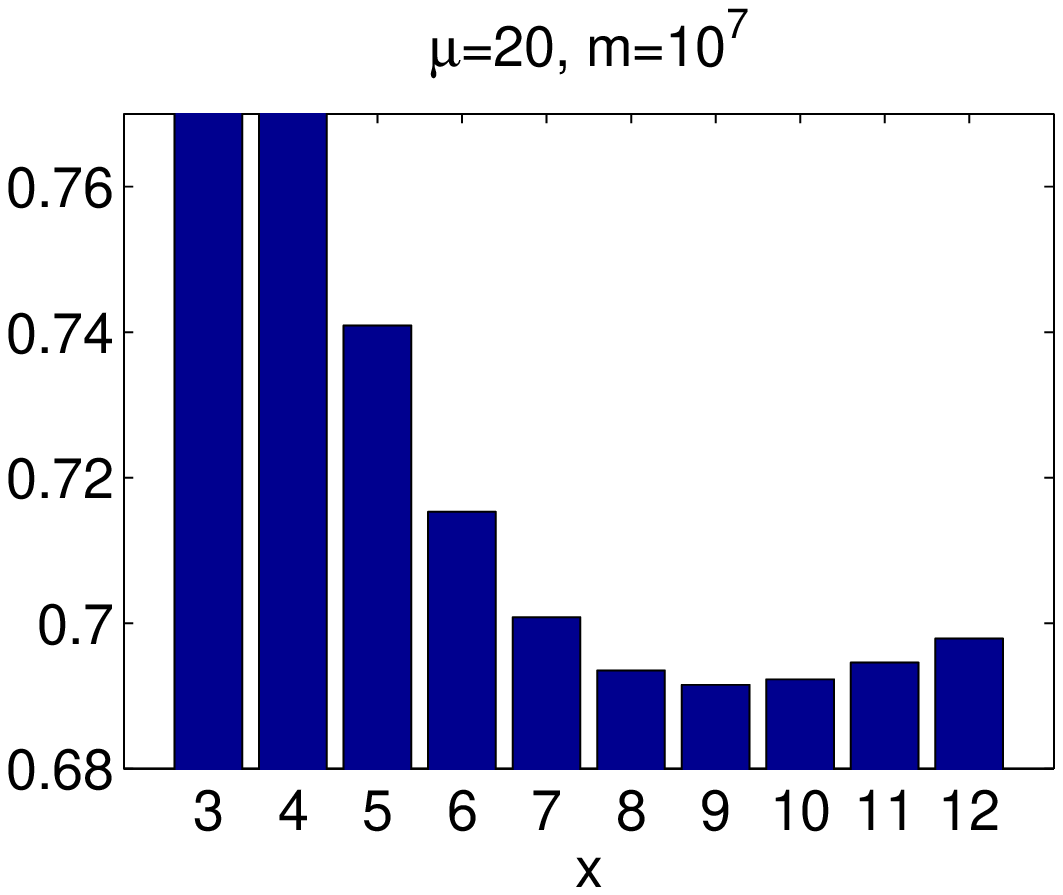}
\includegraphics[width=5cm]{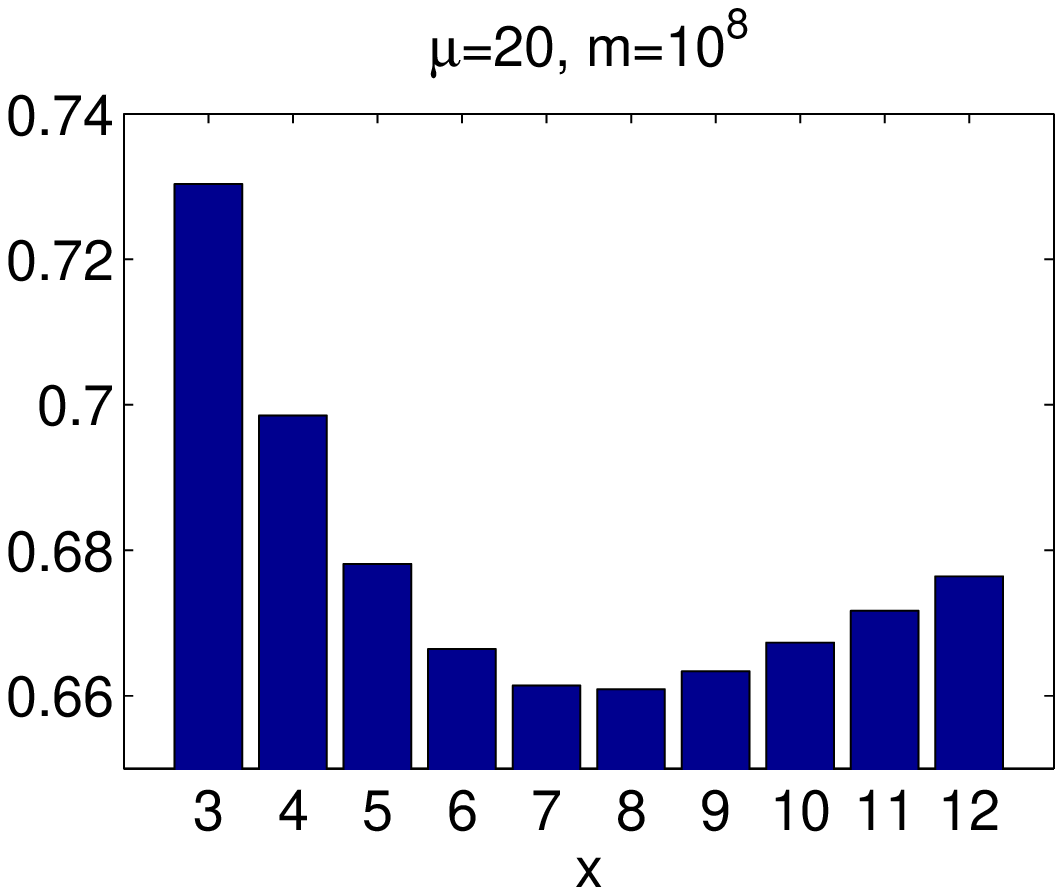}
\includegraphics[width=5cm]{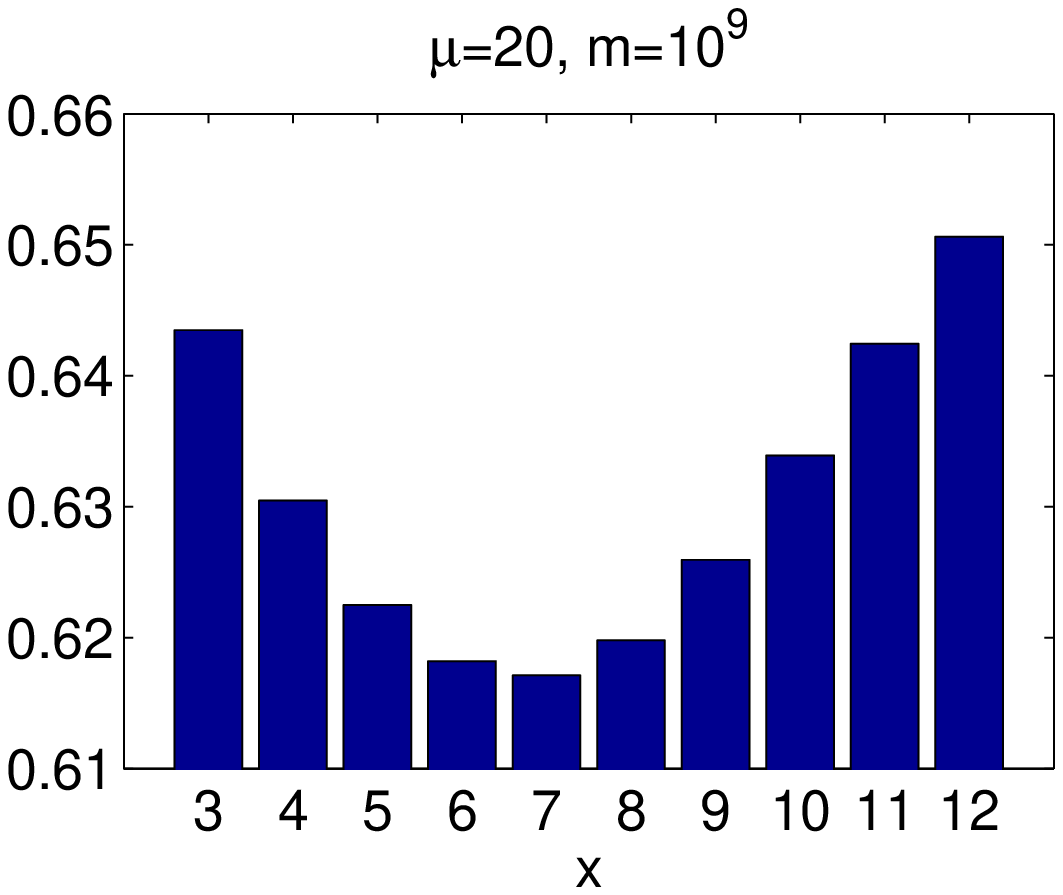}
\end{center}
\caption{ The effect of different mini-batch sizes ($b$) on the DBM algorithm. The DMB algorithm was applied with different batch sizes $b=8,\ldots,4096$.
 The loss is reported after $10^7$ instances (left), $10^8$ instances (middle) and $10^9$ instances (right).}
\label{fig:optb}
\end{figure}

\subsection{Discussion}
We presented an empirical evaluation of the serial mini-batch
algorithm and its distributed version, the DMB algorithm, on a
realistic web-scale online prediction problem. As expected, the DMB
algorithm outperforms the n\"aive no-communication algorithm. An
interesting and somewhat unexpected observation is the fact that the
use of large batches improves performance even in the serial setting.
Moreover, the optimal batch size seems to generally decrease with time.

We also demonstrated the effect of network latency on the performance
of the DMB algorithm. Even for relatively large values of $\mu$, the
degradation in performance was modest. This is an encouraging
indicator of the efficiency and robustness of the DMB algorithm, even
when implemented in a high-latency environment, such as a grid.

\section{Related Work}\label{sec:relatedwork}

In recent years there has been a growing interest in distributed online
learning and distributed optimization.

\cite{LangfordSmolaZinkevich09} address the distributed online
learning problem, with a similar motivation to ours: trying to address
the scalability problem of online learning algorithms which are
inherently sequential. The main observation
\cite{LangfordSmolaZinkevich09} make is that in many cases, computing
the gradient takes much longer than computing the update according to
the online prediction algorithm.  Therefore, they present a pipeline
computational model.  Each worker alternates between computing the
gradient and computing the update rule. The different workers are
synchronized such that no two workers perform an update
simultaneously.

Similar to results presented in this paper,
\cite{LangfordSmolaZinkevich09} attempted to show that it is possible
to achieve a cumulative regret of $O\left(\sqrt{m}\right)$ with $k$
parallel workers, compared to the $O\bigl(\sqrt{km}\bigr)$ of the
na\"ive solution. However their work suffers from a few
limitations.
First, their proofs only hold for unconstrained convex optimization
where no projection is needed.
Second, since they work in a model where one node at a time updates a shared
predictor, while the other nodes compute gradients, the scalability of
their proposed method is limited by the ratio between the time it
takes to compute a gradient to the time it takes to run the update
rule of the serial online learning algorithm.

In another related work, \cite{DuchiAgarwalWainwright10} present a
distributed dual averaging method for optimization over networks.
They assume the loss functions are Lipschitz continuous, but their
gradients may not be. Their method does not need synchronization to
average gradients computed at the same point. Instead, they employ a
distributed consensus algorithm on all the gradients generated by
different processors at different points.
When applied to the stochastic online prediction setting, even for
the most favorable class of communication graphs, with constant
spectral gaps (e.g., expander graphs), their best regret bound is
$O\bigl(\sqrt{km}\log(m)\bigr)$. This bound is no better than one
would get by running~$k$ parallel machines without communication (see
Section~\ref{sec:idealtrivial}).

In another recent work, \cite{ZinkevichWeimerSmolaLi10} study a method
where each node in the network runs the classic stochastic gradient
method, using random subsets of the overall data set, and only
aggregate their solutions in the end (by averaging their final weight
vectors).  In terms of online regret, it is obviously the same as
running~$k$ machines independently without communication.  So a more
suitable measure is the optimality gap (defined in
Section~\ref{sec:optimization}) of the final averaged predictor.  Even
with respect to this measure, their expected optimality gap does not
show advantage over running~$k$ machines independently. A similar
approach was also considered by \cite{NesterovVial08} and an
experimental study of such a method was reported in \cite{HHKPW03}.

A key difference between our DMB framework and many related work is
that DMB does not consider distributed comuting as a constraint to overcome.
Instead, our novel use of the variance-based regret bounds can exploit
parallel/distributed computing to obtain the asymptotic optimal regret bound.
Beyond the asymptotic optimality of our bounds, our work has other
features that set it apart from previous work. As far as we know, we
are the first to propose a general principled framework for
distributing many gradient-based update rule, with a concrete regret
analysis for the large family of mirror descent and dual averaging
update rules. Additionally, our work is the first to explicitly
include network latency in our regret analysis, and to theoretically
guarantee that a large latency can be overcome by setting parameters
appropriately.

\section{Conclusions and Further Research} \label{sec:conclusions}

The increase in serial computing power of modern computers is out-paced
by the growth rate of web-scale prediction problems and data sets.
Therefore, it is necessary to adopt techniques that can harness the
power of parallel and distributed computers.

In this work we studied the problems of distributed stochastic online
prediction and distributed stochastic optimization. We presented a
family of distributed online algorithms with asymptotically optimal
regret and optimality gap guarantees. Our algorithms use the
distributed computing infrastructure to reduce the variance of
stochastic gradients, which essentially reduces the noise in the
algorithm's updates. Our analysis shows that asymptotically, a
distributed computing system can perform as well as a hypothetical
fast serial computer. This result is far from trivial, and much of the
prior art in the field did not show any provable gain by using
distributed computers.

While the focus of this work is the theoretical analysis of a
distributed online prediction algorithm, we also presented experiments
on a large-scale real-world problem. Our experiments showed that
indeed the DMB algorithm outperforms other simple solutions. They also
suggested that improvements can be made by optimizing the batch size
and adjusting the learning rate based on empirical measures.

Our formal analysis hinges on the fact that the regret bounds of many
stochastic online update rules scale with the variance of the
stochastic gradients when the loss function is smooth. It is unclear
if smoothness is a necessary condition, or if it can be replaced with
a weaker assumption. In principle, our results apply in a broader
setting.  For any serial update rule $\phi$ with a regret bound
of $\psi(\sigma^2, m) = C \sigma \sqrt{m} + o\left(\sqrt{m}\right)$,
the DMB algorithm and its variants have the optimal regret bound of
$C \sigma \sqrt{m} + o\left(\sqrt{m}\right)$, provided that the bound
$\psi(\sigma^2,m)$ applies equally to the function~$f$
and to the function
\[
\bar f\left(w, \left(z_1,\ldots,z_b\right)\right)
~=~ \frac{1}{b} \sum_{s=1}^b f(w, z_s) ~.
\]
Note that this result holds independently of the network size~$k$ and
the network latency~$\mu$.  Extending our results to non-smooth
functions is an interesting open problem. A more ambitious challenge
is to extend our results to the non-stochastic case, where inputs may
be chosen by an adversary.


An important future direction is to develop distributed learning
algorithms that perform robustly and efficiently on heterogeneous
clusters and in asynchronous distributed environments. This direction has been further explored in \citet{DGSX11}. For example, one can use the following simple reformulation of the DMB algorithm in a master-workers setting: each worker process inputs at its own pace and periodically sends the accumulated gradients to the master; the master applies the update rule whenever the number of accumulated gradients reaches a certain threshold and
broadcasts the new predictor back to the workers. In a dynamic
environment, where the network can be partitioned and reconnected and
where nodes can be added and removed, a new master (or masters) can be
chosen as needed by a standard leader election algorithm. We refer the reader to \citet{DGSX11} for more details.

A central property of our method is that all of the gradients in a
batch must be taken at the same prediction point.
In an asynchronous distributed computing environment
\citep[see, e.g.,][]{TsitsiklisBerAth86,BertsekasTsitsiklis89}, this can be
quite wasteful. In order to reduce the waste generated by the need for
global synchronization, we may need to allow different nodes to
accumulate gradients at different yet close points.  Such a
modification is likely to work since the smoothness assumption
precisely states that gradients of nearby points are similar.
There have been extensive studies on distributed optimization with
inaccurate or delayed subgradient information, but mostly without the
smoothness assumption \citep[e.g.,][]{NedicBertsekasBorkar01,NedicOzdaglar09}.
We believe that our main results under the smoothness assumption can be
extended to asynchronous and distributed environments as well.

\acks{The authors are grateful to Karthik Sridharan for spotting the fact that
for non-Euclidean spaces, our variance-reduction argument had to be modified
with a space-dependent constant
(see the discussion at the end of Section~\ref{sec:dmb-algorithm}).
We also thank the reviewers for their helpful comments.
}

\appendix

\section{Smooth Stochastic Online Prediction in the Serial Setting}
\label{app:serial}

In this appendix, we prove expected regret bounds for stochastic dual
averaging and stochastic mirror descent applied to smooth loss
functions.
In the main body of the paper, we discussed only the
Euclidean special case of these algorithms, while here we present the
algorithms and regret bounds in their full generality.
In particular, \thmref{thm:gd-bound} is a special case of \thmref{t:md-regret},
and \thmref{thm:da-bound} is a special case of \thmref{t:da-regret}.

Recall that we observe a stochastic sequence of inputs
$z_1,z_2,\ldots$, where each $z_i \in \Zcal$. Before observing each
$z_i$ we predict $w_i \in W$, and suffer a loss $f(w_i,z_i)$. We
assume $W$ is a closed convex subset of a finite dimensional
vector space $\Vcal$ with endowed norm $\norm{\cdot}$. We assume that $f(w,z)$
is convex and differentiable in $w$, and we use $\nabla_w f(w,z)$ to
denote the gradient of $f$ with respect to its first
argument. $\nabla_w f(w,z)$ is a vector in the dual space $\Vcal^*$,
with endowed norm $\norm{\cdot}_*$.

We assume that $f(\cdot,z)$ is $L$-smooth for any realization
of $z$. Namely, we assume that $f(\cdot,z)$ is differentiable and that
$$
\forall\, z\in\Zcal, \quad \forall \, w,w' \in W,\qquad
\norm{\nabla_w f(w,z) - \nabla_w f(w',z)}_* \leq L \norm{w - w'} ~.
$$
We define $F(w) = \E_z [f(w,z)]$ and note that $\nabla_w
F(w)= \E_z [\nabla_w f(w,z)]$ (see \citealp{RockafellarWe82}). This
implies that
$$
\forall\, w,w' \in W, \qquad
\norm{\nabla_w F(w) - \nabla_w F(w')}_* \leq L \norm{w-w'}~.
$$
In addition, we assume that there exists a constant $\sigma \geq 0$ such that
$$
\forall\, w \in W, \qquad
\E_z[ \norm{\nabla_w f(w,z) - \nabla_w E_z[f(w,z)] }_*^2 ] \leq \sigma^2 ~.
$$
We assume that $w^\star = \argmin_{w \in W} F(w)$ exists, and
we abbreviate $F^\star = F(w^\star)$.

Under the above assumptions, we are concerned with bounding the expected regret $\E[R(m)]$,
where regret is defined as
$$
R(m) ~=~ \sum_{i=1}^m \left( f(w_i,z_i) - f(w^\star,z_i) \right) ~.
$$
In order to present the algorithms in their full generality, we first recall
the concepts of strongly convex function and Bregman divergence.

A function $h:W\to\reals\cup\{+\infty\}$ is said to be
$\mu$-\emph{strongly convex} with respect to $\norm{\cdot}$
if
$$
\forall \alpha \in [0,1], \quad \forall u,v \in W, \quad
h(\alpha u + (1-\alpha) v) \leq \alpha h(u) + (1-\alpha) h(v)
 - \frac{\mu}{2}\alpha(1-\alpha)\norm{u-v}^2 ~.
$$
If $h$ is $\mu$-strongly convex then for any $u \in \dom h$,
and $v\in\dom h$ that is sub-differentiable, then
$$
\forall s \in \partial h(v), \quad
h(u) \geq h(v) + \inner{s, u - v} + \frac{\mu}{2}\norm{u-v}^2~~.
$$
(See, e.g., \citealp{GoebelRo08}.)
If a function $h$ is
strictly convex and differentiable (on an open set contained
in $\dom h$), then we can defined the Bregman divergence generated by~$h$ as
$$
d_h(u,v) = h(u) - h(v) - \langle \nabla h(v),\,  u - v \rangle~.
$$
We often drop the subscript $h$ in $d_h$ when it is obvious from the context.
Some key properties of the Bregman divergence are:
\begin{itemize}
\item $d(u,v) \geq 0$, and the equality holds if and only if $u=v$.
\item In general $d(u,v)\neq d(v,u)$, and $d$ may not satisfy the
triangle inequality.
\item The following \emph{three-point identity} follows directly from
the definition:
$$
d(u,w) = d(u,v) + d(v,w) + \inner{ \nabla h(v) - \nabla h(w), u-v} ~~.
$$
\end{itemize}
The following inequality is a direct consequence of the $\mu$-strong convexity
of~$h$:
\begin{equation}\label{eqn:sc-norm-bound}
d(u,v) \geq \frac{\mu}{2} \|u-v\|^2 ~.
\end{equation}

\subsection{Stochastic Dual Averaging}
The proof techniques for the stochastic dual averaging method are
adapted from those for the accelerated algorithms presented in
\citet{Tseng08} and \citet{Xiao10}.

Let $h:W \to \reals$ be a $1$-strongly convex function.
Without loss of generality, we can assume that $\min_{w\in W} h(w) = 0$.
In the stochastic dual averaging method, we predict each $w_i$ by
\begin{equation}\label{e:stoch-da}
w_{i+1} = \argmin_{w \in W} \left\{ \left\langle \sum_{j=1}^{i} g_j,
w \right\rangle + (L + \beta_{i+1}) h(w) \right\} ~~,
\end{equation}
where $g_j$ denotes the stochastic gradient $\nabla_w f(w_j,z_j)$,
and $(\beta_i)_{i \geq 1}$ is a sequence of positive and nondecreasing
parameters (i.e., $\beta_{i+1} \geq \beta_{i}$). As a special case of the above,
we initialize $w_1$ to
\begin{equation}\label{e:da-ini}
w_1 = \argmin_{w\in W} h(w)~~.
\end{equation}

We are now ready to state a bound on the expected regret
of the dual averaging method, in the smooth stochastic case.
\begin{theorem}\label{t:da-regret}
The expected regret of the stochastic dual averaging method is bounded as
$$
\forall m, \quad \E[R(m)] \leq  (F(w_1)-F(w^\star))
+ (L+\beta_m) h(w^\star)
+ \frac{\sigma^2}{2} \sum_{i=1}^{m-1}\frac{1}{\beta_i}.
$$
\end{theorem}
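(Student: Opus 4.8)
The plan is to run the standard dual-averaging potential-function argument, adapted so that the refined $L$-smoothness assumption is exploited to make only the gradient \emph{variance} $\sigma^2$ — rather than a bound on $\norm{g_i}_*$ — enter the estimate. First I would reduce the claim to a statement about $F$: since $w_i$ depends only on $z_1,\dots,z_{i-1}$ and $\E_z f(w,z)=F(w)$, taking expectations gives $\E[R(m)]=\sum_{i=1}^m\E[F(w_i)]-mF(w^\star)$, so it suffices to bound $\sum_{i=1}^m\E[F(w_i)]-mF(w^\star)$. I would write $g_i=\nabla_w F(w_i)+\xi_i$, where the only facts used about the stochastic error are $\E[\xi_i\mid w_i]=0$ and $\E[\norm{\xi_i}_*^2\mid w_i]\le\sigma^2$. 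Then introduce the stochastic linearizations $\hat\ell_j(w)=F(w_j)+\inner{g_j,\,w-w_j}$, so that (the additive constants being irrelevant to the $\argmin$) the update \eqref{e:stoch-da} reads $w_{i+1}=\argmin_{w\in W}\{\sum_{j=1}^i\hat\ell_j(w)+\gamma_{i+1}h(w)\}$ with $\gamma_i:=L+\beta_i$.

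The heart of the proof is a telescoping estimate on the optimal values
\[
U_i ~:=~ \min_{w\in W}\Big\{\textstyle\sum_{j=1}^i\hat\ell_j(w)+\gamma_{i+1}h(w)\Big\},
\qquad U_0 = \min_{w\in W}\gamma_1 h(w)=0 ,
\]
where the last equality uses the normalization $\min_{w\in W}h(w)=0$. Since the objective defining $U_{i-1}$ is $\gamma_i$-strongly convex with minimizer $w_i$, the strong-convexity inequality stated at the start of this appendix gives $\sum_{j=1}^{i-1}\hat\ell_j(w_{i+1})+\gamma_i h(w_{i+1})\ge U_{i-1}+\tfrac{\gamma_i}{2}\norm{w_{i+1}-w_i}^2$; adding $\hat\ell_i(w_{i+1})+(\gamma_{i+1}-\gamma_i)h(w_{i+1})$ to both sides and using $\gamma_{i+1}\ge\gamma_i$ together with $h\ge 0$ yields the recursion
\[
U_i ~\ge~ U_{i-1} + \hat\ell_i(w_{i+1}) + \tfrac{\gamma_i}{2}\norm{w_{i+1}-w_i}^2 .
\]

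Next I would process the right-hand side. The $L$-smoothness of $F$ gives $F(w_i)+\inner{\nabla_w F(w_i),w_{i+1}-w_i}\ge F(w_{i+1})-\tfrac{L}{2}\norm{w_{i+1}-w_i}^2$, and since $\gamma_i-L=\beta_i$, the leftover quadratic $\tfrac{\beta_i}{2}\norm{w_{i+1}-w_i}^2$ is exactly what is needed to absorb the noise cross term through Young's inequality, $\inner{\xi_i,w_{i+1}-w_i}\ge-\tfrac{1}{2\beta_i}\norm{\xi_i}_*^2-\tfrac{\beta_i}{2}\norm{w_{i+1}-w_i}^2$. Combining gives $U_i\ge U_{i-1}+F(w_{i+1})-\tfrac{1}{2\beta_i}\norm{\xi_i}_*^2$, which telescopes to $U_{m-1}\ge\sum_{i=2}^m F(w_i)-\sum_{i=1}^{m-1}\tfrac{1}{2\beta_i}\norm{\xi_i}_*^2$. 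For a matching upper bound I would evaluate the objective defining $U_{m-1}$ at $w^\star$ and use convexity of $F$ (so that $\hat\ell_j(w^\star)\le F(w^\star)+\inner{\xi_j,w^\star-w_j}$), obtaining $U_{m-1}\le(m-1)F(w^\star)+\sum_{j=1}^{m-1}\inner{\xi_j,w^\star-w_j}+\gamma_m h(w^\star)$. Subtracting the two bounds, adding $F(w_1)-F(w^\star)$ to both sides, and taking expectations then makes the martingale terms $\E\inner{\xi_j,w^\star-w_j}$ vanish and replaces each $\E\norm{\xi_j}_*^2$ by $\sigma^2$, leaving precisely $(F(w_1)-F(w^\star))+(L+\beta_m)h(w^\star)+\tfrac{\sigma^2}{2}\sum_{i=1}^{m-1}\tfrac1{\beta_i}$.

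The step I expect to demand the most care is the constant bookkeeping in the recursion: one must verify that the strong-convexity modulus $\gamma_i=L+\beta_i$ supplied by $\gamma_i h$ splits exactly as ``$L$ to cancel the smoothness penalty'' plus ``$\beta_i$ to pay for Young's inequality on $\inner{\xi_i,w_{i+1}-w_i}$'', with nothing left over and nothing borrowed — this is exactly why \eqref{e:stoch-da} is stated with coefficient $L+\beta_{i+1}$ rather than $\beta_{i+1}$ alone. A secondary point is to keep the telescoping indices aligned (the empty sum giving $U_0=0$, the shift between $F(w_{i+1})$ and $F(w_i)$, and the appearance of $\gamma_m$ from the final term), all of which rely on the monotonicity $\beta_{i+1}\ge\beta_i$ and on $h\ge 0$ after normalization.
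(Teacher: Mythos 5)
Your proposal is correct and follows essentially the same route as the paper's proof: the stochastic linearizations $\hat\ell_i$, the split of the modulus $L+\beta_i$ into an $L$-part canceling the smoothness penalty and a $\beta_i$-part paying for Young's inequality on the noise cross-term, the evaluation at $w^\star$, and the martingale/variance step at the end are all identical. The only cosmetic difference is that you telescope the named optimal values $U_i$ explicitly, whereas the paper sums the equivalent per-step inequality (its \lemref{l:composite-min} playing the role of your strong-convexity growth bound).
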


The optimal choice of $\beta_i$
is exactly of order $\sqrt{i}$.  More specifically, let $\beta_i = \gamma \sqrt{i}$,
where~$\gamma$ is a positive parameter. Then \thmref{t:da-regret} implies that
\[
\E[R(m)] \leq (F(w_1)-F(w^\star)) + L h(w^\star) +
\left( \gamma h(w^\star) + \frac{\sigma^2}{\gamma} \right) \sqrt{m}.
\]
Choosing $\gamma=\sigma/\sqrt{h(w^\star)}$ gives
\[
\E[R(m)] \leq (F(w_1)-F(w^\star))+ L h(w^\star)
+ \left(2 \sigma\sqrt{h(w^\star)}\right) \sqrt{m}.
\]
If $\nabla F(w^\star)=0$ (this is certainly the case if $W$ is the whole space),
then we have
\[
F(w_1)-F(w^\star) \leq \frac{L}{2}\|w_1-w^\star\|^2 \leq L h(w^\star).
\]
Then the expected regret bound can be simplified as
\[
\E [R(m)] \leq 2L h(w^\star)
+ \left(2 \sigma\sqrt{h(w^\star)}\right) \sqrt{m}.
\]

To prove \thmref{t:da-regret} we require the following fundamental
lemma, which can be found, for example, in \cite{Nesterov05}, \cite{Tseng08}
and \cite{Xiao10}.
\begin{lemma}\label{l:composite-min}
Let~$W$ be a closed convex set, $\varphi$ be a convex function on~$W$,
and $h$ be $\mu$-strongly convex on~$W$ with respect to $\norm{\cdot}$.
If
\[
w^+ = \argmin_{w\in W} \bigl\{ \varphi(w) + h(w) \bigr\},
\]
then
\[
\forall\, w \in W, \qquad
\varphi(w) + h(w) \geq \varphi(w^+) + h(w^+) + \frac{\mu}{2} \|w-w^+\|^2.
\]
\end{lemma}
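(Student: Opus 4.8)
The plan is to observe that the objective $\psi := \varphi + h$ is itself $\mu$-strongly convex with respect to $\norm{\cdot}$ --- adding the plain convexity (Jensen) inequality for $\varphi$ to the strong-convexity inequality for $h$ gives exactly the modulus-$\mu$ inequality for $\psi$ --- and then to combine this with the fact that $w^+$ minimizes $\psi$ over $W$ through an elementary segment argument, thereby avoiding any appeal to subdifferentials or normal cones.

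Concretely, I would fix an arbitrary $w \in W$ and, for $t \in (0,1]$, set $w_t = (1-t)\,w^+ + t\,w$, which lies in $W$ by convexity. Writing the $\mu$-strong convexity inequality for $\psi$ with weights $1-t$ on $w^+$ and $t$ on $w$ yields
\[
\psi(w_t) ~\leq~ (1-t)\,\psi(w^+) + t\,\psi(w) - \frac{\mu}{2}\,t(1-t)\,\norm{w-w^+}^2 .
\]
Since $w_t \in W$ and $w^+$ minimizes $\psi$ over $W$, we also have $\psi(w^+) \leq \psi(w_t)$. Chaining the two inequalities, subtracting $(1-t)\psi(w^+)$ from both sides, dividing by $t > 0$, and letting $t \to 0^+$ gives $\psi(w^+) \leq \psi(w) - \frac{\mu}{2}\norm{w-w^+}^2$, which is exactly the assertion $\varphi(w) + h(w) \geq \varphi(w^+) + h(w^+) + \frac{\mu}{2}\norm{w-w^+}^2$.

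I do not expect a genuine obstacle here; the only step needing a sentence of care is the claim that $\psi$ retains modulus $\mu$, and that is immediate. For completeness I would note the alternative, which is the route implicit in the references cited just before the lemma: use the first-order optimality condition $0 \in \partial\psi(w^+) + N_W(w^+)$ to choose $s \in \partial\psi(w^+)$ with $\langle s,\, w-w^+\rangle \geq 0$ for every $w \in W$, and then invoke the subgradient form of strong convexity, $\psi(w) \geq \psi(w^+) + \langle s,\, w-w^+\rangle + \frac{\mu}{2}\norm{w-w^+}^2$. This works equally well but drags in more convex-analytic machinery (subdifferential calculus, normal cones), so the segment argument seems the cleaner choice, especially since the minimizer $w^+$ is assumed to exist and closedness of $W$ is not actually used.
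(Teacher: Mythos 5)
Your proof is correct. Note that the paper itself does not prove this lemma at all---it simply cites \citet{Nesterov05}, \citet{Tseng08} and \citet{Xiao10}, where the standard argument is the one you describe as the ``alternative'': take $s\in\partial(\varphi+h)(w^+)$ satisfying the first-order optimality condition $\langle s, w-w^+\rangle\geq 0$ for all $w\in W$, then apply the subgradient form of strong convexity. Your segment argument reaches the same conclusion with less machinery: the sum $\psi=\varphi+h$ inherits modulus $\mu$ (adding the convexity inequality for $\varphi$ to the strong-convexity inequality for $h$), and then the chain $\psi(w^+)\leq\psi(w_t)\leq(1-t)\psi(w^+)+t\psi(w)-\frac{\mu}{2}t(1-t)\norm{w-w^+}^2$, followed by dividing by $t$ and letting $t\to 0^+$, gives the claim directly. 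This buys you independence from subdifferential calculus and normal cones, and it does not even require $h$ to be differentiable or $W$ to be closed (only that the minimizer $w^+$ exists, which is hypothesized); the subgradient route, by contrast, is the one that generalizes most directly to the Bregman-divergence version in Lemma~\ref{l:Bregman-reg}, where the three-point identity rather than a norm bound is needed. Either way the argument is complete; your version is the cleaner self-contained proof of the statement as written.
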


With \lemref{l:composite-min}, we are now ready to prove \thmref{t:da-regret}.
\begin{proof}
First, we define the linear functions
\[
\ell_i(w) = F(w_i) + \langle \nabla F(w_i), w-w_i \rangle, \qquad
\forall\, i\geq 1,
\]
and (using the notation $g_i=\nabla f(w_i,z_i)$)
\[
\hat \ell_i(w)
= F(w_i) + \langle g_i, w-w_i \rangle
= \ell_i(w) + \langle q_i, w-w_i \rangle ,
\]
where
\[
q_i = g_i - \nabla F(w_i).
\]
Therefore, the stochastic dual averaging method specified in \eqref{e:stoch-da} is
equivalent to
$$
w_i = \argmin_{w\in W} \left\{ \sum_{j=1}^{i-1} \hat \ell_j(w)
+ (L+\beta_i) h(w) \right\}.
$$

Using the smoothness assumption, we have (e.g., \citealt[Lemma~1.2.3]{Nesterov04})
\begin{eqnarray}
F(w_{i+1})
&\leq& \ell_i(w_{i+1}) + \frac{L}{2} \|w_{i+1}-w_i\|^2 \nonumber \\
&=& \hat \ell_i(w_{i+1}) + \frac{L+\beta_i}{2} \|w_{i+1}-w_i\|^2
   - \langle q_i, w_{i+1}-w_i\rangle - \frac{\beta_i}{2}\|w_{i+1}-w_i\|^2
   \nonumber\\
&\leq & \hat \ell_i(w_{i+1}) + \frac{L+\beta_i}{2} \|w_{i+1}-w_i\|^2
   + \|q_i\|_* \|w_{i+1}-w_i\| - \frac{\beta_i}{2}\|w_{i+1}-w_i\|^2
   \nonumber\\
&=&  \hat \ell_i(w_{i+1}) + \frac{L+\beta_i}{2} \|w_{i+1}-w_i\|^2
  -\left(\frac{1}{\sqrt{2\beta_i}}\|q_i\|_* - \sqrt{\frac{\beta_i}{2}}
 \|w_{i+1}-w_i\|\right)^2 +\frac{\|q_i\|_*^2}{2\beta_i} \nonumber\\
&\leq& \hat \ell_i(w_{i+1}) + \frac{L+\beta_i}{2} \|w_{i+1}-w_i\|^2
  +\frac{\|q_i\|_*^2}{2\beta_i}. \label{e:psi-upperbound}
\end{eqnarray}
Next we use \lemref{l:composite-min} with
$\varphi(w)=\sum_{j=1}^{i-1} \hat \ell_j(w)$ and $\mu=(L+\beta_i)$,
\[
\sum_{j=1}^{i-1} \hat \ell_j(w_{i+1}) + (L+\beta_i) h(w_{i+1})
\geq \sum_{j=1}^{i-1} \hat \ell_j(w_i) + (L+\beta_i) h(w_i)
+ \frac{L+\beta_i}{2}\|w_{i+1}-w_i\|^2,
\]
Combining the above inequality with \eqref{e:psi-upperbound},
we have
\begin{eqnarray*}
F(w_{i+1})
&\leq& \hat \ell_i(w_{i+1}) + \sum_{j=1}^{i-1} \hat \ell_j(w_{i+1})
  + (L+\beta_i) h(w_{i+1}) - \sum_{j=1}^{i-1} \hat \ell_j(w_i)
  - (L+\beta_i) h(w_i)+\frac{\|q_i\|_*^2}{2\beta_i}\\
&\leq&  \sum_{j=1}^i \hat \ell_j(w_{i+1}) + (L+\beta_{i+1}) h(w_{i+1})
    -\sum_{j=1}^{i-1} \hat \ell_j(w_i) - (L+\beta_i) h(w_i)
  +\frac{\|q_i\|_*^2}{2\beta_i},
\end{eqnarray*}
where in the last inequality, we used the assumptions
$\beta_{i+1}>\beta_i>0$ and $h(w_{i+1})\geq0$.
Summing the above inequality from $i=1$ to $i=m-1$,
we have
\begin{eqnarray*}
\sum_{i=2}^m F(w_i)
&\leq& \sum_{i=1}^{m-1} \hat \ell_i(w_m) + (L+\beta_m) h(w_m)
      +\sum_{i=1}^{m-1}\frac{\|q_i\|_*^2}{2\beta_i} \\
&\leq& \sum_{i=1}^{m-1} \hat \ell_i(w^\star) + (L+\beta_m) h(w^\star)
      +\sum_{i=1}^{m-1}\frac{\|q_i\|_*^2}{2\beta_i} \\
&\leq& \sum_{i=1}^{m-1} \ell_i(w^\star) + (L+\beta_m) h(w^\star)
      +\sum_{i=1}^{m-1}\frac{\|q_i\|_*^2}{2\beta_i}
      +\sum_{i=1}^{m-1} \langle q_i, w^\star - w_i \rangle\\
&\leq& (m-1) F(w^\star) + (L+\beta_i) h(w^\star)
      +\sum_{i=1}^{m-1}\frac{\|q_i\|_*^2}{2\beta_i}
      +\sum_{i=1}^{m-1} \langle q_i, w^\star - w_i \rangle.
\end{eqnarray*}
Therefore,
\begin{equation}\label{e:sumFbd}
\sum_{i=2}^m \bigl( F(w_i) - F(w^\star) \bigr)
\leq (L+\beta_m) h(w^\star)
+\sum_{i=1}^{m-1}\frac{\|q_i\|_*^2}{2\beta_i}
+\sum_{i=1}^{m-1} \langle q_i, w^\star - w_i \rangle.
\end{equation}
Notice that each $w_i$ is a deterministic function of
$z_1,\ldots,z_{i-1}$, so
\[
\E_{z_i} \bigl(\langle q_i,w^\star-w_i\rangle \,|\,
z_1,\ldots,z_{i-1} \bigr)=0
\]
by recalling the definition $q_i=\nabla f(w_i,z_i)-\nabla F(w_i)$.
Taking expectation of both sides of \eqref{e:sumFbd}
with respect to $z_1,\ldots,z_m$,
and adding the term $F(w_1)-F(w^\star)$,
we have
\[
\E \sum_{i=1}^m \bigl( F(w_i) - F(w^\star) \bigr)
\leq  F(w_1) - F(w^\star) + (L+\beta_m) h(w^\star)
+\sum_{i=1}^{m-1}\frac{\sigma^2}{2\beta_i} .
\]
Theorem~\ref{t:da-regret} is proved by further noticing
\[
\E\, f(w_i,z_i) = \E\, F(w_i), \qquad
\E\, f(w^\star,z_i) = F(w^\star),
\qquad \forall\, i\geq 1,
\]
which are due to the fact that $w_i$ is a deterministic function of
$z_0,\ldots,z_{i-1}$.
%
\end{proof}

\subsection{Stochastic Mirror Descent}
Variance-based convergence rates for the stochastic Mirror Descent methods
are due to \citet{JuditskyNT11},
and were extended to an accelerated
stochastic Mirror Descent method by \citet{Lan09}.
For completeness, we adapt their proofs to the context of regret
for online prediction problems.

Again let $h:W \to \reals$ be a differentiable $1$-strongly convex
function with $\min_{w\in W} h(w) = 0$.  Also let $d$ be the
Bregman divergence generated by~$h$.  In the stochastic mirror descent
method, we use the same initialization as in the dual averaging
method~(see \eqref{e:da-ini}) and then we set
$$
w_{i+1} = \argmin_{w \in W} \Bigl\{ \langle g_{i}, w \rangle + (L+\beta_{i})
d(w,w_{i}) \Bigr\}, \qquad i\geq 1.
$$
As in the dual averaging method, we assume that the sequence
$(\beta_i)_{i\geq 1}$ to be positive and nondecreasing.

\begin{theorem}\label{t:md-regret}
Assume that the convex set~$W$ is closed and bounded.
In addition assume $d(u,v)$ is bounded on~$W$ and let
\[
D^2 = \max_{u,v\in W} d(u,v).
\]
Then the expected regret of the stochastic mirror descent method is bounded as
\[
\E[R(m)] \leq (F(w_1)-F(w^\star)) + (L+\beta_m) D^2
+ \frac{\sigma^2}{2} \sum_{i=1}^{m-1}\frac{1}{\beta_i}.
\]
\end{theorem}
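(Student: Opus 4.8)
The plan is to mirror the proof of \thmref{t:da-regret}, swapping the dual-averaging optimality step for the analogous mirror-descent one. First I would introduce the exact and stochastic linearizations $\ell_i(w)=F(w_i)+\langle\nabla F(w_i),w-w_i\rangle$ and $\hat\ell_i(w)=F(w_i)+\langle g_i,w-w_i\rangle=\ell_i(w)+\langle q_i,w-w_i\rangle$, where $q_i=g_i-\nabla F(w_i)$. Since $w_i$ is a deterministic function of $z_1,\ldots,z_{i-1}$, we have $\E(\langle q_i,w^\star-w_i\rangle\mid z_1,\ldots,z_{i-1})=0$ and $\E\|q_i\|_*^2\le\sigma^2$, exactly as in the dual-averaging argument.

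The first ingredient is a one-step inequality of the same shape as \eqref{e:psi-upperbound}. From $L$-smoothness of $F$ (e.g., \citealt[Lemma~1.2.3]{Nesterov04}) we get $F(w_{i+1})\le\ell_i(w_{i+1})+\tfrac{L}{2}\|w_{i+1}-w_i\|^2$. I would then bound $\tfrac{L}{2}\|w_{i+1}-w_i\|^2\le L\,d(w_{i+1},w_i)$ via \eqref{eqn:sc-norm-bound} (with $\mu=1$), write $\ell_i(w_{i+1})=\hat\ell_i(w_{i+1})-\langle q_i,w_{i+1}-w_i\rangle$, and absorb the noise cross-term by Young's inequality, $-\langle q_i,w_{i+1}-w_i\rangle\le\|q_i\|_*\|w_{i+1}-w_i\|\le\tfrac{\|q_i\|_*^2}{2\beta_i}+\beta_i\,d(w_{i+1},w_i)$. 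This yields $F(w_{i+1})\le\hat\ell_i(w_{i+1})+(L+\beta_i)\,d(w_{i+1},w_i)+\tfrac{\|q_i\|_*^2}{2\beta_i}$.

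The second ingredient uses the mirror-descent update itself. From the optimality condition for $w_{i+1}=\argmin_{w\in W}\{\langle g_i,w\rangle+(L+\beta_i)d(w,w_i)\}$ (equivalently, \lemref{l:composite-min} with $\varphi=\langle g_i,\cdot\rangle$ and the $(L+\beta_i)$-strongly convex penalty $(L+\beta_i)d(\cdot,w_i)$) together with the three-point identity, I would derive $\langle g_i,w_{i+1}-w^\star\rangle+(L+\beta_i)\,d(w_{i+1},w_i)\le(L+\beta_i)\big(d(w^\star,w_i)-d(w^\star,w_{i+1})\big)$. Combining this with the descent inequality, decomposing $\hat\ell_i(w_{i+1})=F(w_i)+\langle g_i,w_{i+1}-w^\star\rangle+\langle\nabla F(w_i),w^\star-w_i\rangle+\langle q_i,w^\star-w_i\rangle$, and using convexity of $F$ to replace $\langle\nabla F(w_i),w^\star-w_i\rangle$ by $F(w^\star)-F(w_i)$, gives the per-step estimate $F(w_{i+1})-F(w^\star)\le(L+\beta_i)\big(d(w^\star,w_i)-d(w^\star,w_{i+1})\big)+\langle q_i,w^\star-w_i\rangle+\tfrac{\|q_i\|_*^2}{2\beta_i}$.

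Finally I would sum over $i=1,\ldots,m-1$, add $F(w_1)-F(w^\star)$, and take expectations: the cross-terms vanish, $\E\|q_i\|_*^2\le\sigma^2$ produces $\tfrac{\sigma^2}{2}\sum_{i=1}^{m-1}1/\beta_i$, and the identities $\E f(w_i,z_i)=\E F(w_i)$ and $\E f(w^\star,z_i)=F(w^\star)$ turn the left-hand side into $\E[R(m)]$. The only step I expect to require real care is the weighted telescoping $\sum_{i=1}^{m-1}(L+\beta_i)\big(d(w^\star,w_i)-d(w^\star,w_{i+1})\big)$: because the coefficients $L+\beta_i$ are nondecreasing and $0\le d(w^\star,w_i)\le D^2$, Abel summation bounds it by $(L+\beta_1)D^2+\sum_{i=2}^{m-1}(\beta_i-\beta_{i-1})D^2=(L+\beta_{m-1})D^2\le(L+\beta_m)D^2$, after discarding the nonpositive term $-(L+\beta_{m-1})d(w^\star,w_m)$. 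This is where boundedness of $d$ on $W$ and monotonicity of $(\beta_i)$ are both essential, and it is the place where the argument genuinely differs from the dual-averaging case (there $h(w^\star)$ plays the role that $D^2$ plays here).
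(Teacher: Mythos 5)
Your proposal is correct and follows essentially the same route as the paper's proof: the one-step smoothness bound with the noise cross-term absorbed via Young's inequality is exactly \eqref{e:psi-upperbound2}, the optimality/three-point step is \lemref{l:Bregman-reg} applied to the update, and your Abel-summation bound on $\sum_i (L+\beta_i)\bigl(d(w^\star,w_i)-d(w^\star,w_{i+1})\bigr)$ is the same computation the paper performs by shifting $(L+\beta_i)$ to $(L+\beta_{i+1})$ and paying $(\beta_{i+1}-\beta_i)D^2$ per step. The only cosmetic difference is that you invoke convexity of $F$ inside the per-step estimate rather than when summing, and the relevant lemma is \lemref{l:Bregman-reg} (which retains the $d(w^\star,w_{i+1})$ term) rather than \lemref{l:composite-min}, as your derivation in fact uses.
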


Similar to the dual averaging case, using the sequence of parameters
$\beta_i = (\sigma/D)\sqrt{i}$
gives the expected regret bound
\[
\E [R(m)]
\leq (F(w_1)-F(w^\star)) + L D^2 +  \left(2 \sigma D \right) \sqrt{m}.
\]
Again, if $\nabla F(w^\star)=0$, we have
$F(w_1)-F(w^\star)\leq (L/2)\|w_1-w^\star\|^2\leq L h(w^\star) \leq L D^2$,
thus the simplified bound
\[
\E [R(m)] \leq 2L D^2 +  \left(2 \sigma D \right) \sqrt{m}.
\]

We note that here we have stronger assumptions than in the dual
averaging case.  These assumptions are certainly satisfied by using
the standard Euclidean distance $d(u,v)=(1/2)\|u-v\|_2^2$ on a compact
convex set~$W$.  However, it excludes the case of using the
KL-divergence $d(u,v)=\sum_{i=1}^n u_i\log(u_i/v_i)$ on the
simplex, because the KL-divergence is unbounded on the simplex.
Nevertheless, it is possible to remove such restrictions by
considering other variants of the stochastic mirror descent method.
For example, if we use a constant $\beta_i$ that depends on the prior
knowledge of the number of total steps to be performed, then we can
weaken the assumption and replace~$D$ in the above bounds by
$\sqrt{h(w^\star)}$.  More precisely, we have
\begin{theorem}\label{t:md-regret-const}
Suppose we know the total number of steps~$m$ to be performed by the
stochastic mirror descent method ahead of time.
Then by using the initialization in \eqref{e:da-ini} and the
constant parameter
\[
\beta_i = \frac{\sigma}{\sqrt{2 h(w^\star)}}\sqrt{m} ,
\]
we have the expected regret bound
\[
\E [R(m)] \leq (F(w_1)-F(w^\star)) + L h(w^\star) + \sigma\sqrt{2 h(w^\star)} \sqrt{m}.
\]
\end{theorem}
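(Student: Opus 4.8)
The plan is to re-run the stochastic mirror descent regret analysis that underlies \thmref{t:md-regret}, but to exploit the fact that a \emph{constant} parameter $\beta_i\equiv\beta$ makes the telescoping of the Bregman-divergence terms exact; this is precisely what lets us discard the boundedness hypotheses on $W$ and on $d$ and replace $D^2$ by $h(w^\star)$.

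First I would establish a per-step inequality. From the optimality condition defining $w_{i+1}$ together with the three-point identity for the Bregman divergence $d$, one gets, for every $u\in W$,
\[
\langle g_i,\,w_{i+1}-u\rangle ~\leq~ (L+\beta)\bigl(d(u,w_i)-d(u,w_{i+1})-d(w_{i+1},w_i)\bigr).
\]
Writing $g_i=\nabla F(w_i)+q_i$ with $q_i=g_i-\nabla F(w_i)$, invoking $L$-smoothness of $F$ in the form $\langle\nabla F(w_i),w_i-w_{i+1}\rangle\leq F(w_i)-F(w_{i+1})+\tfrac{L}{2}\norm{w_{i+1}-w_i}^2$, convexity $F(w_i)-F(w^\star)\leq\langle\nabla F(w_i),w_i-w^\star\rangle$, and the strong-convexity bound $d(w_{i+1},w_i)\geq\tfrac12\norm{w_{i+1}-w_i}^2$, I would obtain a bound on $F(w_{i+1})-F(w^\star)$ consisting of a telescoping Bregman term $(L+\beta)\bigl(d(w^\star,w_i)-d(w^\star,w_{i+1})\bigr)$, a leftover $-\tfrac{\beta}{2}\norm{w_{i+1}-w_i}^2$ margin against which the noise cross-term $-\langle q_i,w_{i+1}-w_i\rangle$ is absorbed by completing the square (exactly the maneuver in \eqref{e:psi-upperbound}), yielding an additive $\norm{q_i}_*^2/(2\beta)$, and a martingale-difference term $\langle q_i,w^\star-w_i\rangle$.

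Next I would sum over $i=1,\dots,m-1$, accounting for $F(w_1)-F(w^\star)$ separately. Because $\beta$ is constant, the divergence terms telescope without residue: $\sum_{i=1}^{m-1}(L+\beta)\bigl(d(w^\star,w_i)-d(w^\star,w_{i+1})\bigr)=(L+\beta)\bigl(d(w^\star,w_1)-d(w^\star,w_m)\bigr)\leq(L+\beta)\,d(w^\star,w_1)$. This is exactly the place where \thmref{t:md-regret} needed $D^2=\max_{u,v}d(u,v)<\infty$; here only $d(w^\star,w_1)$ survives, and since $w_1=\argmin_{w\in W}h(w)$ with $\min_W h=0$, the first-order optimality inequality $\langle\nabla h(w_1),w^\star-w_1\rangle\geq0$ gives $d(w^\star,w_1)=h(w^\star)-h(w_1)-\langle\nabla h(w_1),w^\star-w_1\rangle\leq h(w^\star)$. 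Taking expectations kills the terms $\langle q_i,w^\star-w_i\rangle$ (each $w_i$ is a deterministic function of $z_1,\dots,z_{i-1}$), replaces each $\E\norm{q_i}_*^2$ by $\sigma^2$, and, using $\E f(w_i,z_i)=\E F(w_i)$ and $\E f(w^\star,z_i)=F(w^\star)$, yields
\[
\E[R(m)] ~\leq~ \bigl(F(w_1)-F(w^\star)\bigr) + (L+\beta)\,h(w^\star) + \frac{(m-1)\sigma^2}{2\beta}.
\]

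Finally I would optimize the free parameter: using $m-1\leq m$ and minimizing $\beta\mapsto\beta\,h(w^\star)+\tfrac{m\sigma^2}{2\beta}$ by AM--GM, the minimizer is $\beta=\sigma\sqrt{m}/\sqrt{2h(w^\star)}$---the value in the statement---at which the two terms coincide and sum to $\sigma\sqrt{2h(w^\star)}\,\sqrt{m}$, giving the claimed bound. I expect the only genuine work to be the per-step inequality, where the mirror-descent optimality condition, $L$-smoothness, and the square-completion for the noise term must be combined carefully; everything after that is the clean telescoping and an elementary one-variable optimization. The conceptual point to get right is simply that a constant step parameter never multiplies any $d(w^\star,w_i)$ with $i\geq2$ by a nonzero coefficient, so the argument needs neither $W$ nor $d$ to be bounded---only $h(w^\star)<\infty$.
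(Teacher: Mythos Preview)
Your proposal is correct and follows essentially the same route as the paper: re-run the mirror-descent per-step inequality (the paper does this via \lemref{l:Bregman-reg}, which is equivalent to your optimality-condition-plus-three-point-identity derivation), observe that a constant $\beta$ makes the Bregman terms telescope exactly so that only $d(w^\star,w_1)\leq h(w^\star)$ survives, take expectations, and then plug in the stated $\beta$. The paper's proof is just a terse two-line remark appended to the proof of \thmref{t:md-regret}, but the content is identical to what you outline.
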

\thmref{t:md-regret-const} is essentially the same as a result in~\citet{Lan09}, who also
developed an accelerated versions of the stochastic mirror descent method.
To prove \thmref{t:md-regret} and \thmref{t:md-regret-const} we need the
following standard Lemma,
which can be found in \citet{ChenTe93}, \citet{LanLuMo06} and \citet{Tseng08}.

\begin{lemma}\label{l:Bregman-reg}
Let~$W$ be a closed convex set,
$\varphi$ be a convex function on~$W$,
and $h$ be a differentiable, strongly convex function on~$W$.
Let $d$ be the Bregman divergence generated by~$h$.
Given $u\in W$, if
\[
w^+ = \argmin_{w\in W} \, \bigl\{ \varphi(w) + d(w,u) \bigr\},
\]
then
$$
\varphi(w)+d(w,u) \geq \varphi(w^+) + d(w^+,u) + d(w,w^+).
$$
\end{lemma}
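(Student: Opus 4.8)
The plan is to obtain the inequality directly from the first-order optimality condition for the constrained minimizer~$w^+$, the convexity of~$\varphi$, and the three-point identity for the Bregman divergence recorded earlier in this appendix. Observe first that the objective $g(w):=\varphi(w)+d(w,u)$ is convex---$\varphi$ is convex by hypothesis and $d(\cdot,u)$ is convex because $h$ is convex---so $w^+$ is a genuine minimizer of~$g$ over the closed convex set~$W$; strong convexity of~$h$, hence of~$g$, is what guarantees that this minimizer exists and is unique. I will also use that $d(\cdot,u)$ is differentiable with $\nabla_w d(w,u)=\nabla h(w)-\nabla h(u)$.

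First I would record the first-order optimality condition in a form that avoids subdifferential calculus. For an arbitrary $w\in W$ and $t\in(0,1]$, the point $w^+ + t(w-w^+)$ lies in~$W$, so optimality of~$w^+$ gives $g(w^+)\le g\bigl(w^+ + t(w-w^+)\bigr)$. Bounding the $\varphi$-part by convexity via $\varphi\bigl(w^+ + t(w-w^+)\bigr)\le(1-t)\varphi(w^+)+t\,\varphi(w)$, rearranging, dividing by~$t$, and letting $t\to 0^+$---using that the directional derivative of the differentiable function $d(\cdot,u)$ at~$w^+$ along $w-w^+$ equals $\langle\nabla h(w^+)-\nabla h(u),\,w-w^+\rangle$---yields
\[
0 ~\le~ \varphi(w)-\varphi(w^+) + \bigl\langle \nabla h(w^+)-\nabla h(u),\; w - w^+ \bigr\rangle
\qquad\text{for every } w\in W .
\]

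Finally I would invoke the three-point identity, which in the present notation reads $d(w,u) = d(w,w^+) + d(w^+,u) + \langle\nabla h(w^+)-\nabla h(u),\,w-w^+\rangle$, to rewrite the inner-product term above as $d(w,u)-d(w,w^+)-d(w^+,u)$. Substituting this into the displayed inequality and rearranging yields exactly $\varphi(w)+d(w,u)\ge \varphi(w^+)+d(w^+,u)+d(w,w^+)$, which is the claim. I do not anticipate any real obstacle: this is a standard fact, and the only points needing a little care are justifying the passage $t\to 0^+$ in the optimality step (legitimate because $d(\cdot,u)$ is differentiable) and matching the variables correctly when applying the three-point identity.
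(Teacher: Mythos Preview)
Your argument is correct. The paper does not actually prove this lemma; it is stated as a standard fact with citations to \citet{ChenTe93}, \citet{LanLuMo06}, and \citet{Tseng08}. Your proof---optimality condition for~$w^+$ combined with convexity of~$\varphi$, followed by the three-point identity to convert the inner product $\langle\nabla h(w^+)-\nabla h(u),\,w-w^+\rangle$ into $d(w,u)-d(w,w^+)-d(w^+,u)$---is precisely the standard argument found in those references, so there is nothing further to compare.
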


We are ready to prove \thmref{t:md-regret} and \thmref{t:md-regret-const}.
\begin{proof}
We start with the inequality in \eqref{e:psi-upperbound}.
Using \eqref{eqn:sc-norm-bound} with $\mu=1$ gives
\begin{equation}\label{e:psi-upperbound2}
F(w_{i+1}) \leq \hat\ell_i(w_{i+1}) + (L+\beta_i) d(w_{i+1},w_i)
+\frac{\|q_i\|_*^2}{2\beta_i}.
\end{equation}
Now using \lemref{l:Bregman-reg} with $\varphi(w)=\hat\ell_i(w)$ yields
\[
\hat\ell_i(w_{i+1}) + (L+\beta_i) d(w_{i+1},w_i) \leq
\hat\ell_i(w^\star) + (L+\beta_i) d(w^\star,w_i)-(L+\beta_i) d(w^\star,w_{i+1}).
\]
Combining with \eqref{e:psi-upperbound2} gives
\begin{align*}
&F(w_{i+1})
~\leq~ \hat\ell_i(w^\star)
+ (L+\beta_i) d(w^\star,w_i)-(L+\beta_i) d(w^\star,w_{i+1})
+\frac{\|q_i\|_*^2}{2\beta_i}\\
&=~ \ell_i(w^\star)
+ (L+\beta_i) d(w^\star,w_i)-(L+\beta_{i+1}) d(w^\star,w_{i+1})
+(\beta_{i+1}-\beta_i) d(w^\star, w_{i+1}) \\
& \qquad +\frac{\|q_i\|_*^2}{2\beta_i} + \langle q_i, w^\star-w_i \rangle \\
&\leq~ F(w^\star)
+ (L+\beta_i) d(w^\star,w_i)-(L+\beta_{i+1}) d(w^\star,w_{i+1})
+(\beta_{i+1}-\beta_i) D^2\\
&\qquad +\frac{\|q_i\|_*^2}{2\beta_i} + \langle q_i, w^\star-w_i \rangle,
\end{align*}
where in the last inequality, we used the definition of $D^2$ and the
assumption that $\beta_{i+1}\geq \beta_i$.
Summing the above inequality from $i=1$ to $i=m-1$, we have
\begin{eqnarray*}
\sum_{i=2}^m F(w_i)
&\leq& (m-1)F(w^\star) + (L+\beta_1)d(w^\star,w_1) - (L+\beta_m)d(w^\star,w_m)
+(\beta_m-\beta_1)D^2 \\
&& \qquad + \sum_{i=1}^{m-1}\frac{\|q_i\|_*^2}{2\beta_i}
+ \sum_{i=1}^{m-1} \langle q_i, w^\star-w_i \rangle.
\end{eqnarray*}
Notice that $d(w^\star, w_i)\geq 0$ and $d(w^\star,w_1)\leq D^2$, so we have
\[
\sum_{i=2}^m F(w_i)
\leq (m-1)F(w^\star) + (L +\beta_m) D^2
+ \sum_{i=1}^{m-1}\frac{\|q_i\|_*^2}{2\beta_i}
+ \sum_{i=1}^{m-1} \langle q_i, w^\star-w_i \rangle.
\]
The rest of the proof for~\thmref{t:md-regret} is similar to that
for the dual averaging method (see arguments following \eqref{e:sumFbd}).

Finally we prove \thmref{t:md-regret-const}.
From the proof of \thmref{t:md-regret} above,
we see that if $\beta_i=\beta_m$ is a constant for all~$i=1,\ldots,m$,
then we have
\[
\E \sum_{i=2}^m (F(w_i) - F(w^\star))
\leq (L +\beta_m) d(w^\star, w_1)
+ \frac{\sigma^2}{2}\sum_{i=1}^{m-1}\frac{1}{\beta_i}.
\]
Notice that for the above result, we do not need to assume
boundedness of~$W$, nor boundedness of the Bregman divergence
$d(u,v)$.
Since we use $w_1=\argmin_{w\in W} h(w)$ and assume $h(w_1)=0$
(without loss of generality), it follows
$d(w^\star,w_1) \leq h(w^\star)$.
Plugging in $\beta_m = (\sigma/\sqrt{2 h(w^\star)})\sqrt{m}$ gives
the desired result.
\end{proof}

\section{High-Probability Bounds}\label{app:highprob}

For simplicity, the theorems stated throughout the paper involved bounds on the
expected regret, $\E[R(m)]$. A stronger type of result is a high-probability
bound, where $R(m)$ itself is bounded with arbitrarily high probability
$1-\delta$, and the bound having only logarithmic dependence on $\delta$. Here,
we demonstrate how our theorems can be extended to such high-probability
bounds.

First, we need to justify that the expected regret bounds for the online
prediction rules discussed in \appref{app:serial} have high-probability
versions. For simplicity, we will focus on a high-probability version of the
regret bound for dual averaging (\thmref{t:da-regret}),  but exactly the same
technique will work for stochastic mirror descent (\thmref{t:md-regret} and
\thmref{t:md-regret-const}).
With these results in hand, we will show how our main theorem for distributed
learning using the DMB algorithm (\thmref{thm:synchronous}) can be extended to
a high-probability version. Identical techniques will work for the other
theorems presented in the paper.

Before we begin, we will need to make a few additional mild
assumptions.
First, we assume that there are
positive constants $B,G$ such that $|f(w,z)|\leq B$ and
$\norm{\nabla_w f(w,z)}\leq G$ for all $w\in W$ and $z\in
\Zcal$. Second, we assume that there is a positive constant
$\hat{\sigma}$ such that $\Var_z(f(w,z)-f(w^\star,z))\leq
\hat{\sigma}^2$ for all $w\in W$ (note that $\hat{\sigma}^2
\leq 4B^2$ always holds). Third, that $W$ has a bounded
diameter $D$, namely $\norm{w-w'}\leq D$ for all $w,w'\in W$.

Under these assumptions, we can show the following high-probability version of
\thmref{t:da-regret}.

\begin{theorem}\label{t:da-regret-highprob}
For any $m$ and any $\delta\in (0,1]$, the regret of the stochastic dual averaging method is bounded with probability at least $1-\delta$ over the sampling of $z_1,\ldots,z_m$ by
\begin{align*}
R(m) \leq  (F(w_1)&-F(w^\star))
+ (L+\beta_m) h(w^\star)
+ \frac{\sigma^2}{2} \sum_{i=1}^{m-1}\frac{1}{\beta_i}\\
&+2\log(2/\delta)\left(DG+\frac{2G^2}{\beta_1}\right)
\sqrt{1+36\frac{G^2\sigma^2 \sum_{i=1}^{m}\frac{1}{\beta_i^2}+D^2\sigma^2m}{\log(2/\delta)}}\\
&+4\log(2/\delta)B\sqrt{1+\frac{18m\hat{\sigma}^2}{\log(2/\delta)}}.
\end{align*}
\end{theorem}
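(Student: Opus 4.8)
The plan is to re-run the deterministic skeleton of the proof of \thmref{t:da-regret} and, at every point where that proof passed to expectations, substitute a high-probability martingale bound instead. Concretely, recall that the proof of \thmref{t:da-regret} establishes, deterministically along \emph{any} realization of $z_1,\dots,z_m$ (this is \eqref{e:sumFbd}, after adding $F(w_1)-F(w^\star)$ to both sides),
\[
\sum_{i=1}^m \bigl(F(w_i)-F(w^\star)\bigr) \;\le\; (F(w_1)-F(w^\star)) + (L+\beta_m)h(w^\star) + \sum_{i=1}^{m-1}\frac{\|q_i\|_*^2}{2\beta_i} + \sum_{i=1}^{m-1}\langle q_i,\, w^\star-w_i\rangle,
\]
where $q_i=\nabla_w f(w_i,z_i)-\nabla F(w_i)$. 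I would then write $R(m)=\sum_{i=1}^m(F(w_i)-F(w^\star))+\sum_{i=1}^m\xi_i$ with $\xi_i=\bigl(f(w_i,z_i)-f(w^\star,z_i)\bigr)-\bigl(F(w_i)-F(w^\star)\bigr)$, so the whole statement reduces to controlling three quantities with high probability: the ``bias'' term $\sum_{i=1}^{m-1}\|q_i\|_*^2/(2\beta_i)$, which I replace by its conditional mean $\tfrac{\sigma^2}{2}\sum_{i=1}^{m-1}1/\beta_i$ (using $\E[\|q_i\|_*^2\mid\mathcal F_{i-1}]\le\sigma^2$) plus a fluctuation; the ``noise'' term $\sum_{i=1}^{m-1}\langle q_i,w^\star-w_i\rangle$; and the ``generalization'' term $\sum_{i=1}^m\xi_i$.

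Each of these three is a sum of a martingale difference sequence with respect to $\mathcal F_{i-1}=\sigma(z_1,\dots,z_{i-1})$, because $w_i$ is a deterministic function of $z_1,\dots,z_{i-1}$: $\E[\langle q_i,w^\star-w_i\rangle\mid\mathcal F_{i-1}]=0$, $\E[\xi_i\mid\mathcal F_{i-1}]=0$, and $Y_i:=\|q_i\|_*^2/(2\beta_i)$ has $Y_i-\E[Y_i\mid\mathcal F_{i-1}]$ mean zero. For each sequence I would record a uniform increment bound and a bound on the sum of conditional variances, using the new assumptions $\|q_i\|_*\le 2G$ (triangle inequality together with $\|\nabla F(w)\|_*\le G$), $\|w^\star-w_i\|\le D$, $|f(w,z)|\le B$, and $\Var_z(f(w,z)-f(w^\star,z))\le\hat\sigma^2$: this gives increments of order $G^2/\beta_1$ (using $\beta_i\ge\beta_1$), $DG$, and $B$, and summed conditional variances of order $G^2\sigma^2\sum_i 1/\beta_i^2$, $D^2\sigma^2 m$, and $\hat\sigma^2 m$. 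Applying a Bernstein-type martingale inequality (Freedman's inequality, in the version that accepts a deterministic bound on the predictable variation) to the combined martingale $\sum_i\bigl[(Y_i-\E[Y_i\mid\mathcal F_{i-1}])+\langle q_i,w^\star-w_i\rangle\bigr]$ produces, with probability $1-\delta/2$, the second line of the claimed bound; applying it to $\sum_i\xi_i$ produces, with probability $1-\delta/2$, the third line. A union bound over these two events, combined with the deterministic skeleton above and the replacement of $\sum_i Y_i$ by $\tfrac{\sigma^2}{2}\sum_i 1/\beta_i$ plus its fluctuation, yields the statement.

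The genuinely delicate point is the martingale concentration step, and in particular the decision to bound $\E[\|q_i\|_*^2\mid\mathcal F_{i-1}]\le\sigma^2$ rather than use the crude deterministic $\|q_i\|_*^2\le 4G^2$: the latter would replace the leading coefficient $\sigma$ by $G$ in both the bias and the noise contributions, obliterating the variance dependence that the whole paper is built on. The precise numerical constants ($36$, $18$, the factor $2$) come from fixing a version of Freedman's inequality with an explicit tail of the form $\sum X_i\le \tfrac{2R}{3}\log(1/\delta)+\sqrt{2V\log(1/\delta)}$, tracking that the realized predictable variance is random but deterministically dominated by the quantities above, and then algebraically recasting an expression $aR\log(1/\delta)+b\sqrt{V\log(1/\delta)}$ into the compact form $c\,R\log(1/\delta)\sqrt{1+c'V/(R^2\log(1/\delta))}$ appearing in the statement; this is routine bookkeeping once the inequality is pinned down. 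Everything else — the deterministic skeleton, and the measurability and conditional-mean-zero claims — is inherited verbatim from the proof of \thmref{t:da-regret}.
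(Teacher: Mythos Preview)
Your proposal is correct and matches the paper's proof essentially step for step: the paper also starts from the deterministic inequality \eqref{e:sumFbd}, combines the fluctuation $(\|q_i\|_*^2-\sigma_i^2)/(2\beta_i)$ with $\langle q_i,w^\star-w_i\rangle$ into a single martingale difference sequence, applies the Bernstein-type martingale inequality from \citet{CesaBianchiLu06} (your Freedman) with the same increment and conditional-variance bounds you identify, handles $\sum_i\xi_i$ by a second application of the same inequality, and finishes with a union bound over the two events at level $\delta/2$ each. Your emphasis on subtracting the conditional mean of $\|q_i\|_*^2$ rather than using the crude $4G^2$ bound is exactly the point the paper exploits to retain the $\sigma$-dependence.
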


\begin{proof}
The proof of the theorem is identical to the one of \thmref{t:da-regret}, up to \eqref{e:sumFbd}:
\begin{equation}\label{e:sumFbd_highprob}
\sum_{i=2}^m \bigl( F(w_i) - F(w^\star) \bigr)
\leq (L+\beta_m) h(w^\star)
+\sum_{i=1}^{m-1}\frac{\|q_i\|^2}{2\beta_i}
+\sum_{i=1}^{m-1} \langle q_i, w^\star - w_i \rangle.
\end{equation}
In the proof of \thmref{t:da-regret}, we proceeded by taking expectations of both sides with respect to the sequence $z_1,\ldots,z_m$. Here, we will do things a bit differently.

The main technical tool we use is a well-known Bernstein-type inequality for martingales \citep[e.g.,][Lemma~A.8]{CesaBianchiLu06}, an immediate corollary of which can be stated as follows: suppose $x_1,\ldots,x_m$ is a martingale difference sequence with respect to the sequence $z_1,\ldots,z_m$, such that $|x_i|\leq b$, and let
\[
v = \sum_{i=1}^{m}\text{Var}(x_i|z_1,\ldots,z_{i-1}).
\]
Then for any $\delta\in (0,1)$, it holds with probability at least $1-\delta$ that
\begin{equation}\label{e:freedman}
\sum_{i=1}^{m}x_i \leq
b\log(1/\delta)\sqrt{1+\frac{18v}{\log(1/\delta)}}.
\end{equation}

Recall the definition $q_i=\nabla f(w_i,z_i)-\nabla F(w_i)$,
and let $\sigma_i^2 = \E[\norm{q_i}^2]$.
Note that $\sigma_i^2\leq \sigma^2$.
We will first use this result for the sequence
\[
x_i=\frac{\norm{q_i}^2-\sigma_i^2}{2\beta_i}+\inner{q_i,w^\star-w_i}.
\]
It is easily seen that $\E_{z_i}[x_i|z_1,\ldots,z_{i-1}]=0$, so it is indeed a martingale difference sequence w.r.t. $z_1,\ldots,z_m$. Moreover, $|\inner{q_i,w^\star-w_i}|\leq D\norm{q_i}\leq 2DG$, $\norm{q_i}^2\leq 4G^2$. In terms of the variances, let $\text{Var}_{z_i}$ and $\E_{z_i}$ be shorthand for the variance (resp.\ expectation) over $z_i$ conditioned over $z_1,\ldots,z_{i-1}$. Then
\begin{align*}
\text{Var}_{z_i}(x_i)&\leq 2\text{Var}_{z_i}\left(\frac{\norm{q_i}^2-\sigma_i^2}{2\beta_i}\right)+
2\text{Var}_{z_i}\left(\inner{q_i,w^\star-w_i}\right)\\
&\leq \frac{1}{2}\E_{z_i}\left(\frac{\norm{q_i}^4}{\beta_i^2}\right)+2\E_{z_i}[(\inner{q_i,w^\star-w_i})^2]\\ &\leq 2G^2\E_{z_i}\left(\frac{\norm{q_i}^2}{\beta_i^2}\right)+2\norm{w^\star-w_i}^2\E_{z_i}[\norm{q_i}^2]\\
&\leq 2G^2\frac{\sigma_i^2}{\beta_i^2}+2D^2\sigma_i^2 ~\leq~ 2G^2\frac{\sigma^2}{\beta_i^2}+2D^2\sigma^2.
\end{align*}
Combining these observations with \eqref{e:freedman}, we get that with probability at least $1-\delta$,
\begin{equation}\label{e:freedman1}
\sum_{i=1}^{m-1} \frac{\norm{q_i}^2-\sigma^2}{\beta_i}+\inner{q_i, w^\star - w_i} \leq
\left(2DG+\frac{4G^2}{\beta_1}\right)\log(1/\delta)\sqrt{1+36\frac{G^2\sigma^2 \sum_{i=1}^{m}\frac{1}{\beta_i^2}+D^2\sigma^2m}{\log(1/\delta)}}.
\end{equation}

A similar type of bound can be derived for the sequence $x_i=\left(f(w_i,z_i)-f(w^\star,z_i)\right)-$ \linebreak[4] $\left(F(w_i)-F(w^\star)\right)$. It is easily verified to be a martingale difference sequence w.r.t. $z_1,\ldots,z_m$, since
\[
\E\left[\left(f(w_i,z_i)-f(w^\star,z_i)\right)-\left(F(w_i)-F(w^\star)\right)
|z_1,\ldots,z_{i-1}\right]=0.
\]
Also,
\[
\left|\left(f(w_i,z_i)-f(w^\star,z_i)\right)-\left(F(w_i)-F(w^\star)\right)\right|
\leq 4B,
\]
and
\begin{align*}
\text{Var}_{z_i}\left(\bigl(f(w_i,z_i)-f(w^\star,z_i)\bigr)-\bigl(F(w_i)-F(w^\star)\bigr)\right)
&\,=\, \text{Var}_{z_i}\bigl(f(w_i,z_i)-f(w^\star,z_i)\bigr) \\
&\,\leq\, \hat{\sigma}^2~.
\end{align*}
So again using \eqref{e:freedman}, we have that with probability at least $1-\delta$ that
\begin{equation}\label{e:freedman2}
\sum_{i=1}^{m} \left(f(w_i,z_i)-f(w^\star,z_i)\right)-\left(F(w_i)-F(w^\star)\right)
\leq 4B\log(1/\delta)\sqrt{1+\frac{18m\hat{\sigma}^2}{\log(1/\delta)}}~~.
\end{equation}

Finally, adding $F(w_1)-F(w^\star)$ to both sides of \eqref{e:sumFbd_highprob}, and combining \eqref{e:freedman1} and \eqref{e:freedman2} with a union bound, the result follows.
\end{proof}

Comparing the theorem to \thmref{t:da-regret}, and assuming that $\beta_i = \Theta(\sqrt{i})$, we see that the bound has additional $O(\sqrt{m})$ terms. However, the bound retains the important property of having the dominant terms multiplied by the variances $\sigma^2,\hat{\sigma}^2$. Both variances become smaller in the mini-batch setting, where the update rules are applied over averages of $b$ such functions and their gradients. As we did earlier in the paper, let us think of this bound as an abstract function $\psi(\sigma^2,\hat{\sigma}^2,\delta,m)$. Notice that now, the regret bound also depends on the function variance $\hat{\sigma}^2$, and the confidence parameter $\delta$.

\begin{theorem} \label{thm:synchronous-highprob}
Let~$f$ is an $L$-smooth convex loss function.
Assume that the stochastic gradient $\nabla_w f(w,z_i)$ is bounded by a constant
and has $\sigma^2$-bounded variance for all~$i$ and all~$w$,
and that $f(w,z_i)$ is bounded by a constant and has $\hat{\sigma}^2$-bounded
variance for all~$i$ and for all~$w$.
If the update rule~$\phi$ has a serial high-probability regret bound
$\psi(\sigma^2,\hat{\sigma}^2,\delta,m)$.
then with probability at least $1-\delta$, the total regret of
\algref{alg:distributed} over $m$ examples is at most
$$
(b+\mu)\psi\left(\frac{\sigma^2}{b},\frac{\hat{\sigma}^2}{b},\delta,1 + \frac{m}{b+\mu}\right)+
O\left(\hat{\sigma}\sqrt{\left(1+\frac{\mu}{b}\right)\log(1/\delta)m}\right) ~~.
$$
\end{theorem}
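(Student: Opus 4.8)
The plan is to follow the proof of \thmref{thm:synchronous} essentially verbatim, but to replace its two ``take expectations over $z_1,\dots,z_m$'' steps by two applications of the Bernstein-type martingale inequality~\eqref{e:freedman}. As in that proof, introduce the batch-averaged loss $\bar f(w,(z_1,\dots,z_b))=\frac1b\sum_{s=1}^b f(w,z_s)$, write $\bar z_j$ for the first~$b$ inputs of batch~$j$, and set $r=\lceil m/(b+\mu)\rceil$. Three properties of $\bar f$ carry over: it is $L$-smooth; its gradient has $\sigma^2/b$-bounded variance by~\eqref{eqn:var-scaling}; and, applying the same i.i.d.\ averaging argument to the scalar variables $f(w,z_s)-f(w^\star,z_s)$, it has $\hat\sigma^2/b$-bounded \emph{function-value} variance, i.e.\ $\Var_{\bar z}\bigl(\bar f(w,\bar z)-\bar f(w^\star,\bar z)\bigr)\le\hat\sigma^2/b$ (the additional boundedness assumptions required by the serial high-probability bound are inherited by $\bar f$ from $f$). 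Since the DMB average gradient~$\bar g_j$ is exactly $\nabla_w\bar f(\cdot,\bar z_j)$ evaluated at~$w_j$, running the DMB algorithm on~$f$ is identical to running the serial rule~$\phi$ on the loss~$\bar f$ over the inputs $\bar z_1,\dots,\bar z_r$. Applying the serial high-probability bound~$\psi$ with variances $\sigma^2/b,\hat\sigma^2/b$ and confidence~$\delta/2$ then gives, with probability at least $1-\delta/2$,
\[
\sum_{j=1}^r\bigl(\bar f(w_j,\bar z_j)-\bar f(w^\star,\bar z_j)\bigr)
\;\le\;\psi\!\left(\frac{\sigma^2}{b},\frac{\hat\sigma^2}{b},\frac{\delta}{2},\,1+\frac{m}{b+\mu}\right).
\]

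Next I would convert this into a bound on the true regret $R(m)$. Grouping the $m$ inputs into the $r$ batches of size $b+\mu$ and writing $\tilde f(w,\tilde z_j)=\frac{1}{b+\mu}\sum_{i\in I_j}f(w,z_i)$ for the average over all of batch~$j$, we have $R(m)=(b+\mu)\sum_{j=1}^r\bigl(\tilde f(w_j,\tilde z_j)-\tilde f(w^\star,\tilde z_j)\bigr)$, so
\[
R(m)=(b+\mu)\sum_{j=1}^r\bigl(\bar f(w_j,\bar z_j)-\bar f(w^\star,\bar z_j)\bigr)+(b+\mu)\sum_{j=1}^r X_j,
\]
with $X_j=\bigl(\tilde f(w_j,\tilde z_j)-\tilde f(w^\star,\tilde z_j)\bigr)-\bigl(\bar f(w_j,\bar z_j)-\bar f(w^\star,\bar z_j)\bigr)$. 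Let $\mathcal F_{j-1}$ be the $\sigma$-algebra generated by all inputs of batches $1,\dots,j-1$, so that $w_j$ is $\mathcal F_{j-1}$-measurable while the inputs of batch~$j$ are fresh and independent of $\mathcal F_{j-1}$; then both averages in $X_j$ have conditional expectation $F(w_j)-F(w^\star)$, so $(X_j)$ is a martingale difference sequence with $|X_j|\le 4B$. Since $\tilde f(w,\tilde z_j)=\frac{b}{b+\mu}\bar f(w,\bar z_j)+\frac{\mu}{b+\mu}\bar f_{\mathrm{ex}}(w)$, where $\bar f_{\mathrm{ex}}$ averages over the $\mu$ discarded inputs of batch~$j$, one gets $X_j=\frac{\mu}{b+\mu}\bigl(\Delta_j^{\mathrm{ex}}-\Delta_j\bigr)$, with $\Delta_j,\Delta_j^{\mathrm{ex}}$ the excess losses of~$\bar f$ (resp.\ $\bar f_{\mathrm{ex}}$) at $w_j$ versus $w^\star$; these depend on disjoint fresh samples, hence are conditionally independent, and have conditional variances at most $\hat\sigma^2/b$ and $\hat\sigma^2/\mu$. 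Therefore $\Var(X_j\mid\mathcal F_{j-1})\le\frac{\mu\hat\sigma^2}{b(b+\mu)}$ and, summing over the $r\le 1+m/(b+\mu)$ batches, $\sum_{j=1}^r\Var(X_j\mid\mathcal F_{j-1})=O\!\bigl(m\mu\hat\sigma^2/(b(b+\mu)^2)\bigr)$.

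Finally, plug $|X_j|\le4B$ and this variance sum into~\eqref{e:freedman} with confidence~$\delta/2$, multiply through by $b+\mu$, and expand the square root via $\sqrt{x+y}\le\sqrt x+\sqrt y$. The variance-driven term becomes $(b+\mu)\cdot O\bigl(\sqrt{\log(1/\delta)\,m\mu\hat\sigma^2/(b(b+\mu)^2)}\bigr)=O\bigl(\hat\sigma\sqrt{(\mu/b)\log(1/\delta)\,m}\bigr)\le O\bigl(\hat\sigma\sqrt{(1+\mu/b)\log(1/\delta)\,m}\bigr)$, which is precisely the advertised extra term; the range-driven remainder is $O\bigl(B(b+\mu)\log(1/\delta)\bigr)$, which is of the same order as terms already appearing inside $(b+\mu)\,\psi(\sigma^2/b,\hat\sigma^2/b,\delta,\cdot)$ once its own inner square root is expanded (the switch from $\delta/2$ back to~$\delta$ only affects logarithmic factors), so it can be absorbed there. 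A union bound over the two failure events of probability $\delta/2$ completes the proof. The step I expect to be the main obstacle is the variance bookkeeping for $\sum_j\Var(X_j\mid\mathcal F_{j-1})$: one must exploit the independence between the~$b$ recorded and the~$\mu$ discarded inputs of a batch, together with the $1/(\#\text{i.i.d.\ terms})$ scaling of the function-value variance, to see that the sum collapses to exactly the order $m\mu\hat\sigma^2/(b(b+\mu)^2)$, and one must check that combining it with~\eqref{e:freedman} and the factor $b+\mu$ produces no genuinely new $\Theta(\sqrt m)$-order term beyond the stated one.
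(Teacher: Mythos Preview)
Your proposal is correct and follows the same two-step strategy as the paper: apply the serial high-probability bound to the batch-averaged loss~$\bar f$ to control $\sum_j(\bar f(w_j,\bar z_j)-\bar f(w^\star,\bar z_j))$, then use the Bernstein-type martingale inequality~\eqref{e:freedman} to pass from the $\bar f$-regret to the true regret~$R(m)$, finishing with a union bound.

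The one noteworthy difference is your variance bookkeeping, which is sharper than the paper's. The paper bounds $\Var(X_j\mid\mathcal F_{j-1})\le 4\hat\sigma^2/b$ by applying $\Var(A-B)\le 2\Var(A)+2\Var(B)$ directly to the two (overlapping, hence dependent) averages $\frac{1}{b}\sum_{\bar z_j}$ and $\frac{1}{b+\mu}\sum_{Z_j}$. Your decomposition $X_j=\frac{\mu}{b+\mu}(\Delta_j^{\mathrm{ex}}-\Delta_j)$ separates the recorded and discarded samples into conditionally independent pieces, giving the tighter bound $\Var(X_j\mid\mathcal F_{j-1})\le\frac{\mu\hat\sigma^2}{b(b+\mu)}$ and hence an extra term $O\bigl(\hat\sigma\sqrt{(\mu/b)\log(1/\delta)\,m}\bigr)$ that is genuinely smaller than the stated $(1+\mu/b)$ version when $\mu\ll b$. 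You also handle two housekeeping points more explicitly than the paper: the $\delta/2$ split for the union bound and the range-driven $O\bigl(B(b+\mu)\log(1/\delta)\bigr)$ remainder from~\eqref{e:freedman}, both of which the paper absorbs silently into its $O(\cdot)$ notation.
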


Comparing the obtained bound to the one in \thmref{thm:synchronous}, we note that we pay an additional $O(\sqrt{m})$ factor.

\begin{proof}
The proof closely resembles the one of \thmref{thm:synchronous}. We let $\bar z_j$ denote the first $b$ inputs on batch $j$,
and define $\bar f$ as the average loss on these inputs. Note that for any $w$, the variance of $\bar f(w,\bar z_j)$ is at most $\hat{\sigma}^2/b$, and the variance of $\nabla_{w} \bar f(w,z)$ is at most $\sigma^2/b$. Therefore, with probability at least $1-\delta$, it holds that
\begin{equation}
\sum_{j=1}^{\bar m} \left( \bar f(w_j,\bar z_j) - \bar f( w^\star, \bar z_j) \right)
\leq \psi\left(\frac{\sigma^2}{b},\frac{\hat{\sigma}^2}{b},\delta,\bar{m}\right)~~.
\label{eqn:dist1-highprob}
\end{equation}
where $\bar m$ is the number of inputs given to the update rule $\phi$. Let $Z_j$ denote the set of all examples received between the commencement of batch $j$ and the commencement of batch $j+1$, including the vector-sum phase in between ($b+\mu$ examples overall). In the proof of \thmref{thm:synchronous}, we had that
$$
\E\left[ \left( \bar f(w_j,\bar z_j) - \bar f( w^\star, \bar z_j) \right)
\,|\, w_j \right] ~=~
\E\bigg[ \frac{1}{b+\mu} \sum_{z\in Z_j} \left( f(w_j, z_i) - f( w^\star, z_i)
\right)  ~\Big|~ w_j \bigg]~~,
$$
and thus the \emph{expected value} of the left-hand side of
\eqref{eqn:dist1-highprob} equals the total regret, divided by $b + \mu$. Here, we need to work a bit harder. To do so, note that the sequence of random variables
\[
\bigg(\frac{1}{b}\sum_{z\in \bar z_j}\bigl(f(w_j,z)-f(w^\star,z)\bigr)\bigg)-
\bigg(\frac{1}{b+\mu}\sum_{z\in Z_j}\bigl(f(w_j,z)-f(w^\star,z)\bigr)\bigg),
\]
indexed by $j$, is a martingale difference sequence with respect to $Z_1,Z_2,\ldots$. Moreover, conditioned on $Z_1,\ldots,Z_{j-1}$, the variance of each such random variable is at most $4\hat{\sigma}^2/b$. To see why, note that the first sum has conditional variance $\hat{\sigma}^2/b$, since the summands are independent and each has variance $\hat{\sigma}^2$. Similarly, the second sum has conditional variance $\hat{\sigma}^2/(b+\mu)\leq \hat{\sigma}^2/b$. Applying the Bernstein-type inequality for martingales discussed in the proof of \thmref{t:da-regret-highprob}, we get that with probability at least $1-\delta$,
\[
\sum_{j=1}^{\bar{m}}\frac{1}{b+\mu}\sum_{z\in Z_j}\bigl(f(w_j,z)-f(w^\star,z)\bigr)\leq
\sum_{j=1}^{\bar{m}}\frac{1}{b}\sum_{z\in \bar z_j}\bigl(f(w_j,z)-f(w^\star,z)\bigr) + O\left(\hat{\sigma}\sqrt{\frac{\bar{m}\log(1/\delta)}{b}}\right),
\]
where the $O$-notation hides only a (linear) dependence on the absolute bound over $|f(w,z)|$ for all $w,z$, that we assume to hold.

Combining this and \eqref{eqn:dist1-highprob} with a union bound, we get that with probability at least $1-\delta$,
\[
\sum_{j=1}^{\bar{m}}\sum_{z\in Z_j}\bigl(f(w_j,z)-f(w^\star,z)\bigr)
\leq (b+\mu)\psi\left(\frac{\sigma^2}{b}, \frac{\hat{\sigma}^2}{b},\delta,\frac{m}{b+\mu}\right)+
O\left((b+\mu)\hat{\sigma}\sqrt{\frac{\bar{m}\log(1/\delta)}{b}}\right).
\]
If $b+\mu$ divides $m$, then $\bar{m}=m/(b+\mu)$, and we get a bound of the form
\[
(b+\mu) \psi\left(\frac{\sigma^2}{b},\frac{\hat{\sigma}^2}{b},\delta,\frac{m}{b+\mu}\right)+
O\left(\hat{\sigma}\sqrt{\left(1+\frac{\mu}{b}\right)\log(1/\delta)m}\right).
\]
Otherwise, we repeat the ideas of \thmref{thm:minibatch} to get the regret bound.
\end{proof}

\bibliography{dmb_jmlr}

\end{document}